\documentclass{article} %
\usepackage{iclr2025_conference,times}

\usepackage{amsmath,amsfonts,bm}

\def\eqref#1{equation~\ref{#1}}

\def\1{\bm{1}}

\DeclareMathAlphabet{\mathsfit}{\encodingdefault}{\sfdefault}{m}{sl}
\SetMathAlphabet{\mathsfit}{bold}{\encodingdefault}{\sfdefault}{bx}{n}

\usepackage{hyperref}
\usepackage{url}

\usepackage{siunitx}
\usepackage{booktabs}
\usepackage{multirow}
\usepackage{wrapfig}
\usepackage{graphicx}  
\usepackage{caption} 
\usepackage{bbm} 
\usepackage{amsmath}
\usepackage{graphicx}
\usepackage{amsthm}
\usepackage{subcaption}
\usepackage{multicol}
\usepackage{enumitem,kantlipsum}
\usepackage{xcolor}
\usepackage[noend]{algpseudocode}
\usepackage{algorithm}

\usepackage{tikz}
\usepackage{tikz-3dplot}
\usetikzlibrary{patterns}
\usetikzlibrary{math}

\newcommand{\ktdt}[1]{\textcolor{black}{#1}}
\newcommand{\ozm}[1]{\textcolor{black}{#1}}
\newcommand{\teba}[1]{{\color{black} #1}}
\newcommand{\rebuttal}[1]{\textcolor{black}{#1}}

\newcommand{\layer}{Disjunctive Refinement Layer}
\newcommand{\lsymb}{\text{DRL}}
\newcommand{\sample}{\tilde{x}}

\newcommand{\X}{\mathcal{X}}

\newcommand\msmall[1]{\mbox{\scriptsize\ensuremath{#1}}}

\newcommand{\plusplus}{{\rotatebox{90}{${\msmall{\text{+}\mkern-5mu\text{+}}}$}}}

\newcommand{\leftb}{l_i^\Psi}
\newcommand{\rightb}{r_i^\Psi}
\newcommand{\leftpi}[1]{l_i^{#1}}
\newcommand{\rightpi}[1]{r_i^{#1}}
\newcommand{\leftp}{\leftpi{\Pi}}
\newcommand{\rightp}{\rightpi{\Pi}}
\newcommand{\CPres}{\mathit{CPres}}
\newcommand{\Pisi}{\widetilde{\Pi}_{i}}

\newcommand{\phishing}{URL}

\newcommand{\lcld}{LCLD}

\newcommand{\heloc}{Heloc}
\newcommand{\cervical}{CCS}
\newcommand{\house}{House}

\newcommand{\fone}{\text{F1}}
\newcommand{\wfone}{\textsl{w}F1}
\newcommand{\auc}{\text{AUC}}
\newcommand{\mae}{\text{MAE}}
\newcommand{\rmse}{\text{RMSE}}
\newcommand{\cvr}{\text{CVR}}
\newcommand{\cvc}{\text{CVC}}
\newcommand{\scvc}{\text{sCVC}}

\newtheorem{theorem}{Theorem}[section]
\newtheorem{lemma}[theorem]{Lemma}

\newtheorem{corollary}[theorem]{Corollary}

\newtheorem{example}{Example}

\newcommand{\wgan}{WGAN}
\newcommand{\tablegan}{TableGAN}
\newcommand{\ctgan}{CTGAN}
\newcommand{\tvae}{TVAE}
\newcommand{\goggle}{GOGGLE}

\newcommand{\sgd}{SGD}
\newcommand{\adam}{Adam}
\newcommand{\rmsprop}{RMSProp}

\title{Beyond the convexity assumption: \\ Realistic tabular data generation under quantifier-free real linear constraints}

\author{Mihaela C\u{a}t\u{a}lina Stoian\\
University of Oxford\\
\texttt{mihaela.stoian@cs.ox.ac.uk}\\
\And
Eleonora Giunchiglia\\
Imperial College London\\
\texttt{e.giunchiglia@imperial.ac.uk}
}

\iclrfinalcopy %
\begin{document}

\maketitle

\begin{abstract}
Synthetic tabular data generation has traditionally been a challenging problem due to the high complexity of the underlying distributions that characterise this type of data. Despite recent advances in deep generative models (DGMs), existing methods often fail to produce realistic datapoints that are well-aligned with available background knowledge.
In this paper, we address this limitation by introducing Disjunctive Refinement Layer (\lsymb), a novel layer designed
to enforce the alignment of generated data with the background knowledge specified in user-defined constraints.
\lsymb{} is the first method able to automatically make deep learning models inherently compliant with constraints as expressive as quantifier-free linear formulas, which can define non-convex and even disconnected spaces. 
Our experimental analysis shows that \lsymb{} not only guarantees constraint satisfaction but also improves efficacy in downstream tasks. Notably, when applied to DGMs that frequently violate constraints, \lsymb{} eliminates violations entirely. Further, it improves performance metrics by up to 21.4\% in F1-score and 20.9\% in Area Under the ROC Curve, thus demonstrating its practical impact on data generation. %
\end{abstract}

\section{Introduction}

The problem of tabular data generation is a critical area of research, driven by its numerous practical applications across various domains. High-quality synthetic data offers solutions to pressing challenges such as data scarcity~\citep{Choi2017_scarcity}, bias in unbalanced datasets~\citep{breugel2021_fairness}, and the general need for 
privacy protection %
\citep{lee2021invertible}. 
However, due to the varied nature of the data distributions in the tabular domain---which are often multi-modal, and present complex dependencies among features---it is difficult to create models able to 
generate realistic data.  
Indeed, no matter the Deep Generative Model (DGM) used, when synthetic datapoints are tested for alignment with the available background knowledge, they frequently fail such a test. 
Even when considering simple knowledge 
like {\sl``the feature representing the maximum recorded level of hemoglobin should be greater than or equal to the one representing its minimum"}, 
DGMs often generate datapoints violating it.  So far, this problem has only been solved by either rejecting the non-aligned samples, or by adding a layer to the DGM that restricts its output space to coincide with the one defined by linear inequalities~\citep{stoian2024}. However, while the first solution is not feasible in the presence of a high violation rate, the second is only available when the  knowledge can be captured by linear inequalities, which  have very limited~expressivity.

\begin{wrapfigure}{r}{0.29\textwidth}
\centering
\vspace{-1.05cm}
\begin{minipage}{0.1\textwidth}
\centering\captionsetup[subfigure]{justification=centering}
\vspace{0.6cm}\resizebox{0.95\textwidth}{!}{\pgfmathsetmacro{\b}{0.1}
\pgfmathsetmacro{\c}{1-\b}

\definecolor{nicegreen}{rgb}{0.1, 0.6, 0.3} %

\newcommand{\pentagon}[6]{
\filldraw[fill=nicegreen,fill opacity=0.2,draw=black] 
($\c*(#1)+\b*(#2)$)--
($\c*(#1)+\b*(#3)$)--
($\c*(#1)+\b*(#4)$)--
($\c*(#1)+\b*(#5)$)--
($\c*(#1)+\b*(#6)$)--cycle;
}
\newcommand{\hexagon}[4]{
\draw[#4,fill=nicegreen,fill opacity=0.1]
($\c*(#1)+\b*(#2)$)--
($\b*(#1)+\c*(#2)$)--
($\c*(#2)+\b*(#3)$)--
($\b*(#2)+\c*(#3)$)--
($\c*(#3)+\b*(#1)$)--
($\b*(#3)+\c*(#1)$)--cycle;
}

    \tdplotsetmaincoords{65}{100}
    \begin{tikzpicture}[tdplot_main_coords,scale=1,line join=round]
    \pgfmathsetmacro\a{2}
    \pgfmathsetmacro{\phi}{\a*(1+sqrt(5))/2}
    \path 
    coordinate(A) at (0,\phi,\a)
    coordinate(B) at (0,\phi,-\a)
    coordinate(C) at (0,-\phi,\a)
    coordinate(D) at (0,-\phi,-\a)
    coordinate(E) at (\a,0,\phi)
    coordinate(F) at (\a,0,-\phi)
    coordinate(G) at (-\a,0,\phi)
    coordinate(H) at (-\a,0,-\phi)
    coordinate(I) at (\phi,\a,0)
    coordinate(J) at (\phi,-\a,0)
    coordinate(K) at (-\phi,\a,0)
    coordinate(L) at (-\phi,-\a,0);
    \hexagon{G}{C}{E}{}
    \hexagon{G}{E}{A}{}
    \hexagon{G}{A}{K}{dotted}
    \hexagon{G}{K}{L}{dotted}
    \hexagon{G}{L}{C}{dotted}
    \hexagon{F}{I}{J}{}
    \hexagon{F}{J}{D}{}
    \hexagon{F}{D}{H}{dotted}
    \hexagon{F}{H}{B}{dotted}
    \hexagon{F}{B}{I}{}
    \hexagon{C}{J}{E}{}
    \hexagon{J}{E}{I}{}
    \hexagon{E}{I}{A}{}
    \hexagon{I}{A}{B}{}
    \hexagon{A}{B}{K}{dotted}
    \hexagon{B}{K}{H}{dotted}
    \hexagon{K}{H}{L}{dotted}
    \hexagon{H}{L}{D}{dotted}
    \hexagon{L}{D}{C}{dotted}
    \hexagon{D}{C}{J}{}
    \pentagon{A}{B}{K}{G}{E}{I}
    \pentagon{B}{A}{I}{F}{K}{H}
    \pentagon{C}{D}{J}{E}{G}{L}
    \pentagon{D}{C}{L}{H}{F}{J}
    \pentagon{E}{A}{G}{C}{J}{I}
    \pentagon{F}{D}{H}{B}{I}{J}
    \pentagon{G}{A}{K}{L}{C}{E}
    \pentagon{H}{D}{F}{B}{K}{L}
    \pentagon{I}{A}{E}{J}{F}{B}
    \pentagon{J}{D}{F}{I}{E}{C}
    \pentagon{K}{A}{B}{H}{L}{G}
    \pentagon{L}{D}{H}{K}{G}{C}

    \path[dashed, thick]    (B) -- (H) -- (F) 
    (D) -- (L) -- (H) --cycle 
    (K) -- (L) -- (H) --cycle
    (K) -- (L) -- (G) --cycle
    (C) -- (L) (B)--(K) (A)--(K)
    ;

        \path[ultra thick]
        (A) -- (I) -- (B) --cycle 
        (F) -- (I) -- (B) --cycle 
        (F) -- (I) -- (J) --cycle
        (F) -- (D) -- (J) --cycle
        (C) -- (D) -- (J) --cycle
        (C) -- (E) -- (J) --cycle
        (I) -- (E) -- (J) --cycle
        (I) -- (E) -- (A) --cycle
        (G) -- (E) -- (A) --cycle
        (G) -- (E) -- (C) --cycle
        ; 

\end{tikzpicture}}
\vspace{-0.12cm}
\subcaption{}
\end{minipage}
\hfill
\begin{minipage}{0.18\textwidth}
\centering\captionsetup[subfigure]{justification=centering}
\resizebox{0.95\textwidth}{!}{    \begin{tikzpicture}
       \node (a) at (0,0)
         {
            \input{tikz/non_convex.tikz}
         };
        \node (b) at (a.south) [anchor=north,yshift=-0.2cm, xshift=-3cm]
         {
            \input{tikz/non_convex_1.tikz}
         };
        \node (c) at (a.south) [anchor=north, xshift=5cm]
         {
            \input{tikz/non_convex_2.tikz}
         };
    \end{tikzpicture}}
\vspace{-0.1cm}
\subcaption{}
\end{minipage}
\vspace{-0.3cm}
\caption{Example of spaces defined by (a) a set of linear inequalities and (b) a set of  QFLRA formulas.\vspace{-0.6cm}}\label{fig:spaces_examples}
\end{wrapfigure} 
In this paper, we propose a novel layer---called \layer{} ($\lsymb$)---able to constrain any DGM output space according to background knowledge expressed as Quantifier-Free Linear Real Arithmetic (QFLRA) formulas. QFLRA formulas 
can capture any relationship over the features that can be represented as a combination of conjunctions, disjunctions and negations of linear inequalities. Thanks to their expressivity, QFLRA formulas can define spaces that are not only non-convex but can also be disconnected.
On the contrary, linear inequalities can only capture convex output spaces.
See Figure~\ref{fig:spaces_examples} for an example of spaces defined by linear inequalities and QFLRA formulas.
While linear inequalities establish a single lower and upper bound (if existent) for each feature, QFLRA formulas define multiple intervals where the background knowledge holds, each with its own boundaries. This significantly increases the complexity of the problem, as compiling knowledge into \lsymb{} not only requires keeping track of these intervals but also deriving the intricate hidden interactions among variables. 
\begin{example}\label{ex:intro}
The knowledge: ``The value of $x_5$ should be always at least $x_1$, and if greater than  $x_2$ then it should also be at least equal to $x_3$. In any case,  $x_5$ should never be greater than $x_4$'', which cannot be expressed  by a set of linear inequalities, corresponds to the QFLRA formula:
\begin{equation}
    \label{ex:qfrla}
(x_5 \ge x_1) \land    ((x_5 > x_2) \to (x_5 \ge x_3)) \land  (x_5 \le x_4).
\end{equation}
 Moreover, this formula entails other hidden relations among  the variables such as, e.g., $\neg(x_1 > x_4)$.%
\end{example}
To derive such additional hidden relations, we developed a novel variable elimination method which generalises the analogous procedure for systems of linear inequalities based on the Fourier-Motzkin result (see, e.g., \citep{dechter1999}).
Once compiled, by definition, \lsymb{} (i) guarantees the satisfaction of the constraints, (ii) can be seamlessly added to the topology of any neural model, (iii) allows the backpropagation of the gradients at training time, (iv) performs all the computations in a single forward pass (i.e., no cycles), and (v) given a sample generated by a DGM, it returns a new one that is optimal with respect to the original (intuitively, which minimally differs from the original sample while taking into account the user preferences on which features should be changed first). 
Our experimental analysis also shows that adding \lsymb{} to DGMs  improves their machine learning efficacy~\citep{xu2019_CTGAN} on a range of different scenarios. This is the most widely used performance measure for evaluating the quality of synthetic data, as it assesses how useful the generated data is for downstream tasks. In particular, we considered five DGMs,  
added \lsymb{} into their topology and got improvements for all datasets of up to
21.4\%, 20.5\%, and 20.9\%
in terms of F1-score, weighted F1-score, and Area Under the ROC Curve, respectively.
Finally, our experiments demonstrate a strong need for a method like ours.
Indeed, DGMs generate synthetic datapoints violating the background knowledge more often than expected. In 13 out of 25 scenarios, the DGMs produced datasets with over 50\% datapoints violating the constraints, and in five cases this reached 100\%. %

\paragraph{Main contributions:} (i) We propose the first-ever layer that can be integrated into any DGM to enforce background knowledge expressed as QFLRA formulas.
This required generalising the Fourier-Motzkin variable elimination procedure in order to handle disjunctions of  linear inequalities.
(ii) We show experimentally how integrating our layer in DGMs improves their machine learning efficacy, even when the constraints define a possibly non-convex and disconnected space.

\section{Problem Definition and Notation}

{\sl Constrained generative modelling} is defined as the problem of learning the parameters $\theta$ of a generative model, given an unknown distribution $p_X$ over $X \in \mathbb{R}^D$, a training dataset $\mathcal{D}$ consisting of $N$ i.i.d. samples drawn from $p_X$, and formally expressed background knowledge about the problem---stating which samples are admissible and which are not---such that (i) the model distribution $p_{\theta}$ approximates $p_X$, and (ii) the sample space of $p_{\theta}$ is aligned with what is stated in the background knowledge.
As described in the introduction, so far this problem has only been solved by either rejecting the non-aligned samples or by including a layer into the DGM that restricts its output space to coincide with the one defined by the linear inequalities~\citep{stoian2024}.

In this paper, we allow for background knowledge expressed as a set of formulas, each being a disjunction of linear inequalities. 
This enables us to capture any relationship among features which can be represented as Quantifier-Free Linear Real Arithmetic (QFLRA) formulas. Indeed,  through syntactic manipulation using De Morgan's laws and the mathematical properties of linear inequalities, any knowledge formulated as a combination of conjunctions, disjunctions, and negations of linear inequalities can be rewritten as a set of disjunctions over linear inequalities. 
Formally, we consider a set $\Pi$ of constraints, where a {\sl constraint} is a  disjunction of $n_\Psi \in \mathbb{N}$ linear inequalities of the form:
\begin{equation}
\label{eq:constraint}
\Psi = \Phi_1 \vee \Phi_2 \vee \dots \vee \Phi_{n_\Psi},
\end{equation}
 where each $\Phi_i$ is a linear inequality over the set of variables $\mathcal{X} = \{x_k \mid k = 1, \dots, D\}$,  each variable uniquely corresponding to a feature in the dataset. We assume each linear inequality has~form:
\begin{equation}
\label{eq:inequality}
\sum_k w_k x_k + b  \ge 0,
\end{equation}
with $w_k \in \mathbb{R}$, $b \in \mathbb{R}$, and $x_k$ ranging over $\mathbb{R}$. When $w_i \neq 0$, we say that $x_i$ {\sl occurs} in (\ref{eq:inequality}), and that it occurs {\sl positively} if $w_i > 0$ and {\sl negatively} otherwise.

For an easy formulation of the problem and its solution, given two linear expressions $\varphi = \sum_k w_k x_k + b$ and $\varphi' = \sum_k w'_k x_k + b'$, we write, e.g., $(\varphi + \varphi')$ for the linear expression $\sum_k (w_k+w'_k) x_k + (b+b')$, and similarly for $(\varphi - \varphi')$ and $\varphi/w$ if $w \in \mathbb{R}\setminus\{0\}$. %
We will also express the linear inequality~(\ref{eq:inequality}) as
$w_ix_i + \varphi$, by this implicitly assuming $w_i \neq 0$ and that $x_i$ does not occur in $\varphi$, i.e., that $\varphi = \sum_{k \not = i} w_k x_k + b$.
Finally,  we also write $\varphi \ge \varphi'$ (resp. $\varphi \le \varphi'$) as abbreviations for $\varphi - \varphi' \ge 0$ (resp. $\varphi' - \varphi \ge 0$).

Given a DGM with distribution $p_{\theta}$, a {\sl sample} $\sample \sim p_{\theta}$ is an assignment to the variables in $\X$, and $\sample_k$ indicates the value assigned by $\sample$ to the variable $x_k$. 
We say that a sample $\sample$ {\sl satisfies} 
\begin{itemize}
\item the linear inequality (\ref{eq:inequality}) if $\sum_k w_k\sample_k + b \ge 0$, %
\item the constraint $\Psi$ with form (\ref{eq:constraint}) if 
$\Phi_i$ is satisfied by $\sample$ for some $i = 1, \ldots, n_{\Psi}$, and 
\item a set $\Pi$ of constraints if $\sample$ satisfies all the constraints in $\Pi$.
\end{itemize}
Further, we associate to each linear inequality $\Phi$, (resp. constraint $\Psi$, resp. set of constraints $\Pi$)  the set  $\Omega(\Phi)$ (resp. $\Omega(\Psi)$, resp. $\Omega(\Pi)$) of  the points in $\mathbb{R}^D$ that satisfy $\Phi$ (resp. $\Psi$, resp. $\Pi$). Clearly, 
$\Omega(\Phi)$,  $\Omega(\Psi)$ and $\Omega(\Pi)$ define a subspace of $\mathbb{R}^D$, and have the following properties:
\begin{enumerate}
    \item $\Omega(\Phi)$ is non-empty and convex,
    \item $\Omega(\Psi)$ is non-empty but may be non-convex and also disconnected, and
    \item $\Omega(\Pi)$  may be empty, non-convex and also disconnected,
    \end{enumerate}
all the above assuming some variable occurs in $\Phi$, $\Psi$ and $\Pi$.
A linear inequality $\Phi$ (resp. a constraint $\Psi$, resp. a set of constraints $\Pi$) is {\sl violated} by a sample $\sample$ if $\sample$ does not belong to the corresponding set $\Omega(\Phi)$ (resp. $\Omega(\Psi)$, resp. $\Omega(\Pi)$).
A linear inequality $\Phi$ (resp. a constraint $\Psi$, resp. a set of constraints $\Pi$) is {\sl satisfiable} if the corresponding set $\Omega(\Phi)$ (resp. $\Omega(\Psi)$, resp. $\Omega(\Pi)$) is not empty.
Notice that, for the sake of simplicity, we do not consider strict  inequalities (i.e., inequalities with $>$). From a theoretical perspective, the entire theory can be easily generalised to consider them. From a practical perspective, for any computing system of choice,  we can simply rewrite each strict inequality of the following form:
$\sum_k w_k x_k + b  > 0 \text{ as } \sum_k w_k x_k + b - \epsilon \ge 0,
$
where
$\epsilon > 0$ denotes the desired precision of the representation, taking into account the limitations of floating-point accuracy.  Finally, we represent each constraint $\Psi \in \Pi$ of form (\ref{eq:constraint}) also as the set $\{\Phi_1,\Phi_2,\ldots,\Phi_{n_\Psi}\}$. With this notation, $\Pi$ is a set of sets of linear inequalities. Hence, the linear inequalities in a set should be interpreted as disjunctively defining a constraint in $\Pi$, while the constraints are to be interpreted as conjunctively defining $\Pi$.

\section{\layer}
Given a finite set of constraints $\Pi$ and a DGM, we show how to build a layer with all the desired properties stated in the introduction. \ktdt{In Appendix~\ref{app:visualizations} we visualize how to add our \lsymb{} to each of the DGMs considered in the experimental analysis.} Before illustrating the general case, in the following subsection we assume $\Pi$ is a finite set of constraints in a single variable $x_i$.

\subsection{Single Variable Case} 

Each constraint $\Psi$ of form (\ref{eq:constraint}) 
defines a single {\sl left boundary} $\leftb$ and a single {\sl right boundary} $\rightb$ for the variable $x_i$:
\footnote{We use the ``left'' and ``right'' terminology because in a non-vacuous constraint we have $\leftb < \rightb$. We assume the function $\min(\mathcal{S})$ over a finite set $\mathcal{S}$ of values in $\mathbb{R}$ to be defined as $\min(\emptyset) = +\infty$, and $\min(\{v\}\cup \mathcal{S}') = v$ if $v \leq \min(\mathcal{S}')$ and $\min(\mathcal{S}')$ otherwise. Analogously for the function $\max(\mathcal{S})$.}
\begin{equation}\label{eq:left_right_bounds}  
    \leftb = \max_{(w_ix_i +{\varphi} \ge 0) \in \Psi: w_i < 0} \Big(-\frac{\varphi}{w_i}\Big), \qquad \qquad \rightb = \min_{(w_ix_i + {\varphi} \ge 0) \in \Psi : w_i > 0} \Big(-\frac{\varphi}{w_i}\Big).
\end{equation}
Assuming $\Psi$ contains a linear inequality in which $w_i \neq 0$, a sample $\sample$ satisfies $\Psi$ if and only if either $\sample_i \le \leftb$  or
$\sample_i \ge \rightb$, as represented in Figure~\ref{fig:single_constr}. As the Figure clearly shows, $\Omega(\Psi)$ is already  non-convex whenever
$\leftb \neq -\infty$, $\rightb \neq +\infty$ and $\leftb < \rightb$. 
When considering a set $\Pi$ with multiple
\begin{wrapfigure}{r}{0.28\textwidth}
    \vspace{-0.08cm}
   \includegraphics[trim={16cm 11cm 27cm 19.5cm},clip,width=\linewidth]{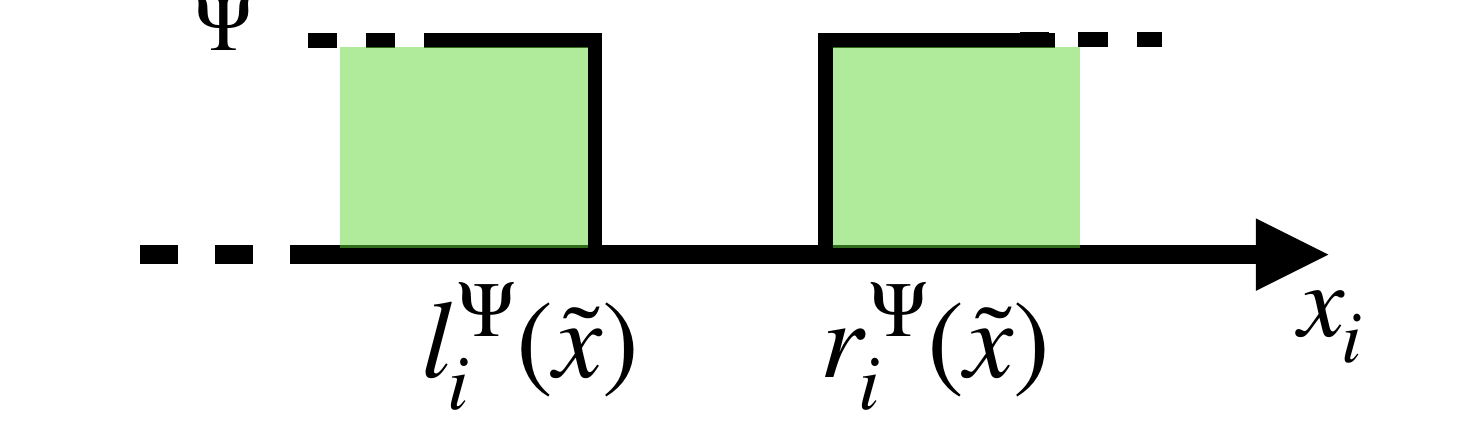}
   \setlength{\abovecaptionskip}{-6pt}
   \caption{Visualisation of left and right boundaries defined by constraint $\Psi$. The green regions correspond to the values of $x_i \in \Omega(\Psi)$.}
   \vspace{-0.6cm}
   \label{fig:single_constr}
\end{wrapfigure}
  constraints in $x_i$, we may arrive at a set $\Omega(\Pi)$ that is the union of up to $|\Pi|+1$ disjoint intervals. %

In general, computing the intervals requires finding the satisfying boundaries and then ordering them.
Luckily, given a sample $\sample$ violating some constraint in $\Pi$,
we are only interested in setting $\lsymb(\sample)_i$ equal to the  bound that satisfies $\Pi$ and that is at minimal Euclidean distance from $\sample_i$. To this end, we first define the {\sl closest satisfying left and right boundary for $\sample_i$}  as: 
\begin{equation}
    \leftp(\sample) = \max_{\Psi \in \Pi}(\{\leftb :  \sample_i > \leftb, \leftb \in \Omega(\Pi)\}), \quad
\rightp(\sample) = \min_{\Psi \in \Pi}(\{\rightb :  \sample_i < \rightb, \rightb \in \Omega(\Pi)\}),
\end{equation}
respectively.
Then, for $k\neq i$, $\lsymb(\sample)_k = \sample_k$ and %
\begin{equation}
    \lsymb(\sample)_i = 
    \begin{cases}
        \sample_i &\qquad \text{if } \sample \in \Omega(\Pi), \\
        \leftp(\sample) &\qquad \text{if } \sample \not\in \Omega(\Pi) \text{ and } |\sample_i - \leftp(\sample)| < |\sample_i - \rightp(\sample)|, \\
        \rightp(\sample) &\qquad \text{otherwise}.
    \end{cases}
\end{equation}
By construction,
$\lsymb(\sample)$ satisfies the constraints in $\Pi$ and is {\sl optimal w.r.t. $\sample$}: there does not exist a sample satisfying $\Pi$ with smaller  Euclidean distance from $\sample$.

\begin{lemma}\label{lemma:single_var}
    Let $\Pi$ be a finite and satisfiable set of constraints in a single variable $x_i$. For every sample $\sample$, $\lsymb(\sample)$ satisfies $\Pi$ and is optimal w.r.t. $\sample$.
\end{lemma}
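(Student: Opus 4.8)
The plan is to exploit the fact that every constraint in $\Pi$ involves only $x_i$, so the feasible set is a ``cylinder'' in the $x_i$-direction. Concretely, set $S = \{v \in \mathbb{R} : \text{the point } \sample \text{ with its } i\text{-th coordinate replaced by } v \text{ lies in } \Omega(\Pi)\}$; then a sample $\sample'$ satisfies $\Pi$ iff $\sample'_i \in S$, and $S$ does not depend on the other coordinates. First I would normalise $\Pi$: a constraint all of whose disjuncts have $w_i = 0$ is either a tautology (which can be dropped) or unsatisfiable (impossible, since $\Pi$ is satisfiable), so without loss of generality each $\Psi \in \Pi$ contains an inequality with $w_i \neq 0$. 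By the characterisation already established in the text, $\sample' \in \Omega(\Psi)$ iff $\sample'_i \le \leftb$ or $\sample'_i \ge \rightb$, so $S = \bigcap_{\Psi \in \Pi}\big((-\infty,\leftb]\cup[\rightb,+\infty)\big) = \mathbb{R}\setminus\bigcup_{\Psi\in\Pi}(\leftb,\rightb)$, a \emph{closed} set that is a union of at most $|\Pi|+1$ closed intervals and rays, and non-empty because $\Pi$ is satisfiable.

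If $\sample \in \Omega(\Pi)$ then $\lsymb(\sample) = \sample$ and both claims are immediate, so assume $\sample \notin \Omega(\Pi)$, i.e. $\sample_i \notin S$. Since $S$ is closed, $\sample_i$ lies in a unique maximal open gap $(a,b)$ of $S$, with $a \in \{-\infty\}\cup\mathbb{R}$ and $b \in \mathbb{R}\cup\{+\infty\}$, and not both infinite (otherwise $S = \emptyset$). The key intermediate claim is that $\leftp(\sample) = a$ and $\rightp(\sample) = b$, with the value $\pm\infty$ produced exactly when the corresponding $\max$/$\min$ ranges over the empty set, in accordance with the footnote convention. For finite $a$: closedness of $S$ gives $a \in S$; a short argument using the finiteness of $\Pi$ (pigeonhole along a sequence $v \to a^+$ of points of the gap, each covered by some $(\leftb,\rightb)$) produces a fixed $\Psi_1$ with $\leftpi{\Psi_1} \le a \le \rightpi{\Psi_1}$, and then $a \in S$ forces $\leftpi{\Psi_1} = a$; so $a$ is one of the candidates in the $\max$ defining $\leftp(\sample)$, while $(a,\sample_i]\cap S = \emptyset$ forces every such candidate to be $\le a$, giving $\leftp(\sample)=a$. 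The case $a = -\infty$ gives an empty candidate set and $\leftp(\sample) = -\infty = a$; the statements for $\rightp(\sample)$ and $b$ are symmetric.

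Feasibility then follows: in the ``otherwise'' branch of the definition, $\lsymb(\sample)_i \in \{\leftp(\sample),\rightp(\sample)\} = \{a,b\}$, and the case split selects a \emph{finite} one (an infinite value strictly loses the distance comparison to the finite value, which exists by the paragraph above); since finite $a,b$ lie in $S$ and $\lsymb(\sample)_k = \sample_k$ for $k \neq i$, we get $\lsymb(\sample) \in \Omega(\Pi)$. For optimality, let $\sample'$ be any sample satisfying $\Pi$; then $\sample'_i \in S$, so $\|\sample' - \sample\| \ge |\sample'_i - \sample_i| \ge \mathrm{dist}(\sample_i, S)$, and because $\sample_i$ lies in the gap $(a,b)$ we have $\mathrm{dist}(\sample_i,S) = \min(\sample_i - a,\; b - \sample_i)$, attained exactly at whichever of $a,b$ is finite and closer to $\sample_i$ — which is precisely the value $\lsymb(\sample)_i$ selected by the case split. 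Hence $\|\lsymb(\sample) - \sample\| = |\lsymb(\sample)_i - \sample_i| = \mathrm{dist}(\sample_i,S) \le \|\sample' - \sample\|$, proving optimality.

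The main obstacle is the intermediate claim $\leftp(\sample)=a$, $\rightp(\sample)=b$: one must show that the quantities $\leftp(\sample),\rightp(\sample)$, defined as extrema over boundaries contributed by individual constraints, coincide with the endpoints of the single maximal gap of $S$ containing $\sample_i$, and one must track the $\pm\infty$ conventions and the reduction of trivial constraints carefully. Once this is in place the remaining geometry — distance from a point to a closed subset of the real line — is elementary.
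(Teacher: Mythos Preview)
Your proof is correct and follows essentially the same approach as the paper's: both arguments reduce to showing that the feasible set in the $x_i$-direction is closed, that $\sample_i$ lies in a gap whose (finite) endpoints coincide with $\leftp(\sample)$ and $\rightp(\sample)$, and that the case split in the definition of $\lsymb$ selects the nearer endpoint. Your version is more explicit---you isolate the intermediate claim $\leftp(\sample)=a$, $\rightp(\sample)=b$ and track the $\pm\infty$ conventions and trivial constraints carefully---whereas the paper compresses the same reasoning by starting from the minimal distance $d$ and arguing directly that $\sample_i\pm d$ is some $\leftb$ or $\rightb$; but the underlying ideas are identical.
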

The proof of the Lemma can be found in Appendix~\ref{proof:lemma_single_var}.

\begin{minipage}{0.65\textwidth}
\begin{example}\label{ex:one_dim}
Let 
 $\Pi$ be the set of constraints $\{\Psi_1, \Psi_2, \Psi_3\}$ over the unique variable $x_5$, with
$\Psi_1,\Psi_2$ and $\Psi_3$  as shown 
in Figure~\ref{fig:example_bounds}. 
Then, $l_5^{\Psi_1} = -\infty, l_5^{\Psi_2} = b,  l_5^{\Psi_3} = d, 
r_5^{\Psi_1} = a,  r_5^{\Psi_2} = c,  r_5^{\Psi_3} = +\infty$. Depending on the value of $\sample_5$, we get correspondingly different values for
$\lsymb(\sample)_5$. In particular,
\end{example}
\vspace{-0.1cm}
\end{minipage}
\begin{minipage}{0.34\textwidth}
\vspace{-0.12cm}
    \includegraphics[trim={16cm 10.5cm 16cm 15cm},clip,width=\linewidth]{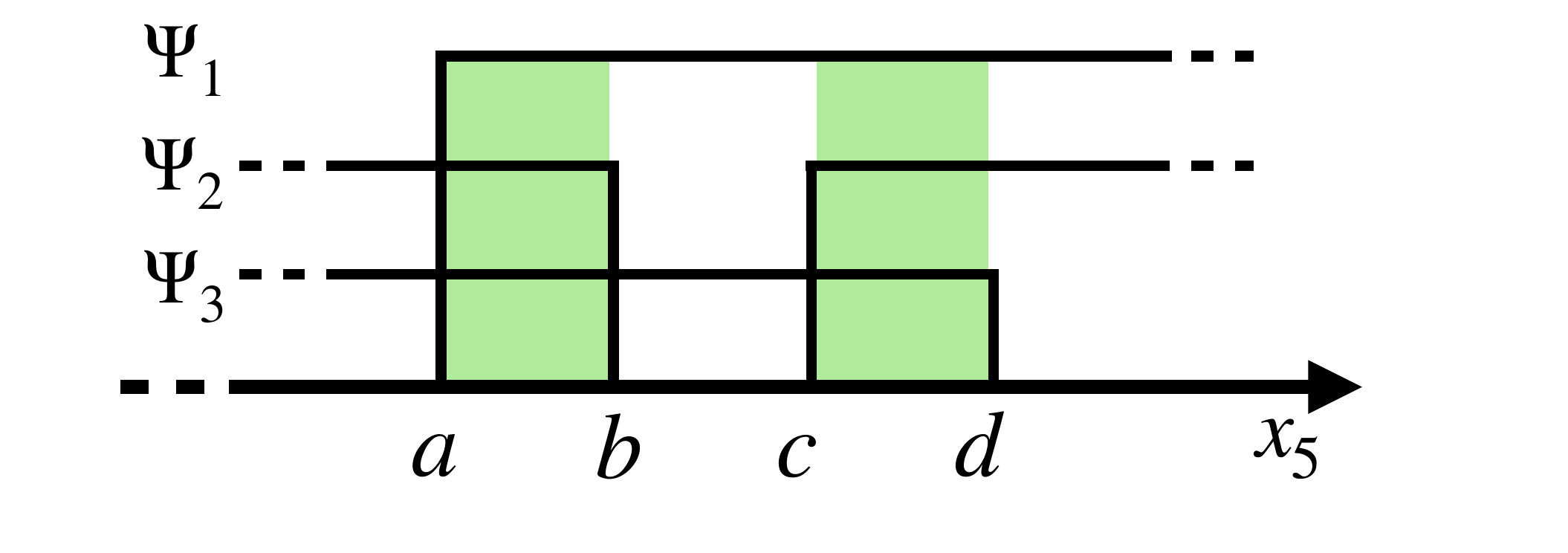}
    \setlength{\abovecaptionskip}{-14pt}
   \captionof{figure}{Constraints for Ex.~\ref{ex:one_dim}. %
   }
   \vspace{-0.1cm}
   \label{fig:example_bounds}
\end{minipage}
\begin{enumerate}[leftmargin=*]
\begin{multicols}{2}
    \item \textit{if $\sample_5 < a$ then $\lsymb(\sample)_5 = a$,}
    \item \textit{if $a \le \sample_5 \le b$ then $\lsymb(\sample)_5 = \sample_5$,}
    \item \textit{if $b < \sample_5 < (b+c)/2$ then $\lsymb(\sample)_5 = b$,}
    \item \textit{if $(b+c)/2 \le \sample_5 < c$ then $\lsymb(\sample)_5 = c$,}
    \item \textit{if $c \le \sample_5 \le d$ then $\lsymb(\sample)_5 = \sample_5$,}
    \item \textit{if $\sample_5 > d$ then $\lsymb(\sample)_5 = d$.}
\end{multicols}
\end{enumerate}
\vspace{-0.3cm}
\textit{Independently from the value of $\sample_5$, $\lsymb(\sample)_5$ satisfies the constraints and is optimal w.r.t. $\lsymb(\sample)_5$.}
\subsection{General Case}
Among the desiderata for our $\lsymb$, we have that all the necessary computations need to be done in a single forward pass. %
To this end, we 
consider a {\sl variable ordering} $x_1;x_2;\ldots;x_D$ corresponding to the order of computation of the features. 
The ordering can be arbitrarily selected or, more appropriately, may reflect the user preferences on which features should be changed first when the sample violates the constraints. Indeed, the value of each feature $x_i$ will be computed taking into account the values of the features $x_1,\ldots,x_{i-1}$, the latter considered immutable. 
To make this possible, when building the layer, we need to ensure that the chosen value for the variables  $x_j$, with $j < i$, guarantees the existence of a value for $x_i$ satisfying the constraints. 
Starting from $\Pi_D = \Pi$ and $i=D$, this amounts to deriving a finite set $\Pi_{i-1}$ of constraints in the variables 
$x_1, x_2, \ldots, x_{i-1}$ whose conjunction is logically equivalent to $\exists x_i \bigwedge_{\Psi \in \Pi_i} \Psi$. This entails that for every value of $x_1, x_2, \ldots, x_{i-1}$ satisfying $\Pi_{i-1}$ there must exist a value for $x_i$ satisfying $\Pi_i$, or alternatively, that each assignment to $x_1, x_2, \ldots, x_{i-1}$ and satisfying $\Pi_{i-1}$ can be extended to satisfy also $\Pi_i$.

In order to define such set $\Pi_{i-1}$,  given two constraints 
$
\Psi = (\bigvee_{k=1}^n(w_k x_i + \varphi_k \ge 0) \vee \Phi)$ and
$\Psi' = (\bigvee_{j=1}^m(w'_j x_i + \varphi'_j \ge 0) \vee \Phi')
$,
with $w'_1,\ldots,w'_m < 0 < w_1,\ldots,w_n$ and $m,n\ge 1$, we define the {\sl cutting planes (CP) resolution rule between $\Psi$ and $\Psi'$ on $x_i$} to be:
\begin{equation}\label{eq:res_rule}
    \frac{\bigvee_{j=1}^m(w'_j x_i + \varphi'_j \ge 0) \vee \Phi' 
    \qquad 
    \bigvee_{k=1}^n(w_k x_i + \varphi_k \ge 0) \vee \Phi}
{\bigvee_{j=1}^m\bigvee_{k=1}^n({\varphi_k}/{w_k} - {\varphi'_j}/{w'_j} \ge 0) \vee \Phi \vee \Phi'}.
\end{equation}
In the above rule, $\Psi$ and $\Psi'$ are the {\sl premises}, and the formula below the line is the {\sl conclusion} denoted with
$\CPres_i(\Psi,\Psi')$. 
This rule, which can be derived from the standard propositional and CP rules defined, e.g., in \citep{DBLP:journals/jsyml/Krajicek98}, is sound for any possible $\Phi$ and $\Phi'$. Despite this, 
we assume that $\Phi'$ (resp. $\Phi$) does not contain negative (resp. positive) occurrences of $x_i$. As we will see, it is possible to impose much stronger conditions (defined later) on the applicability of the rule, still enabling the derivation of a set of constraints $\Pi_{i-1}$ with the desired properties.

\begin{lemma}\label{lemma:soundness}
    The CP resolution rule is sound: the premises entail the conclusion of the rule.
\end{lemma}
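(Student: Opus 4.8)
The plan is to prove soundness semantically, i.e.\ to show that every point of $\mathbb{R}^D$ satisfying both premises of the rule also satisfies its conclusion. Fix such a point $\sample$. For a linear expression $\varphi = \sum_k w_k x_k + b$, write $\hat\varphi = \sum_k w_k \sample_k + b$ for its value at $\sample$, so that $\sample$ satisfies the inequality $\varphi \ge 0$ exactly when $\hat\varphi \ge 0$, and $\widehat{w x_i + \varphi} = w\,\sample_i + \hat\varphi$ whenever $x_i$ does not occur in $\varphi$ (which is the situation in the rule).

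First I would dispose of the easy case: if $\sample$ satisfies $\Phi$ or $\Phi'$, then since $\Phi \vee \Phi'$ appears as a disjunct of the conclusion $\CPres_i(\Psi,\Psi')$, the conclusion holds and we are done. Note this case places no requirement on $\Phi$ or $\Phi'$, which is precisely why the soundness of the rule does not depend on the later syntactic restrictions on where $x_i$ occurs in $\Phi, \Phi'$.

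Next, suppose $\sample$ falsifies both $\Phi$ and $\Phi'$. Since $\sample$ satisfies the premise $\Psi = \bigvee_{k=1}^{n}(w_k x_i + \varphi_k \ge 0) \vee \Phi$ but not the disjunct $\Phi$, there is some $k$ with $w_k \sample_i + \hat\varphi_k \ge 0$; because $w_k > 0$ this rearranges to $\sample_i \ge -\hat\varphi_k / w_k$. Symmetrically, from $\Psi'$ there is some $j$ with $w'_j \sample_i + \hat\varphi'_j \ge 0$, and dividing by $w'_j < 0$ reverses the inequality, giving $\sample_i \le -\hat\varphi'_j / w'_j$. Chaining these two bounds yields $-\hat\varphi_k / w_k \le -\hat\varphi'_j / w'_j$, hence $\hat\varphi_k / w_k - \hat\varphi'_j / w'_j \ge 0$, which is exactly the value at $\sample$ of the disjunct $(\varphi_k / w_k - \varphi'_j / w'_j \ge 0)$ occurring in the conclusion for this particular pair $(j,k)$. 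So $\sample$ satisfies the conclusion in this case too, completing the argument.

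There is no real obstacle here: the proof is the disjunction-aware analogue of a single Fourier--Motzkin elimination step, and the shape of the rule --- a big disjunction over pairs $(j,k)$ together with the extra disjunct $\Phi \vee \Phi'$ --- mirrors exactly the case analysis above (``some positive-coefficient disjunct of $\Psi$ fires and some negative-coefficient disjunct of $\Psi'$ fires, unless one of $\Phi, \Phi'$ already holds''). The only points needing care are bookkeeping: the quotients $\varphi_k/w_k$ and $\varphi'_j/w'_j$ are well defined because $w_k, w'_j \neq 0$ by the way the premises were split, and one must remember to flip the inequality when dividing through by the negative $w'_j$.
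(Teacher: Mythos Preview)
Your argument is correct and follows essentially the same approach as the paper's proof: both handle the trivial case where $\Phi$ or $\Phi'$ is already satisfied, then in the remaining case extract from each premise a satisfied disjunct containing $x_i$, divide through (flipping the inequality for the negative coefficient), and chain the two bounds on $\sample_i$ to obtain a satisfied disjunct of the conclusion. The paper phrases the last step via $\min$/$\max$ rather than directly picking witnesses $j,k$, but this is a presentational difference only.
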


\begin{example}[Example~\ref{ex:intro}, cont'd]\label{ex:valid_intervals}  
The QFLRA formula in the introduction translates into the set of constraints:
$
\Pi = \{\Psi_1,\Psi_2,\Psi_3\}
$
with $\Psi_1 = (x_5 \ge x_1), \Psi_2 =    ((x_5 \le x_2) \lor (x_5 \ge x_3))$ and $\Psi_3 =  (x_5 \le x_4)$.
By applying the CP resolution rule, we can obtain a new set of constraints entailed by $\Pi$ and logically equivalent to $\exists x_5 \bigwedge_{\Psi \in \Pi} \Psi$.
    $$
        \CPres_5(\Psi_1, \Psi_2) = x_1 \le x_2 \vee x_5 \ge x_3 \qquad 
        \CPres_5(\Psi_1, \Psi_3) = x_1 \le x_4
    $$
    $$
        \CPres_5(\Psi_3, \CPres_5(\Psi_1, \Psi_2)) = x_1 \le x_2 \vee x_3 \le x_4.
    $$
    As derived from the multiple application of the CP resolution rule, the above set of constraints admits a solution for $x_5$ if and only if $(x_1 \le x_4) \wedge (x_1 \le x_2 \vee x_3 \le x_4)$.
\end{example}
The proof of Lemma~\ref{lemma:soundness} is in Appendix~\ref{app:proof_soundness}. In the example, there is only one constraint with both positive and negative occurrences of $x_i$ and the CP resolution of any two distinct constraints always leads to a conclusion with either only positive or negative occurrences of $x_i$. However, in  general, the CP resolution of two constraints $\Psi$ and $\Psi'$ will lead to a new constraint $\CPres_i(\Psi,\Psi')$ which might contain both positive and negative occurrences of $x_i$. This new constraint can be the premise of other CP resolutions which can produce new constraints and the process can iterate. Nevertheless, our goal is to  derive the constraints in the variables $x_1,\ldots,x_{i-1}$ whose satisfying assignments can be extended to satisfy also the constraints with $x_i$. The standard solution to make all the possible CP resolutions on $x_i$ while considering also the CP resolvent of the already done resolution may turn out to be too computationally expensive.
Luckily, we can further restrict to CP resolutions between two constraints $\Psi$ and $\Psi'$ in which  $\Psi$ does not contain negative occurrences of $x_i$. To this end, let
 \begin{enumerate}
    \item $\Pi^+_i$ (resp. $\Pi^-_i$) to be the set of constraints in $\Pi_i$ with (resp. without) positive occurrences of $x_i$ and without (resp. with) negative occurrences of $x_i$ ;
    \item $\Pi^\pm_i$ to be the set of constraints in $\Pi_i$ with both positive and negative occurrences of $x_i$; 
    \item $\Pi^{\plusplus}_i$ to be the set of constraints obtained by the recursive application of the CP-resolution between one constraint without negative occurrences of $x_i$ and one constraint in $\Pi^\pm_i$:
    $$
    \Pi^{\plusplus}_i = \bigcup_{k=0}^{|\Pi_i^{\pm}|}\Pi^{k}_i \qquad \text{ with } \Pi^{k+1}_i= \{\CPres_i(\Psi,\Psi') \mid \Psi \in \Pi^{k}_i, \Psi' \in \Pi^\pm_i\}, 
    $$
    and  $\Pi_i^0 = \Pi^+_i$. Every constraint in $\Pi^{\plusplus}_i$ has only positive occurrences of $x_i$.
\end{enumerate}
 
Then, $\Pi_{i-1}$ is the set of constraints in $\Pi_i$ in which $x_i$ does not occur plus the set of constraints obtained by the CP resolution of the constraints in $\Pi^+_i \cup \Pi^-_i \cup \Pi^{\plusplus}_i$. More formally,
\begin{equation}\label{eq:pi_i}
    \Pi_{i-1} = \ktdt{(}\Pi_i \setminus (\Pi^+_i \cup \Pi^-_i \cup \Pi^\pm_i) \ktdt{)} \cup \{\CPres_i(\Psi,\Psi') \mid \Psi \in   \Pi^{\plusplus}_i, \Psi' \in \Pi^-_i \}.
\end{equation}
Clearly, each set $\Pi_{i-1}$ does not contain any occurrence of $x_i$ and can contain a non-polynomial number of constraints, the latter fact echoing similar results for variable elimination methods in propositional logic and sets of linear inequalities  \citep{dechter1999}. The above definition, generalises to disjunctions of linear inequalities the standard variable elimination procedure proposes for systems of linear inequalities based on the Fourier-Motzkin result.
\begin{example}[Example~\ref{ex:valid_intervals}, cont'd] \label{ex:pis_computation} 
Consider the variable ordering $x_1; x_2; x_3; x_4; x_5$. Then, $\Pi_5 = \Pi, \Pi_5^- = \{\Psi_3\}, \Pi_5^\pm=\{\Psi_2\}, \Pi_5^+=\Pi_5^0=\{\Psi_1\}, \Pi_5^1 = \{x_1 \le x_2 \lor x_5 \ge x_3\}$, and $\Pi_5^\plusplus = \Pi_5^0 \cup \Pi_5^1$. As a consequence, $
\Pi_4 = \{x_1 \le x_4, x_1 \le x_2 \vee x_3 \le x_4\}$, 
 and $\Pi_3 = \Pi_2 = \Pi_1 = \emptyset$.
\end{example}

For each set of constraints $\Pi_i$, the set $\Pi_{i-1}$ has the desired property, stated in the lemma~below.

\begin{lemma}\label{lemma:extendibility}
    Let $\Pi$ be a set of constraints in the variables $x_1, \ldots, x_i$. $\Pi_i = \Pi$ and $\Pi_{i-1}$ are equisatisfiable, and each assignment to the variables $x_1, \ldots, x_{i-1}$ satisfying $\Pi_{i-1}$ can be extended in order to satisfy $\Pi_i$. 
\end{lemma}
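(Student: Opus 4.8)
The plan is to prove the extension statement directly; equisatisfiability then follows almost for free. For one direction of the latter, every constraint of $\Pi^{\plusplus}_i$ is entailed by $\Pi_i$ — by induction on its recursive construction in (\ref{eq:pi_i}), using the soundness of the CP rule (Lemma~\ref{lemma:soundness}) — and hence so is every constraint of $\Pi_{i-1}$, each of which is either already in $\Pi_i$ or a CP resolvent of two $\Pi_i$-entailed constraints; since $\Pi_{i-1}$ does not mention $x_i$, restricting a model of $\Pi_i$ to $x_1,\dots,x_{i-1}$ gives a model of $\Pi_{i-1}$. It therefore remains to show that any assignment $\va$ to $x_1,\dots,x_{i-1}$ with $\va\models\Pi_{i-1}$ can be completed by a value for $x_i$ into a model of $\Pi_i$; this simultaneously supplies the other direction of equisatisfiability.

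First I would evaluate every constraint of $\Pi_i$ at $\va$. A constraint not mentioning $x_i$ is in $\Pi_{i-1}$, hence already satisfied; a constraint $\Psi$ mentioning $x_i$ is either \emph{inactive} — one of its $x_i$-free disjuncts already holds at $\va$, so $\Psi$ holds for every value of $x_i$ — or \emph{active}, in which case, by (\ref{eq:left_right_bounds}), it is equivalent at $\va$ to ``$x_i\le l_i^{\Psi}(\va)$ or $x_i\ge r_i^{\Psi}(\va)$'', with $l_i^{\Psi}(\va)=-\infty$ for $\Psi\in\Pi^+_i$, $r_i^{\Psi}(\va)=+\infty$ for $\Psi\in\Pi^-_i$, and both endpoints finite for $\Psi\in\Pi^\pm_i$. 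So the task becomes: pick $v\in[\rho,\lambda]$ lying outside all the ``forbidden'' open intervals $(l_i^{\Psi}(\va),r_i^{\Psi}(\va))$ coming from active $\Psi\in\Pi^\pm_i$, where $\rho:=\max\{r_i^{\Psi}(\va):\Psi\in\Pi^+_i\text{ active}\}$ and $\lambda:=\min\{l_i^{\Psi}(\va):\Psi\in\Pi^-_i\text{ active}\}$ (with $\max\emptyset=-\infty$, $\min\emptyset=+\infty$); any such $v$ together with $\va$ satisfies $\Pi_i$ (the $x_i$-free and inactive constraints trivially, the active ones since $v\ge\rho$, $v\le\lambda$, and $v$ misses the forbidden intervals). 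I would first verify $\rho\le\lambda$: for active $\Psi\in\Pi^+_i\subseteq\Pi^{\plusplus}_i$ and active $\Psi'\in\Pi^-_i$, the resolvent $\CPres_i(\Psi,\Psi')$ lies in $\Pi_{i-1}$, and evaluating it at $\va$ (its $x_i$-free disjuncts being false, as $\Psi,\Psi'$ are active) forces $l_i^{\Psi'}(\va)\ge r_i^{\Psi}(\va)$. If $\rho=-\infty$ or $\lambda=+\infty$, a valid $v$ exists at once, since the finitely many forbidden intervals have finite endpoints; and if $\rho$ is itself outside all forbidden intervals, take $v=\rho$.

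The crux is to exclude the remaining case, in which $[\rho,\lambda]$ (now bounded) is entirely covered by the forbidden intervals. I would walk rightwards from $\rho$: pick a forbidden interval $(l_i^{c_1}(\va),r_i^{c_1}(\va))$ containing $\rho$, then one containing $r_i^{c_1}(\va)$, and so on, obtaining a chain $c_1,\dots,c_p$ of active constraints of $\Pi^\pm_i$ with $l_i^{c_1}(\va)<\rho$, $l_i^{c_{s+1}}(\va)<r_i^{c_s}(\va)$ for every $s$, and $r_i^{c_p}(\va)>\lambda$; the right endpoints strictly increase, so the $c_s$ are pairwise distinct and $p\le|\Pi^\pm_i|$. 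Starting from an active $\Psi_0\in\Pi^+_i$ with $r_i^{\Psi_0}(\va)=\rho$ and setting $\Psi_s:=\CPres_i(\Psi_{s-1},c_s)$, unwinding the CP rule (\ref{eq:res_rule}) shows, by induction on $s$, that $\Psi_s\in\Pi^s_i$ has only positive occurrences of $x_i$, is active at $\va$ — its $x_i$-free disjuncts evaluate to the disjunction of ``$l_i^{c_s}(\va)\ge r_i^{\Psi_{s-1}}(\va)$'', the $x_i$-free disjuncts of $\Psi_{s-1}$, and those of $c_s$, all false — and has $r_i^{\Psi_s}(\va)=r_i^{c_s}(\va)$. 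Therefore $\Psi_p\in\Pi^p_i\subseteq\Pi^{\plusplus}_i$ is active with $r_i^{\Psi_p}(\va)=r_i^{c_p}(\va)>\lambda$; taking any active $\Psi'\in\Pi^-_i$ that attains $\lambda$, the resolvent $\CPres_i(\Psi_p,\Psi')\in\Pi_{i-1}$ evaluates at $\va$ to the false statement $\lambda\ge r_i^{c_p}(\va)$, contradicting $\va\models\Pi_{i-1}$. This produces the sought $v$, hence the extension.

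The step I expect to be hardest is this last one: bounding the chain length by $|\Pi^\pm_i|$ so that $\Psi_p$ is provably inside $\Pi^{\plusplus}_i$ — which is exactly why (\ref{eq:pi_i}) iterates CP resolution against $\Pi^\pm_i$ precisely $|\Pi^\pm_i|$ times — and carrying the three invariants (positivity in $x_i$, activeness at $\va$, and the right boundary $r_i^{c_s}(\va)$) through each cutting-plane step, where the exact form of (\ref{eq:res_rule}) and the conventions $\min(\emptyset)=+\infty$, $\max(\emptyset)=-\infty$ do the real work.
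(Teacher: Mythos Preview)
Your proposal is correct and follows essentially the same approach as the paper's proof: soundness of CP resolution for one direction, and for the other a proof by contradiction that walks a chain of overlapping ``forbidden'' intervals from active $\Pi_i^\pm$-constraints, iteratively CP-resolving to build a constraint in $\Pi_i^{\plusplus}$ whose resolvent with a $\Pi_i^-$-constraint contradicts $\va\models\Pi_{i-1}$. Your write-up is in fact more explicit than the paper's on several points the paper leaves implicit --- handling $\rho=-\infty$ or $\lambda=+\infty$, distinguishing active versus inactive constraints, and, crucially, arguing that the right endpoints strictly increase so that the chain has length $p\le|\Pi_i^\pm|$ and hence $\Psi_p\in\Pi_i^{\plusplus}$.
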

The proof of the Lemma is in  Appendix~\ref{proof:lemma_extendibility}. As a corollary of the above lemma we have that the CP resolution is {\sl refutationally complete}:  if $\Pi$ is unsatisfiable then it is  possible to derive a disjunction of linear inequalities \ozm{in $\Pi_0$} in which each inequality (\ref{eq:inequality}) has $w_i=0$ for $i = 1, \ldots, D$ and $b < 0$ (otherwise, it is possible to incrementally define assignments satisfying $\Pi_1, \Pi_2, \ldots, \Pi_D = \Pi$). \ozm{Thus, at the end of the layer construction, we are able to automatically detect $\Pi$ unsatisfiability, returning a corresponding value in such case.}

\begin{corollary}
     For  any finite set of constraints, the CP resolution rule is refutationally complete.
\end{corollary}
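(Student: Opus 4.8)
The plan is to read the corollary as the contrapositive of a ``lifting'' argument along the variable-elimination chain $\Pi = \Pi_D, \Pi_{D-1}, \ldots, \Pi_1, \Pi_0$ produced by (\ref{eq:pi_i}), using Lemma~\ref{lemma:extendibility} as the single workhorse. First I would record two structural facts about this chain. \emph{(1)} By induction on $i$, each $\Pi_{i-1}$ mentions only the variables $x_1, \ldots, x_{i-1}$: $\Pi_i$ mentions only $x_1,\ldots,x_i$ by hypothesis, the constraints of $\Pi_i$ in which $x_i$ does not occur contribute no new variable, and every conclusion $\CPres_i(\Psi,\Psi')$ generated in (\ref{eq:pi_i}) eliminates $x_i$ and introduces no fresh variable, since its new disjuncts have the shape $\varphi_k/w_k - \varphi'_j/w'_j \ge 0$, built from expressions already present. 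In particular $\Pi_0$ is \emph{variable-free}: every inequality occurring in it has the form (\ref{eq:inequality}) with $w_k = 0$ for all $k$, i.e.\ is $b \ge 0$ for a constant $b \in \mathbb{R}$. \emph{(2)} Applying Lemma~\ref{lemma:extendibility} successively for $i = D, D-1, \ldots, 1$ shows that $\Pi_D$ and $\Pi_0$ are equisatisfiable, and, more strongly, that any point satisfying $\Pi_0$ can be extended coordinate by coordinate — choosing $x_1$, then $x_2$, and so on up to $x_D$ — into a point satisfying $\Pi_D = \Pi$.

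Next I would analyse the satisfiability of the variable-free set $\Pi_0$ directly. Each disjunct $b \ge 0$ occurring in a constraint of $\Pi_0$ is either identically true (if $b \ge 0$) or identically false (if $b < 0$), so a constraint $\Psi \in \Pi_0$ is satisfiable iff it contains at least one disjunct with $b \ge 0$, and unsatisfiable iff \emph{every} one of its disjuncts is an inequality (\ref{eq:inequality}) with all $w_k = 0$ and $b < 0$. Consequently $\Pi_0$ is unsatisfiable iff it contains at least one such ``empty'' constraint, and otherwise $\Pi_0$ is satisfied by every point of $\mathbb{R}^D$.

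Finally I would assemble the argument. Suppose $\Pi$ is unsatisfiable. Then $\Pi_0$ is unsatisfiable by fact \emph{(2)}, so by the previous paragraph $\Pi_0$ contains a constraint all of whose disjuncts are inequalities (\ref{eq:inequality}) with $w_k = 0$ for all $k$ and $b < 0$ — precisely the syntactic contradiction sought. Moreover, every constraint in $\Pi_0$ is obtained from $\Pi$ by a finite sequence of applications of the CP resolution rule (\ref{eq:res_rule}), together with the copying of constraints in which the eliminated variable does not occur; hence the contradiction is CP-derivable from $\Pi$, which is the definition of refutational completeness. Conversely (matching the ``otherwise'' in the informal remark preceding the corollary), if $\Pi_0$ contains no such constraint then it is satisfied by any point, and the extension property of fact \emph{(2)} yields a model of $\Pi$; this gives the contrapositive and confirms the equivalence.

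I expect the only delicate point to be bookkeeping rather than mathematics: one must check that (\ref{eq:pi_i}) indeed leaves $\Pi_{i-1}$ free of $x_i$ and of every $x_j$ with $j \ge i$ — so that $\Pi_0$ is genuinely variable-free and the base of the extension chain is an unconstrained point — and that Lemma~\ref{lemma:extendibility} may legitimately be iterated $D$ times. Everything else follows immediately from Lemma~\ref{lemma:extendibility} and the trivial case analysis of constant inequalities.
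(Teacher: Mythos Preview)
Your proposal is correct and follows essentially the same approach as the paper. The paper does not give a separate proof of the corollary; it simply records the argument in the sentence immediately preceding it: if $\Pi$ is unsatisfiable then $\Pi_0$ must contain a variable-free constraint with every disjunct having $b<0$, and otherwise one can incrementally build a model of $\Pi_1,\Pi_2,\ldots,\Pi_D$ via Lemma~\ref{lemma:extendibility}. Your write-up is a careful expansion of exactly this reasoning, with the additional (useful) bookkeeping that $\Pi_{i-1}$ is free of $x_j$ for $j\ge i$ and that every constraint in $\Pi_0$ is reachable from $\Pi$ by CP resolutions and copying.
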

Starting from $i = 1$, the value of $\lsymb(\sample)_i$ is computed considering the constraints in $\Pisi$, where $\Pisi$ is the set of constraints in the variable $x_i$ obtained by substituting  the variables $x_1,x_2, \ldots, x_{i-1}$ with 
$\lsymb(\sample)_1,\lsymb(\sample)_2, \ldots, \lsymb(\sample)_{i-1}$ in $\Pi_i$. As in the single variable case,
assuming
 $\sample_i$ violates some constraint in $\Pisi$,
we define the {\sl closest satisfying left and right boundaries for $\sample_i$}  as: 
\begin{equation*}
        \leftpi{\Pisi}(\sample) = \max_{\Psi \in \Pisi}(\{\leftb :  \sample_i > \leftb, \leftb \in \Omega(\Pisi)\}), \quad
    \rightpi{\Pisi}(\sample) = \min_{\Psi \in \Pisi}(\{\rightb :  \sample_i < \rightb, \rightb \in \Omega(\Pisi)\}).
\end{equation*}
Then, for $j > i$, $\lsymb(\sample)_j = \sample_j$, for $j < i$, $\lsymb(\sample)_j = \lsymb(\sample)_i$  and
\begin{equation}\label{eq:DRL}
    \lsymb(\sample)_i = 
    \begin{cases}
        \sample_i &\qquad \text{if } \sample_i \in \Omega(\Pisi), \\
        \leftpi{\Pisi}(\sample) &\qquad \text{if } \sample_i \not\in \Omega(\Pisi) \text{ and } |\sample_i - \leftpi{\Pisi}(\sample)| < |\sample_i -\rightpi{\Pisi}(\sample)|, \\
        \rightpi{\Pisi}(\sample) &\qquad \text{otherwise}.
    \end{cases}
\end{equation}
\teba{A simple, non-optimised version of the algorithm is given in Algorithm~\ref{alg:overview}. The compilation step happens only once before training, while the application step is performed for each sample.}
\begin{example}[Examples~\ref{ex:one_dim},~\ref{ex:pis_computation}, cont'd] 
    Consider a sample $\sample$ where $\sample_1 = a, \sample_2 = b, \sample_3 = c$, and $\sample_4 = d$ (i.e., arranged as in Figure~\ref{fig:example_bounds}). Then, since $\Pi_3 = \Pi_2 = \Pi_1 = \emptyset$, \lsymb{} leaves the values unchanged for the features $x_1, x_2, x_3$ and $\lsymb(\sample)_1 = a,  \lsymb(\sample)_2 = b,  \lsymb(\sample)_3 = c$. Regarding $\sample_4$, we know that $\widetilde{\Pi}_4 = \{x_4 \ge a, a \le b \vee x_4 \ge c\}$ which reduces to $\{x_4 \ge a\}$, and, since it is satisfied, $\lsymb(\sample) = d$. Finally, $\widetilde{\Pi}_5 = \{x_5 \ge a, x_5 \le b \vee x_5 \ge c, x_5 \le d\}$ and the value of $\lsymb(\sample)_5$ can be computed on the ground of $\sample_5$, as detailed in Example~\ref{ex:one_dim}.
\end{example}

\begin{algorithm}[t]
\teba{
\caption{\teba{Compile \& Apply \lsymb{}} \label{alg:overview}}
\begin{minipage}{0.49\textwidth}
\begin{algorithmic}
\Function{DRL\_compile}{$\Pi, x_1;\ldots;x_D$}
\State $\Pi_D \gets \Pi$ 
\For{$i \gets D$ \textbf{downto} $1$}
\State compute $\Pi_i^+,\Pi_i^-,\Pi_i^{\pm}$, $\Pi_i^{\plusplus}$
\State $    \Pi_{i-1} \gets (\Pi_i \setminus (\Pi^+_i \cup \Pi^-_i \cup \Pi^\pm_i)) \cup$ 
\State \ \ \  $\{\CPres_i(\Psi,\Psi') \!\mid\! \Psi \!\in\!   \Pi^{\plusplus}_i, \Psi' \!\in\! \Pi^-_i \}$
\EndFor
\If{$\Pi_0$ is unsatisfiable}
\State \Return  {\sc UNSAT\_FLAG}
\Else\  \Return $\Pi_1; \dots; \Pi_D$
\EndIf
\EndFunction
\end{algorithmic}
\end{minipage}
\begin{minipage}{0.51\textwidth}
\begin{algorithmic}
\Function{DRL\_apply}{$\sample$, $\Pi_1, \ldots, \Pi_D$}
    \For{$i \gets 1$ \textbf{to} $D$}
        \State compute $\Pisi$, $\Omega(\Pisi)$, $\leftpi{\Pisi}(\sample)$, $\rightpi{\Pisi}(\sample)$
        \If{$\sample_i \in \Omega(\Pisi)$}\ $\lsymb(\sample)_i \gets \sample_i$ 
        \ElsIf{$|\sample_i - \leftpi{\Pisi}(\sample)| < |\sample_i - \rightpi{\Pisi}(\sample)|$} 
            \State $\lsymb(\sample)_i \gets \leftpi{\Pisi}(\sample)$
        \Else \quad $\lsymb(\sample)_i \gets \rightpi{\Pisi}(\sample)$
        \EndIf
    \EndFor
    \State \Return $\lsymb(\sample)_1; \ldots ;\lsymb(\sample)_D$
\EndFunction
\end{algorithmic}
\end{minipage}
}
\end{algorithm}

\begin{theorem}\label{th:guaranteed_sat}
    Let $\Pi$ be a finite and satisfiable set of constraints. For any sample $\sample$ and variable ordering, the corresponding sample $\lsymb(\sample)$ satisfies $\Pi$.
\end{theorem}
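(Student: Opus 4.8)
The plan is to proceed by induction on the position $i$ in the chosen variable ordering $x_1;x_2;\ldots;x_D$, carrying along the invariant $I(i)$: \emph{the partial output $(\lsymb(\sample)_1,\ldots,\lsymb(\sample)_i)$ satisfies every constraint of $\Pi_i$}. Since $\Pi_D=\Pi$, the instance $I(D)$ is exactly the claim of the theorem, so everything reduces to establishing $I(0)$ and the step $I(i-1)\Rightarrow I(i)$. Two facts set the stage. First, $\Pi$ is finite and each elimination step in \eqref{eq:pi_i} adds only finitely many CP resolvents, so every $\Pi_i$ is finite and mentions only $x_1,\ldots,x_i$. Second, applying Lemma~\ref{lemma:extendibility} successively to $\Pi_D,\Pi_{D-1},\ldots,\Pi_1$ shows that all the $\Pi_i$ are equisatisfiable with $\Pi$, hence satisfiable; in particular $\Pi_0$ is satisfiable.

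For the base case, $\Pi_0$ contains no variables, so each of its constraints is a disjunction of constant inequalities $b\ge 0$ and is therefore either identically true or identically false; since $\Pi_0$ is satisfiable, all of them must be identically true, and the empty assignment vacuously satisfies $\Pi_0$, giving $I(0)$.

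For the inductive step, assume $I(i-1)$ and split $\Pi_i$ into the constraints in which $x_i$ does not occur and those in which it does. By \eqref{eq:pi_i}, the former are all contained in $\Pi_{i-1}$, so by $I(i-1)$ they are already satisfied by $(\lsymb(\sample)_1,\ldots,\lsymb(\sample)_{i-1})$, and since $x_i$ is absent from them they stay satisfied whatever value $\lsymb(\sample)_i$ takes. For the constraints in which $x_i$ occurs, substituting $\lsymb(\sample)_1,\ldots,\lsymb(\sample)_{i-1}$ for $x_1,\ldots,x_{i-1}$ produces $\Pisi$, a finite set of constraints in the single variable $x_i$. This $\Pisi$ is \emph{satisfiable}: by $I(i-1)$ the assignment $(\lsymb(\sample)_1,\ldots,\lsymb(\sample)_{i-1})$ satisfies $\Pi_{i-1}$, so by the extendibility clause of Lemma~\ref{lemma:extendibility} it extends to an assignment satisfying $\Pi_i$, and the value it assigns to $x_i$ witnesses $\Omega(\Pisi)\neq\emptyset$. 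Now observe that \eqref{eq:DRL} is exactly the single-variable construction instantiated with the constraint set $\Pisi$ and the coordinate $\sample_i$: the closest satisfying boundaries $\leftpi{\Pisi}(\sample)$ and $\rightpi{\Pisi}(\sample)$ are defined precisely as $\leftp(\sample)$ and $\rightp(\sample)$ were, with $\Pi$ replaced by $\Pisi$. Hence Lemma~\ref{lemma:single_var} applies to $\Pisi$ and yields $\lsymb(\sample)_i\in\Omega(\Pisi)$, i.e.\ $(\lsymb(\sample)_1,\ldots,\lsymb(\sample)_i)$ satisfies all constraints of $\Pi_i$ in which $x_i$ occurs. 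Combining the two cases gives $I(i)$, and the case $i=D$ is the theorem.

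I expect the only genuinely load-bearing step to be the satisfiability of $\Pisi$ at each stage: it is precisely the ``every satisfying assignment extends'' guarantee furnished by Lemma~\ref{lemma:extendibility}, which is itself the whole point of compiling $\Pi$ through the restricted CP-resolution procedure (building $\Pi^{\plusplus}_i$ and resolving it against $\Pi^-_i$, rather than naively closing under all CP resolutions). The other mild subtlety is choosing the induction hypothesis so that it simultaneously feeds Lemma~\ref{lemma:extendibility} and can be re-derived via Lemma~\ref{lemma:single_var}; ``the partial output satisfies $\Pi_i$'' is the formulation that makes both directions go through.
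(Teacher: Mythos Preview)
Your proof is correct and follows essentially the same approach as the paper: induction along the variable ordering, invoking Lemma~\ref{lemma:extendibility} to guarantee satisfiability of $\Pisi$ and then Lemma~\ref{lemma:single_var} to conclude that $\lsymb(\sample)_i\in\Omega(\Pisi)$. The paper's proof phrases the induction on the number of variables occurring in $\Pi$ rather than on the index $i$, and is much terser, but the load-bearing steps are identical; your version is simply a more explicit unrolling of the same argument, including the (correct) observation that the $x_i$-free constraints of $\Pi_i$ land in $\Pi_{i-1}$ verbatim via \eqref{eq:pi_i}.
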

Further, considering the variable ordering $x_1;x_2;\ldots;x_D$,
$\lsymb(\sample)$ is {\sl optimal w.r.t. $\sample$ and $\Pi$
and the variable ordering}: for each $i = 1,\ldots,D$ there does not exist a sample $\sample' \in \Omega(\Pi)$ such that $|\sample_i - \sample'_i| < |\sample_i - \lsymb(\sample)_i|$, and for all $j < i$,
$\sample'_j = \lsymb(\sample)_j$.

\begin{theorem}\label{th:min_dist}
    Let $\Pi$ be a finite and satisfiable set of constraints. For any sample $\sample$ and variable ordering, the corresponding sample $\lsymb(\sample)$ is optimal w.r.t. $\sample$, $\Pi$ and the variable ordering.
\end{theorem}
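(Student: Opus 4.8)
The plan is to reduce the multivariate optimality claim to the single-variable Lemma~\ref{lemma:single_var}, coordinate by coordinate, using the two structural facts already in hand: soundness of CP resolution (Lemma~\ref{lemma:soundness}) and the back-extension property of the eliminated sets (Lemma~\ref{lemma:extendibility}). Fix a sample $\sample$, a variable ordering $x_1;\ldots;x_D$, and an index $i$. Write $\vb = (\lsymb(\sample)_1,\ldots,\lsymb(\sample)_{i-1})$ for the prefix already committed by the layer. Then the optimality clause at coordinate $i$ says exactly that $v^\star := \lsymb(\sample)_i$ attains $\min\{\,|\sample_i - v| : \text{$(\vb,v)$ extends to some }\sample'\in\Omega(\Pi)\,\}$, where ``extends to $\sample'$'' means $\sample'_j = b_j$ for $j<i$ and $\sample'_i = v$.

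The heart of the argument is the identity
\[
\{\,v\in\R : (\vb,v)\text{ extends to some }\sample'\in\Omega(\Pi)\,\} \;=\; \Omega(\Pisi),
\]
where $\Pisi$ is $\Pi_i$ with $x_1,\ldots,x_{i-1}$ substituted by $\vb$. For $\subseteq$: if $\sample'\in\Omega(\Pi)=\Omega(\Pi_D)$ has prefix $\vb$, then by Lemma~\ref{lemma:soundness} together with transitivity of entailment every constraint of $\Pi_{k-1}$ is entailed by $\Pi_k$ (each is either a constraint of $\Pi_k$ not mentioning $x_k$, or a CP resolvent of constraints entailed by $\Pi_k$), so $\Pi_D\models\cdots\models\Pi_i$ in the projective sense, and the restriction of $\sample'$ to $x_1,\ldots,x_i$ satisfies $\Pi_i$, i.e. $\sample'_i\in\Omega(\Pisi)$. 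For $\supseteq$: given $v\in\Omega(\Pisi)$, the assignment $(\vb,v)$ satisfies $\Pi_i$, and applying Lemma~\ref{lemma:extendibility} successively for $k=i,i+1,\ldots,D-1$ extends it one variable at a time to a full assignment satisfying $\Pi_D=\Pi$ that still has prefix $\vb$. Finally $\Omega(\Pisi)\neq\emptyset$: by Theorem~\ref{th:guaranteed_sat} $\lsymb(\sample)\in\Omega(\Pi)$, so by the $\subseteq$ reasoning $\vb$ satisfies $\Pi_{i-1}$, and Lemma~\ref{lemma:extendibility} then yields an admissible $v$. Hence $\Pisi$ is a finite, satisfiable set of constraints in the single variable $x_i$.

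With this identity, optimality at coordinate $i$ is literally the single-variable statement: the rule defining $\lsymb(\sample)_i$ in \eqref{eq:DRL} is precisely the single-variable construction applied to $\Pisi$ and the scalar $\sample_i$, so Lemma~\ref{lemma:single_var} gives that $\lsymb(\sample)_i$ minimises $|\sample_i-v|$ over $v\in\Omega(\Pisi)$, which by the identity is the minimum over all $\sample'\in\Omega(\Pi)$ with prefix $\vb$. Running this over $i=1,\ldots,D$ proves the theorem. No further induction is needed beyond the finite iteration of Lemma~\ref{lemma:extendibility} and the already-established Theorem~\ref{th:guaranteed_sat}; alternatively one can fold everything into an induction on $i$ that simultaneously establishes that $(\lsymb(\sample)_1,\ldots,\lsymb(\sample)_i)$ satisfies $\Pi_i$ and that the optimality clause holds.

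The main obstacle is the structural identity, and specifically the $\supseteq$ inclusion together with non-emptiness of $\Omega(\Pisi)$: this is exactly where the design of the eliminated sets in \eqref{eq:pi_i} matters, since one needs $\Pi_{i-1}$ to be both sound ($\Pi_i\models\Pi_{i-1}$, giving $\subseteq$) and back-extendible (Lemma~\ref{lemma:extendibility}, giving $\supseteq$). Both properties are available as the cited lemmas, so the remaining work is the bookkeeping of substitution, projection onto the first $i$ coordinates, and the one-variable-at-a-time extension; once the identity is stated cleanly, the reduction to Lemma~\ref{lemma:single_var} is immediate and introduces no new difficulty.
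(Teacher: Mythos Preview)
Your proposal is correct and follows essentially the same route as the paper: reduce the optimality at coordinate $i$ to the single-variable Lemma~\ref{lemma:single_var} applied to $\Pisi$, using Lemma~\ref{lemma:extendibility} for satisfiability/extendibility and (implicitly in the paper, explicitly in your write-up) soundness for the converse inclusion. The paper's proof packages the same argument as a short induction on the number of variables and is considerably terser; your version is more explicit in stating and proving the identity $\{v:(\vb,v)\text{ extends to }\Omega(\Pi)\}=\Omega(\Pisi)$, which is exactly the step the paper leaves to the reader.
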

The proofs of Theorems~\ref{th:guaranteed_sat} and~\ref{th:min_dist} are in Appendix~\ref{app:proof_guaranteed_sat} and~\ref{app:proof_min_dist}, respectively.

\section{Experimental Analysis}

To assess how \lsymb{}\footnote{The code is available at \url{https://github.com/mihaela-stoian/DRL_DGM}.} performs in practice, we conduct the following studies.
First, in Section~\ref{sec:efficiency_uncons_vs_DRL}, we investigate whether our layer improves the quality of the synthetic data generated by standard DGMs. In Section~\ref{sec:efficiency_Cmodels_vs_DRL}, we then compare our constrained models (which we refer to as DGMs+DRL) with the models obtained by considering only the linear constraints in each dataset and adding the layer proposed by~\cite{stoian2024}.
We refer to the linearly constrained DGMs as DGMs + Linear Layer (DGMs+LL). 
Then, in  Section~\ref{sec:sample_gen_time}, we conduct experiments to determine how the background knowledge injection affects the sample generation time.
Before delving into these studies, we describe the metrics we use to compute the sample quality, along with the models and datasets.

\textbf{Sample Quality Evaluation.} To judge the quality of our samples we measure (i) how well they align with the background knowledge and (ii) how well they can replace the real data in downstream tasks. To measure background knowledge alignment,
we consider the metrics proposed in~\citep{stoian2024}: i.e, 
 {\sl constraint violation rate} (\cvr), {\sl constraint violation coverage} (\cvc), and {\sl samplewise constraints violation coverage} (\scvc). To determine their usability in downstream tasks
we consider the metric {\sl machine learning efficacy} \citep{kim2023stasy}, also known as utility (e.g., in \citep{liu2022goggle}).
To compute it, we follow the ``Train on Synthetic, Test on Real'' protocol~\citep{Esteban2017tstr}.
Specifically, to compute the efficacy for classification (resp., regression) datasets, we train six classifiers (resp., four regressors) on synthetic data and test them on real data.
A detailed description of the evaluation protocol and the hyperparameter tuning description for the classifiers and regressors can be found in Appendix~\ref{appdx:stasy_evaluation_protocol}. 
For classification datasets, we report: F1-score (\fone), weighted F1-score (\wfone), and Area Under the ROC Curve (\auc), while for the regression dataset, we compute the mean absolute error (\mae) and the root mean square error (\rmse).
For reference, we report the same metrics when training on the real data in Table~\ref{tab:real-utility} of Appendix~\ref{appdx:real_data_efficiency}.

\textbf{Models.}
We consider five DGMs: \wgan~\citep{Arjovsky2017_WGAN}, \tablegan~\citep{park2018_tableGAN}, \ctgan~\citep{xu2019_CTGAN}, \tvae~\citep{xu2019_CTGAN}, and \goggle~\citep{liu2022goggle}, and build our \lsymb{} on top of each to create DGM+\lsymb{} models.
A description of these models is in Appendix~\ref{appdx:models}.

\textbf{Datasets.}
We consider five real-world datasets and associated constraints.
Four datasets (i.e., \phishing, \cervical, \lcld, and \heloc) are used for classification tasks, while one dataset (i.e., \house) is used for regression.
A detailed description of the datasets and their respective constraints are in Appendix~\ref{appdx:datasets}.

\begin{figure}[t]
    \centering
     \includegraphics[trim={0.9cm 1cm 0.5cm 0cm},clip,width=0.92\linewidth]{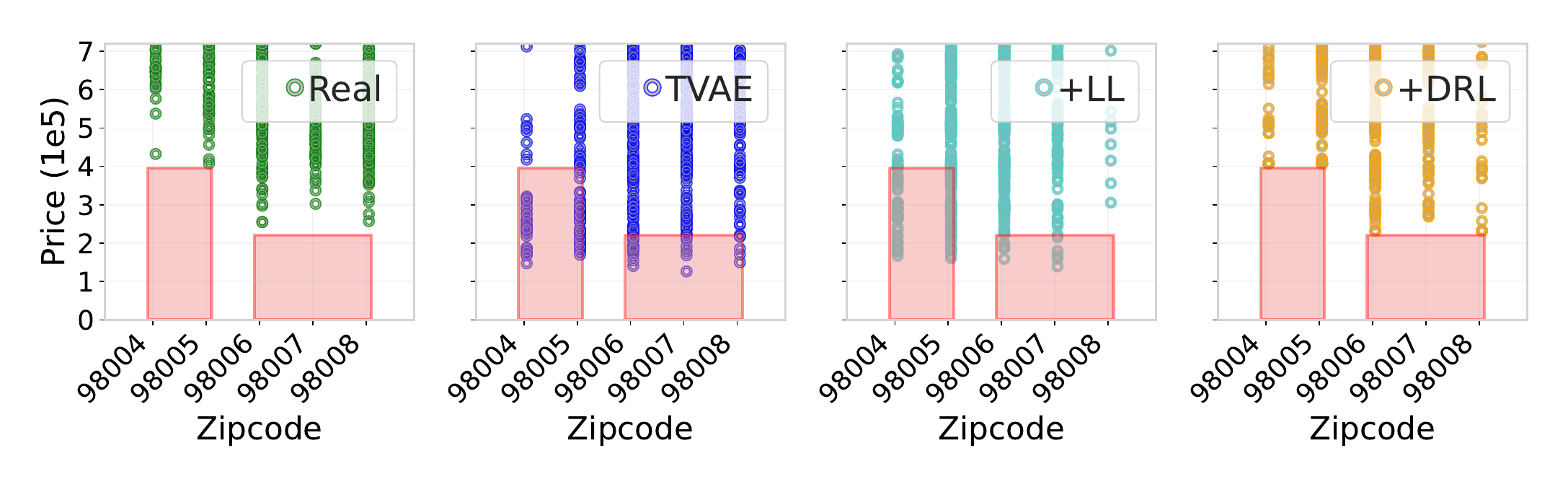}
    \caption{Sample distributions for real and synthetic data from TVAE, TVAE+LL and TVAE+\lsymb{}. The regions where samples violate the constraints are in red.\vspace{-0.4cm}} 
    \label{fig:main_background_knowledge_alignment}
\end{figure}

\subsection{Synthetic Data Quality}
\label{sec:efficiency_uncons_vs_DRL}

\paragraph{Background knowledge alignment.}
\begin{wraptable}{r}{0.62\linewidth}
  \vskip-.45cm
 \centering
\caption{\cvr{} for each model and dataset. Cases with \cvr{}$\ge$50\% are \underline{underlined}. Best results are in bold. 
}
\footnotesize
  \vskip-.2cm
 \setlength{\tabcolsep}{2.6pt}
\begin{tabular}{@{}lrrrrrr@{}}
\toprule
 & \phishing{}            & \cervical{}          & \lcld{}        & \heloc{}           & \house{}                    \\ \midrule

  WGAN & 22.8\msmall{\pm4.9} & 44.7\msmall{\pm7.1}& 47.5\msmall{\pm14.5} & \underline{80.6\msmall{\pm9.3}} & \underline{100.0\msmall{\pm0.0}}\\
 TableGAN  & 8.5\msmall{\pm2.2} & \underline{61.2\msmall{\pm13.3}}&32.0\msmall{\pm4.7}& \underline{59.9\msmall{\pm16.7}}&\underline{100.0\msmall{\pm0.0}} \\
 CTGAN &9.7\msmall{\pm2.0} & \underline{78.5\msmall{\pm5.7}}& 7.1\msmall{\pm1.3}& \underline{56.6\msmall{\pm9.8}}& \underline{100.0\msmall{\pm0.0}} \\
 \tvae & 10.3\msmall{\pm1.1} &16.9\msmall{\pm1.6} & 10.3\msmall{\pm0.6} & 44.9\msmall{\pm1.0} & \underline{100.0\msmall{\pm0.0}}\\
 GOGGLE& 7.3\msmall{\pm8.1} & \underline{60.3\msmall{\pm6.8}} & \rebuttal{\underline{70.4\msmall{\pm16.1}}}  & \underline{52.7\msmall{\pm6.3}} & \underline{100.0\msmall{\pm0.0}}\\
\midrule
All + \lsymb     & \textbf{0.0\msmall{\pm0.0}}   & \textbf{0.0\msmall{\pm0.0}}   & \textbf{0.0\msmall{\pm0.0}}    & \textbf{0.0\msmall{\pm0.0}}  & \textbf{0.0\msmall{\pm0.0}}      \\ \bottomrule
\end{tabular}
\label{tab:cons-sat_uncons_vs_DRL}
\vspace{-0.2cm}
\end{wraptable}

To assess how often the samples violate the constraints, we calculate the \cvr{}, which
is defined as the percentage of samples that violate at least one constraint.
Table~\ref{tab:cons-sat_uncons_vs_DRL} shows the \cvr{} for each unconstrained model (first five rows) and our models equipped with the \lsymb{} (last row). More detailed findings are reported in  Appendix~\ref{appdx:uncons_vs_DRL_constr_violation} (Table~\ref{tab:cons-sat-breakdown_uncons_vs_DRL}), where the results for  \scvc{} and \cvc{}  can also be found (Tables~\ref{tab:scvc-breakdown_uncons_vs_DRL}, \ref{tab:cvc-breakdown_uncons_vs_DRL}).
As expected, the models with our \lsymb{} always satisfy the constraints, while the samples obtained with standard DGMs very often violate them.
Additionally, in many cases, the \cvr{} is extremely high: out of 25 cases, \rebuttal{13} cases have \cvr{} greater than 50\% and 5 cases have \cvr{} equal to 100\%, thus making the standard procedure of rejecting non-aligned samples unfeasible. 
Further, to visualise the impact of DRL, we consider the features {\sl Price} and {\sl Zipcode} from the House dataset and create the scatter plots of 
(i) the real data, and the synthetic data from
(ii) the unconstrained DGMs, 
 (iii) the DGMs+LL,
 and (iv) the DGMs+DRL.
We also highlight in red the regions that violate the constraints: (i) \textsl{if the Zipcode is 98004 or 98005 then the Price is greater than 400K USD} and (ii)
\textsl{if the Zipcode is between 98006 and 98008 then the Price exceeds 225K USD}. The scatter plots obtained for the real datapoints and TVAE (with and without LL and DRL) are shown in Figure~\ref{fig:main_background_knowledge_alignment}, while the ones obtained from the other models are given in Figure~\ref{fig:background_knowledge_alignment}, Appendix~\ref{appdx:qualitative}. The Figures clearly show that standard DGMs and DGMs+LL fail to comply with the constraints, and indeed, 
many of the samples fall in the
red-shaded regions. %
On the contrary, the samples obtained using DRL not only never violate the constraints, but also better match the real
data distribution.

\paragraph{Machine Learning Efficacy.} 
\begin{table}[t] 
 \caption{Efficacy comparison between the unconstrained DGMs, and their +\lsymb{} and +RS counterparts. The performance is measured using \fone, \wfone, and \auc, for each classification dataset.} \label{tab:main_efficiency_uncons_vs_DRL}
 \vskip-.5cm
\footnotesize
  \setlength{\tabcolsep}{3.4pt}
 \centering
\begin{tabular}{l@{\ \ \ } rrrr c@{\ \ \ } rrrr c@{\ \ \ } rrrr}\\
\toprule
& \multicolumn{4}{c}{F1} & &\multicolumn{4}{c}{wF1} & &\multicolumn{4}{c}{AUC}\\
\cmidrule{2-5}\cmidrule{7-10}\cmidrule{12-15}
& \phishing{}            & \cervical{}          & \lcld{}        & \heloc{}   && \phishing{}            & \cervical{}          & \lcld{}        & \heloc{} && \phishing{}            & \cervical{}          & \lcld{}        & \heloc{} \\

\midrule
WGAN   & 0.794 & 0.303 & 0.139 & 0.665 &  & 0.796 & 0.330 & 0.296 & 0.648 &  & 0.870 & 0.814 & 0.605 & \textbf{0.717} \\
{ \ktdt{+ RS}} & \ktdt{0.792} & \ktdt{0.051} & \ktdt{0.156} & \ktdt{0.628} & & \ktdt{0.794} & \ktdt{0.088} & \ktdt{0.312} & \ktdt{0.617} && \ktdt{0.862} & \ktdt{0.570} & \ktdt{0.611} & \ktdt{0.685}\\
{ + DRL}   & \textbf{0.800} & \textbf{0.313} & \textbf{0.197} & \textbf{0.721} &  & \textbf{0.801} & \textbf{0.340} & \textbf{0.339} & \textbf{0.652} &  & \textbf{0.875} & \textbf{0.885} & \textbf{0.623} & \textbf{0.717} \\

\cmidrule{1-1}

TableGAN   & 0.562 & \textbf{0.196} & 0.259 & 0.593 &  & 0.659 & \textbf{0.228} & 0.393 & 0.615 &  & 0.843 & \textbf{0.802} & 0.655 & 0.707 \\
{ \ktdt{+ RS}}&  \ktdt{0.544}  	& \ktdt{0.138} & \ktdt{0.251} &	\ktdt{0.568} & & \ktdt{0.648} & \ktdt{0.172} & \ktdt{0.389} & \ktdt{0.599} && \ktdt{0.854} & \ktdt{0.682} & \ktdt{0.653} & \ktdt{0.685}  
  \\
{ + DRL}   & \textbf{0.619} & 0.163 & \textbf{0.269} & \textbf{0.628} &  & \textbf{0.693} & 0.196 & \textbf{0.401} & \textbf{0.628} &  & \textbf{0.865} & 0.742 & \textbf{0.657} & \textbf{0.709} \\

\cmidrule{1-1}
CTGAN   & 0.822 & 0.145 & 0.247 & 0.736 &  & 0.799 & 0.159 & 0.379 & 0.675 &  & 0.859 & 0.914 & \textbf{0.651} & 0.744 \\

{ \ktdt{+ RS}} & \ktdt{0.817} & \ktdt{0.086} & \ktdt{0.201} & \ktdt{0.706} && \ktdt{0.795} & \ktdt{0.095} & \ktdt{0.342} & \ktdt{0.650} && \ktdt{0.856} & \ktdt{0.515} & \ktdt{0.615} & \ktdt{0.706} \\

{ + DRL}   & \textbf{0.836} & \textbf{0.288} & \textbf{0.288} & \textbf{0.744} &  & \textbf{0.815} & \textbf{0.308} & \textbf{0.409} & \textbf{0.680} &  & \textbf{0.883} & \textbf{0.955} & 0.643 & \textbf{0.745} \\

\cmidrule{1-1}
TVAE   & 0.810 & 0.325 & 0.185 & 0.717 &  & 0.802 & 0.351 & \textbf{0.330} & 0.686 &  & 0.863 & 0.858 & 0.631 & 0.750 \\

{ \ktdt{+ RS}} & \ktdt{0.788} & \ktdt{0.024} & \ktdt{\textbf{0.237}} & \ktdt{0.420} & & \ktdt{0.778} & \ktdt{0.061} & \ktdt{0.283} & \ktdt{0.465} && \ktdt{0.846} & \ktdt{0.522} & \ktdt{0.480} & \ktdt{0.497}   \\

{ + DRL}   & \textbf{0.835} & \textbf{0.467} & 0.189 & \textbf{0.731} &  & \textbf{0.832} & \textbf{0.487} & \textbf{0.330} & \textbf{0.694} &  & \textbf{0.893} & \textbf{0.926} & \textbf{0.635} & \textbf{0.752} \\

\cmidrule{1-1}

GOGGLE   & 0.622 & 0.039 & \rebuttal{0.248} & 0.596 &  & 0.648 & 0.076 & \rebuttal{0.296} & 0.566 &  & 0.742 & 0.549 & \rebuttal{0.551}& 0.600 \\

{ \ktdt{+ RS}} & \ktdt{0.608} & \ktdt{0.047} & \ktdt{0.235} & \ktdt{0.577} & & \ktdt{0.639} & \ktdt{0.084} & \ktdt{\textbf{0.322}} & \ktdt{0.549} && \ktdt{0.727} & \ktdt{0.571} & \ktdt{0.532} & \ktdt{0.592}\\

{ + DRL}   & \textbf{0.720} & \textbf{0.253} & \rebuttal{\textbf{0.298}} & \textbf{0.698} &  & \textbf{0.673} & \textbf{0.281} & \rebuttal{0.310} & \textbf{0.636} &  & \textbf{0.747} & \textbf{0.758} & \rebuttal{\textbf{0.563}} & \textbf{0.691} \\

\bottomrule
\end{tabular}
\end{table}

 Table~\ref{tab:main_efficiency_uncons_vs_DRL} shows that: \ktdt{(i) making the samples compliant with the constraints via rejection sampling (RS) often reduces their machine learning efficacy (indicated as DGMs+RS), and that (ii)} adding DRL improves the performance of the unconstrained models according to at least one metric in all cases but one (TableGAN over the CCS dataset). \ktdt{Regarding the performance obtained with rejection sampling we can see that it decreases with respect to the standard DGMs  in 17, 17 and 17 out of 20 cases for \fone, \wfone{} and \auc, respectively.} 
\ktdt{Regarding the performance of DGMs+\lsymb{}, the layer}  improves the performance w.r.t. the unconstrained models in 19, 18 and \rebuttal{17} out of 20 cases for \fone, \wfone{} and \auc, respectively.
Additionally, the improvements are often non-negligible.
For \fone, in more than half of the cases, the improvement is of at least 3.5\%, with the largest one recorded on \goggle{} for \cervical{} of 21.4\%.
For \wfone, in more than half of the cases, the improvement is of at least 1.0\%, with the largest improvement, of 20.5\%, again recorded on \goggle{} for \cervical.  
And, for \auc, the improvement is at least \rebuttal{1.2\% in half of the cases}, with the largest improvement, of 20.9\%, recorded on \goggle{} for \cervical{}. On the regression dataset, \house, we find a similar trend in terms of improvements brought by \lsymb{} (Appendix~\ref{appdx:efficiency_wrt_uncons},Table~\ref{tab:kc_efficiency_uncons_vs_DRL}), where
the DGM+\lsymb{} models improve the performance w.r.t. the unconstrained models in all cases.
We also verify the statistical significance of the results following the recommendation of
  \citep{demsar2006}. We perform the Wilcoxon signed-rank test on the efficacy results for the classification datasets and we obtain  p-value $<0.01$ w.r.t. the \fone{} and \wfone{} results and $<0.05$ w.r.t. \auc{}, thus confirming that DRL significantly improves the performances of DGMs.

\subsection{Linear vs. QFLRA Constraints}
\label{sec:efficiency_Cmodels_vs_DRL}

\paragraph{Background knowledge alignment.}
Table \ref{tab:cons-sat_Cmodels_vs_DRL} shows the~\cvr{} for each DGM+LL model (first five rows) and for the DGM+\lsymb{} models (last row). 
As expected, DGMs+LL cannot guarantee the compliance with QFLRA constraints and
in  \rebuttal{5} out of 25 cases we see a \cvr{} greater than 50\%.
Moreover, we have one case %
where \cvr{} is 100\%, thus demonstrating the need for models that support more expressive constraints. In Appendix~\ref{appdx:Cmodels_vs_DRL}, Tables~\ref{tab:cons-sat-breakdown_Cmodels_vs_DRL}, \ref{tab:scvc-breakdown_Cmodels_vs_DRL}, \ref{tab:cvc-breakdown_Cmodels_vs_DRL}, we report the results for all metrics: \cvr, \scvc, and \cvc{}.

\paragraph{Machine Learning Efficacy.} 
\begin{table}[t] 
  \caption{Efficacy comparison between the DGM+LL models and the models with \lsymb{}. The performance is measured using \fone, \wfone, and \auc, for each classification dataset.} \label{tab:main_efficiency_Cmodels_vs_DRL}
  \vskip-.5cm
  \footnotesize
  \setlength{\tabcolsep}{2.5pt}
 \centering
\begin{tabular}{lrrrr c@{\ \ \ } rrrr c@{\ \ \ } rrrr}\\
\toprule
& \multicolumn{4}{c}{F1} & &\multicolumn{4}{c}{wF1} & &\multicolumn{4}{c}{AUC}\\
\cmidrule{2-5}\cmidrule{7-10}\cmidrule{12-15}
& \phishing{}            & \cervical{}          & \lcld{}        & \heloc{}   && \phishing{}            & \cervical{}          & \lcld{}        & \heloc{} && \phishing{}            & \cervical{}          & \lcld{}        & \heloc{} \\
\midrule
{WGAN+LL}  & \textbf{0.803} & \textbf{0.359} & 0.183 & 0.694 &  & 0.799 & \textbf{0.383} & 0.330 & \textbf{0.662} &  & 0.869 & 0.857 & 0.608 & \textbf{0.732} \\

{WGAN+DRL}   & 0.800 & 0.313 & \textbf{0.197} & \textbf{0.721} &  & \textbf{0.801} & 0.340 & \textbf{0.339} & 0.652 &  & \textbf{0.875} & \textbf{0.885} & \textbf{0.623} & 0.717 \\

\cmidrule{1-1}

{TableGAN+LL}  & 0.612 & \textbf{0.169} & 0.232 & \textbf{0.638} &  & \textbf{0.695} & \textbf{0.203} & 0.373 & \textbf{0.633} &  & \textbf{0.868} & \textbf{0.794} & 0.640 & 0.704 \\

{TableGAN+DRL}   & \textbf{0.619} & 0.163 & \textbf{0.269} & 0.628 &  & 0.693 & 0.196 & \textbf{0.401} & 0.628 &  & 0.865 & 0.742 & \textbf{0.657} & \textbf{0.709} \\

\cmidrule{1-1}
{CTGAN+LL}  & \textbf{0.836} & 0.250 & 0.265 & 0.729 &  & \textbf{0.820} & 0.271 & 0.392 & \textbf{0.688} &  & 0.880 & \textbf{0.959} & 0.641 & \textbf{0.755} \\

{CTGAN+DRL}   & \textbf{0.836} & \textbf{0.288} & \textbf{0.288} & \textbf{0.744} &  & 0.815 & \textbf{0.308} & \textbf{0.409} & 0.680 &  & \textbf{0.883} & 0.955 & \textbf{0.643} & 0.745 \\

\cmidrule{1-1}
{TVAE+LL}  & 0.824 & 0.413 & 0.158 & 0.730 &  & 0.816 & 0.436 & 0.310 & 0.691 &  & 0.878 & \textbf{0.933} & 0.633 & 0.747 \\

{TVAE+DRL}   & \textbf{0.835} & \textbf{0.467} & \textbf{0.189} & \textbf{0.731} &  & \textbf{0.832} & \textbf{0.487} & \textbf{0.330} & \textbf{0.694} &  & \textbf{0.893} & 0.926 & \textbf{0.635} & \textbf{0.752} \\

\cmidrule{1-1}

{GOGGLE+LL}  & \textbf{0.787} & 0.233 & \rebuttal{0.284} & \textbf{0.723} &  & \textbf{0.749} & 0.262 & \rebuttal{\textbf{0.310}} & \textbf{0.663} &  & \textbf{0.802} & \textbf{0.765} & \rebuttal{0.554} & \textbf{0.719} \\

{GOGGLE+DRL}   & 0.720 & \textbf{0.253} & \rebuttal{\textbf{0.298}}  & 0.698 &  & 0.673 & \textbf{0.281} & \rebuttal{\textbf{0.310}}  & 0.636 &  & 0.747 & 0.758 & \rebuttal{\textbf{0.563}} & 0.691 \\

\bottomrule
\end{tabular}
\end{table}
\begin{wraptable}{r}{0.62\linewidth}
  \vskip-.475cm
\caption{\cvr{} for each DGM+LL model and dataset. Cases with \cvr{}$\ge$50\% are \underline{underlined}.  Best results are in bold.}
  \vskip-.2cm
  \footnotesize
  \centering
  \setlength{\tabcolsep}{2.6pt}
\begin{tabular}{@{}lrrrrrr@{}}
\toprule
 & \phishing{}            & \cervical{}          & \lcld{}        & \heloc{}           & \house{}                    \\ \midrule
WGAN+LL & 8.9\msmall{\pm3.2}& \underline{51.5\msmall{\pm11.2}}&27.0\msmall{\pm3.6}&20.6\msmall{\pm6.3} & \underline{100.0\msmall{\pm0.0}}\\
TableGAN+LL &  3.6\msmall{\pm0.8} & \underline{54.0\msmall{\pm17.8}}& 11.3\msmall{\pm0.9}& 26.6\msmall{\pm7.7}&23.9\msmall{\pm2.7}\\
 CTGAN+LL &  7.0\msmall{\pm2.6} & \underline{55.7\msmall{\pm16.3}} & 2.6\msmall{\pm1.1}& 2.6\msmall{\pm2.4}&10.8\msmall{\pm7.8} \\
\tvae+LL & 6.8\msmall{\pm0.6} &8.4\msmall{\pm2.0} & 5.8\msmall{\pm0.8}&0.0\msmall{\pm0.0} &13.0\msmall{\pm12.6} \\
GOGGLE+LL & 6.5\msmall{\pm7.0} & 23.0\msmall{\pm10.7} & \rebuttal{\underline{81.9\msmall{\pm6.5}}} & 11.5\msmall{\pm7.1} & 2.6\msmall{\pm2.6} \\
\midrule
All  + \lsymb     & \textbf{0.0\msmall{\pm0.0}}   & \textbf{0.0\msmall{\pm0.0}}   & \textbf{0.0\msmall{\pm0.0}}    & \textbf{0.0\msmall{\pm0.0}}  & \textbf{0.0\msmall{\pm0.0}}      \\ \bottomrule
\end{tabular}
\label{tab:cons-sat_Cmodels_vs_DRL}
\vspace{-0.5cm}
\end{wraptable}

For the sake of completeness, in Table~\ref{tab:main_efficiency_Cmodels_vs_DRL} we include the comparison between DGMs+\lsymb{} and DGMs+LL on the classification datasets.
Since \cite{stoian2024}  
already reported improvements over their unconstrained counterpart by adding the linear layer, as expected in this scenario, we get more modest improvements than the ones w.r.t. the unconstrained models. As we can see from the Table, the DGM+\lsymb{} models improve the efficacy w.r.t. the DGMs+LL for at least one metric in 17 out of 20 cases. %
Similarly, the number of times DGM+DRL outperforms the respective DGM+LL is lower than the number of times it outperforms its unconstrained counterpart. 
Indeed, out of 20 comparisons, the models with \lsymb{} outperform their linearly constrained counterparts 13, 10 and \rebuttal{11} times for \fone, \wfone, and \auc, respectively.
Regarding the regression dataset, \house,  Table~\ref{tab:kc_efficiency_Cmodels_vs_DRL} in Appendix~\ref{appdx:efficiency_wrt_Cmodels} shows that the DGM+\lsymb{} models have a comparable performance to the DGM+LL models, with 6 out of 10 the cases showing an improvement in performance when using our layer. 
As in the previous experiment, we use the Wilcoxon signed-rank test to assess whether adding DRL significantly improves over the linear layer. 
In this case, we obtain p-value $<0.05$ for F1, while as expected, the test confirms that the performances of DGMs+DRL and DGMs+LL are not statistically different w.r.t. wF1 and  AUC.

\subsection{Sample Generation Time}
\label{sec:sample_gen_time}
\begin{wraptable}{r}{0.45\linewidth}
\vspace{-0.45cm}
\caption{Sample generation time in seconds. \hspace*{\fill}}
\vspace{-0.5cm}
\small
\centering
\setlength{\tabcolsep}{3pt}
\begin{tabular}{@{}lrrrrrr@{}}
\toprule
           & \phishing{}  & \cervical{} & \lcld{} & \heloc{} & \house{} \\ \midrule
DGM & 0.15	&0.08	&0.07	&0.06	&0.05\\
DGM+RS & 0.37	& 0.83	&1.54	&0.66	& - \\
DGM+\lsymb &0.22	&0.13	&0.14&	0.10&	0.13\\
\bottomrule
\end{tabular}
\vspace{-0.2cm}
\label{tab:runtime_avg}
\end{wraptable}
To assess the impact of constraints on sample generation time, we compare the runtimes of unconstrained DGMs, DGMs+\lsymb{} \ktdt{and DGMs+RS}.
We generate 
 1,000 samples for each model and dataset using five different seeds and report the average runtime in Table~\ref{tab:runtime_avg} (for a detailed breakdown, see~Appendix~\ref{appdx:runtime}, Table~\ref{tab:runtime_appdx}). 
As expected, DGMs+DRL are slower on average than their unconstrained counterparts. 
\ktdt{However,  they are faster than DGMs+RS.
Indeed, excluding extreme cases with 100\% CVR  (where we were unable  to generate samples even in 24h),  in all other cases,  DGMs+RS take more than twice as long as the unconstrained DGMs.
}

\section{Related Work}

Our work lies at the intersection of two fields: Neuro-symbolic AI and tabular data generation. Thus, our related work section will mirror this duality.

\paragraph{Neuro-symbolic AI.} Neuro-symbolic AI~\citep{raedt_survey,third_wave} refers to the broad area of AI
that combines the strengths of symbolic reasoning with neural networks. %
As our work falls into the more specific field of injection of background knowledge into neural models, (see, e.g.,~\citep{stewart2017,
hoernle2022multiplexnet,giunchiglia2023manifesto,daniele2023_refining,calanzone2024logicallyconsistentlanguagemodels}) we will focus the discussion on this topic.
Many methods for this task are based on the intuition that logical constraints can be transformed into differentiable loss function terms that penalise the networks for violating them (see, e.g.,~\citep{xu2018semantic,serafini2022ltn,diligenti2012SBR,fischer2019DL2}). As expected, since these methods operate at a loss level, they give no guarantee that the constraints will be satisfied. Other works manage to integrate neural networks and probabilistic reasoning through the mapping of predicates appearing in logical formulae to neural networks~\citep{manhaeve2018deepproblog,yang2020neurasp,sachan2018nuts_and_bolts,pryor2023NeuPSL,krieken2023anesi}. This allows these methods to both perform reasoning on the networks' predictions as well as constrain the output according to the background knowledge. The most similar line of work to ours is the one where the constraints in input are automatically compiled into neural layers~\citep{giunchiglia2021jair,ahmed2022spl,giunchiglia2024ccn+}. However, these methods can compile and incorporate constraints that are at best as expressive as propositional logic formulae. %
Focusing specifically on the incorporation of constraints on generic generative models, we can find the work by~\cite{stoian2024}, where the tabular generation process was simply constrained by linear inequalities. If we consider different application domains, we can find the work proposed by~\cite{misino2023tvael}, where ProbLog~\citep{Raed2007_problog} works in tandem with variational-autoencoders, and the one by~\cite{liello2020}, where the authors incorporate propositional logic constraints on GANs for structured objects generation. %

\paragraph{Tabular Data Generation.} In recent years, various DGMs have been proposed to tackle the problem of tabular data synthesis. Many of these approaches are based on Generative Adversarial Networks (GANs), like TableGAN~\citep{park2018_tableGAN}, CTGAN~\citep{xu2019_CTGAN}, IT-GAN~\citep{lee2021invertible}, OCT-GAN~\citep{kim2021oct}, and PacGAN~\citep{lin2018pacgan}.
Other methods try to reduce the problems that often characterise GANs, such as mode collapse and unstable training, by introducing Variational AutoEncoders (VAEs) based models, see, e.g.,~\citep{xu2018semantic,srivastava2017_veegan,wan2017vae_imbalanced}. An alternative solution to such problems is given by the usage of denoising diffusion probabilistic models as done in~\citep{kotelnikov2023tabddpm} or~\citep{kim2023stasy}, where the authors designed a self-paced learning method and a fine-tuning approach to adapt the standard score-based generative modeling to the challenges of tabular data generation. Finally, GOGGLE~\citep{liu2022goggle} uses graph learning to infer relational structure from the data and use it to their advantage especially in data-scarce setting. Since synthetic tabular data are often used to replace the original dataset to preserve privacy in sensitive settings, a parallel line of research revolves around the development of DGMs with privacy guarantees. Examples of models that have such privacy guarantees are PATEGAN~\citep{yoon2020_privacy}  and DP-CGAN~\citep{rehaneh2020dpcgan_privacy}. 
\section{Conclusions}

In this paper, we have proposed \layer{} (\lsymb) the first-ever Neuro-symbolic AI layer able to automatically compile constraints expressed as QFLRA formulas into a neural layer and thus guarantee their satisfaction. This sort of work is really needed in the tabular data synthesis field, as our experimental analysis shows that Deep Generative Models (DGMs) very frequently generate datapoints that are not aligned with the background knowledge, with some extreme cases where all the datapoints are violating the constraints. \lsymb{} presents many desirable properties: (i) it can be seamlessly integrated into the topology of any neural network, (ii) it allows the backpropagation of the gradients, (iii) it performs all the computations in a single forward pass (i.e., there are no cycles), (iv) it optimally refines the original predictions and, last but not least, (v) it improves the performance of all the tested DGMs in terms of machine learning efficacy. Indeed, in our experimental analysis we got improvements for all datasets of up to
21.4\%, 20.5\%, and 20.9\%
in terms of F1-score, weighted F1-score, and Area Under the ROC Curve, respectively.

\newpage

\section{Ethics Statement}

The development and application of synthetic data generation techniques, particularly in tabular data, have the potential to significantly impact a wide range of sectors, including healthcare, finance, and social sciences. While our method, Disjunctive Refinement Layer (DRL), improves the quality and fidelity of generated data by ensuring alignment with user-specified constraints, there are ethical implications of synthetic data use. Firstly, there is the potential for misuse. Synthetic data may be seen as a substitute for real-world data, but it should not be viewed as a perfect replacement. Secondly, the use of synthetic data in automated decision-making systems poses risks for fairness and bias. While DRL allows for the specification of constraints that align with real-world domain knowledge, it is important that the user-specified constraints do not encode existing biases or discrimination. %

\section{Reproducibility Statement}
To ensure the reproducibility of our results, we included all the necessary details in the Appendix of the paper. The proofs of the Lemmas and Theorems can be found in Appendices~\ref{proof:lemma_single_var}, \ref{app:proof_soundness}, \ref{proof:lemma_extendibility}, \ref{app:proof_guaranteed_sat}, and \ref{app:proof_min_dist}, the detailed description of the datasets, the baseline models used  (together with their links), evaluation protocol for  the machine learning efficacy metric, and the chosen hyperparameters can be found in~\ref{appdx:datasets}, \ref{appdx:models}, \ref{appdx:stasy_evaluation_protocol}, and \ref{appdx:hyperparameter_search}.

\subsubsection*{Acknowledgments}
Mihaela C\u{a}t\u{a}lina Stoian is supported by the EPSRC under the grant EP/T517811/1. She has also received support for this work through the G-Research Women in Quant Finance Grant and St Hilda's College Travel for Research and Study Grant. 
We also acknowledge the use of the Advanced Research Computing (ARC) facilities of University of Oxford.

\bibliography{main}
\bibliographystyle{iclr2025_conference}

\newpage

\appendix

\section{\ktdt{\layer{} Visualizations}\label{app:visualizations}}

\begin{figure}
    \centering
    \begin{subfigure}[b]{0.3\textwidth}
        \includegraphics[trim={20cm 16cm 20cm 8cm},clip,width=\linewidth]{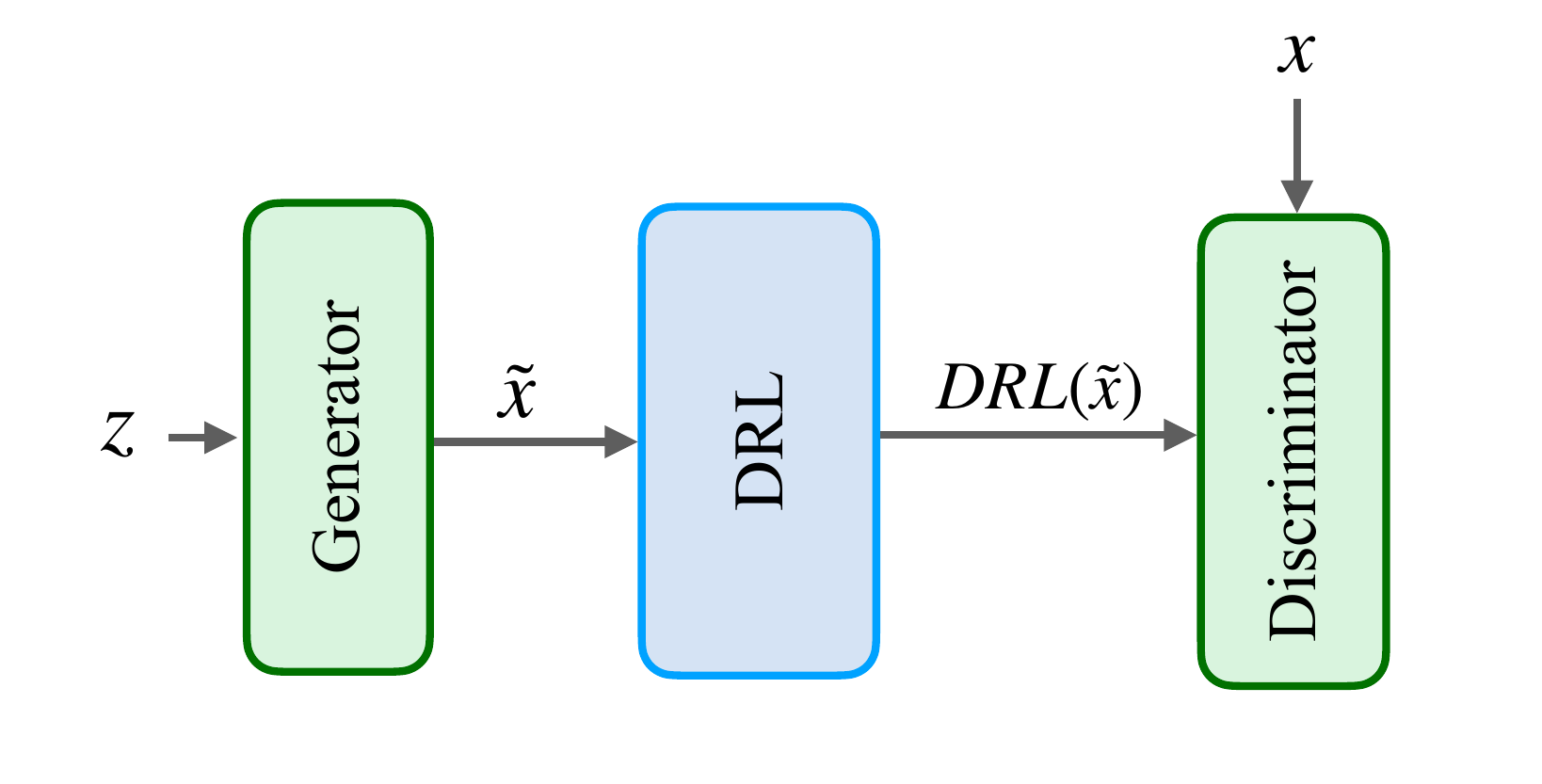}
        \caption{\ktdt{GAN-based models}}
        \label{fig:GAN+DRL}
    \end{subfigure}
    \begin{subfigure}[b]{0.3\textwidth}
        \includegraphics[trim={20cm 16cm 20cm 11cm},clip,width=\linewidth]{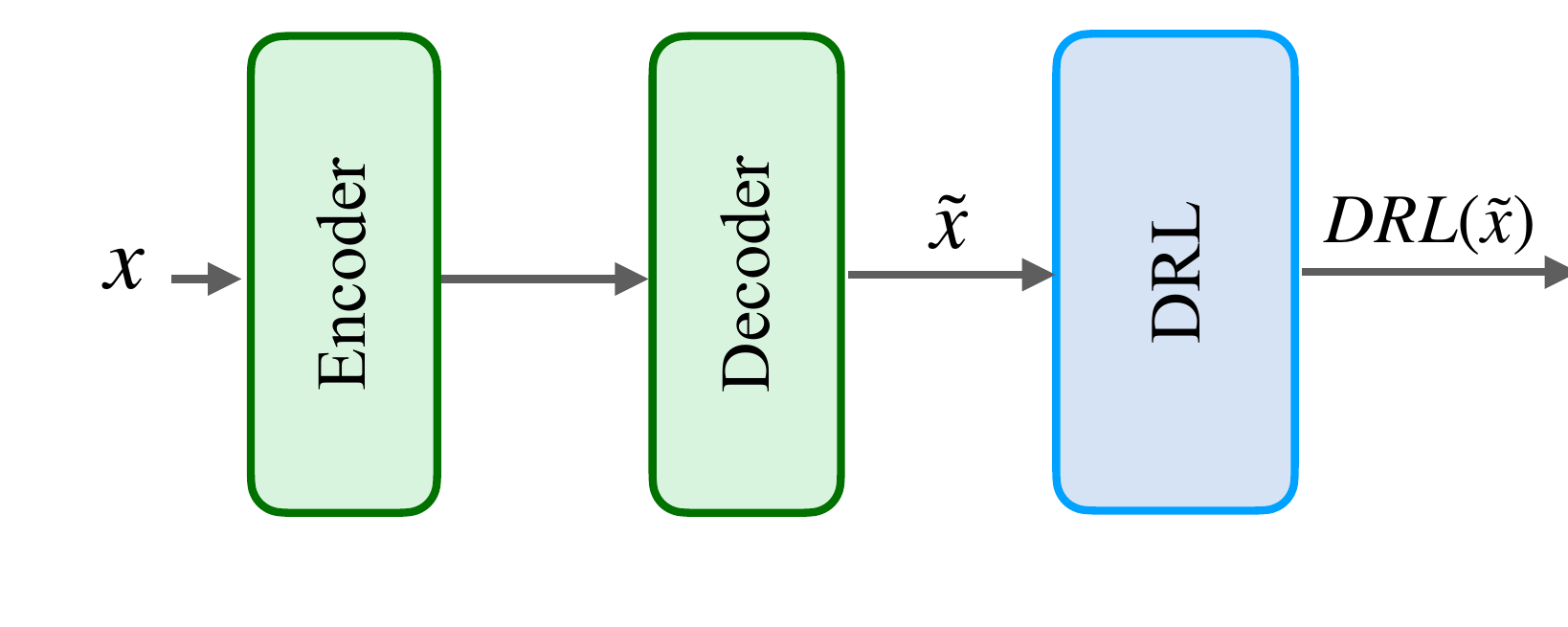}
        \caption{\ktdt{VAE-based models}}
        \label{fig:VAE+DRL}
    \end{subfigure}
    \begin{subfigure}[b]{0.3\textwidth}
        \includegraphics[trim={20cm 16cm 20cm 11cm},clip,width=\linewidth]{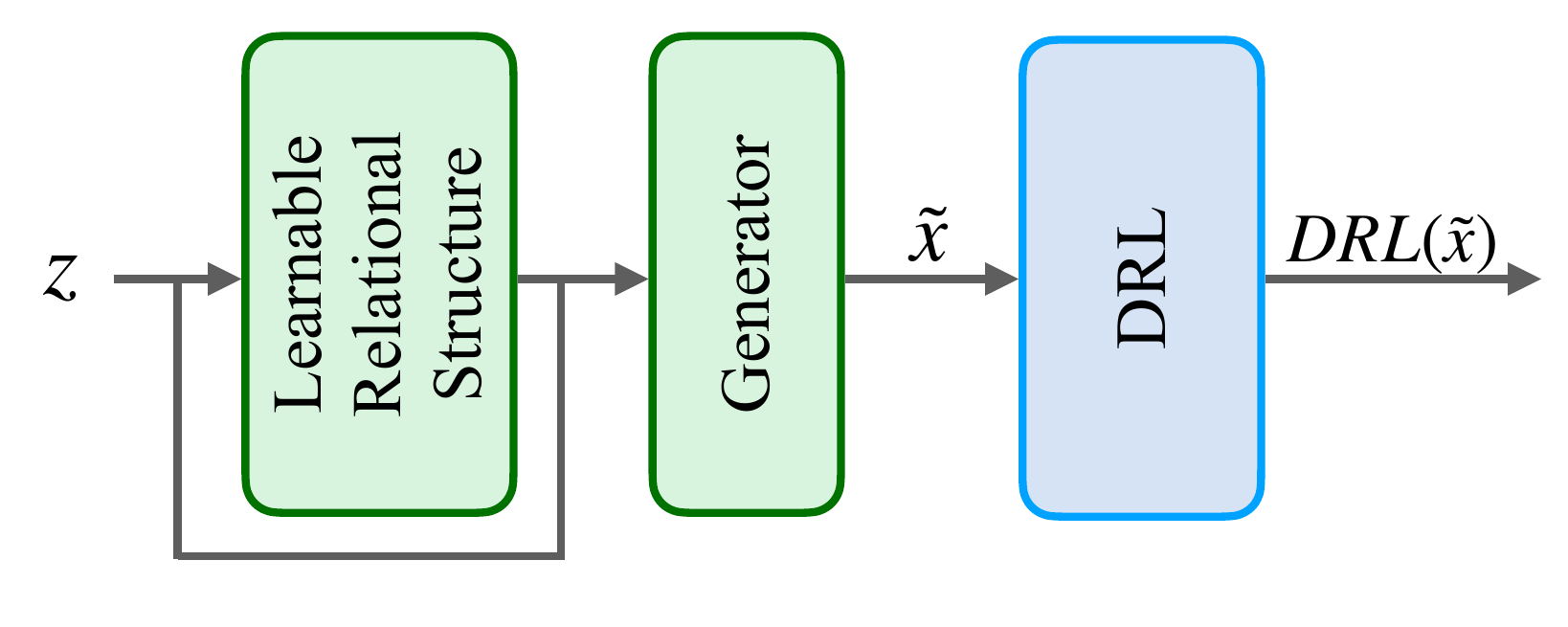}
        \caption{\ktdt{GOGGLE-like models}}
        \label{fig:GOGGLE+DRL}
    \end{subfigure}
    \caption{\ktdt{Visualisation of the considered types of DGMs  and how to add DRL in their topology.}}
    \label{fig:schemas_visualizations}
\end{figure}

\ktdt{In Figure~\ref{fig:schemas_visualizations} we give an overview on how to add \lsymb{} in the topology of the three types of models we considered. In all figures we indicate with $z$ a noise vector, with $x$ a real datapoint  from the original dataset, with $\sample$ a sample generated with the DGM, and with $\lsymb(\sample)$ the final sample obtained from $\lsymb{}$. Considering each of the Figures, we can see that: 
\begin{itemize}
    \item Figure~\ref{fig:GAN+DRL} shows that \lsymb{} needs to be added on top of the generator module in GAN-based models, 
    \item Figure~\ref{fig:VAE+DRL} shows that \lsymb{} needs to be added after the decoder module in VAE-based models, and 
    \item Figure~\ref{fig:GOGGLE+DRL} shows that \lsymb{} needs to be added after the generator module in GOGGLE-like models.
\end{itemize}
In general, we can see that \lsymb{} can be added in many different DGMs, and it simply needs to be added right after the sample $\sample$ is generated.}

\section{Proof of Lemma~\ref{lemma:single_var}}\label{proof:lemma_single_var}

\newtheorem*{lem}{Lemma}

\begin{lem}%
    Let $\Pi$ be a finite and satisfiable set of constraints in a single variable $x_i$. For every sample $\sample$, $\lsymb(\sample)$ satisfies $\Pi$ and is optimal wrt $\sample$.
\end{lem}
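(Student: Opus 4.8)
The plan is to reduce everything to a one-dimensional picture. Since $\Pi$ consists only of constraints over the single variable $x_i$, membership in $\Omega(\Pi)$ depends only on the coordinate $\sample_i$: after discarding any trivially satisfied constraint (which does not change $\Omega(\Pi)$), each remaining $\Psi$ contains a linear inequality with $w_i\neq 0$, so by \eqref{eq:left_right_bounds} and the observation following it, $\sample\in\Omega(\Pi)$ if and only if $\sample_i\in S$, where $S:=\bigcap_{\Psi\in\Pi}\big((-\infty,\leftb]\cup[\rightb,+\infty)\big)$. The first step is to record the shape of $S$: its complement is $\bigcup_{\Psi\in\Pi}(\leftb,\rightb)$, a finite union of open intervals, so $S$ is a finite union of pairwise-disjoint closed intervals $J_1,\dots,J_m$, ordered left to right and separated by nonempty gaps, and $S\neq\emptyset$ since $\Pi$ is satisfiable. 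If $\sample_i\in S$ then $\lsymb(\sample)=\sample$ satisfies $\Pi$ with distance $0$, so we are done; hence assume $\sample_i\notin S$ from now on.

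The crux is the structural claim that every finite left endpoint $a$ of some $J_k$ equals $\rightb$ for some $\Psi\in\Pi$, and symmetrically every finite right endpoint equals $\leftb$ for some $\Psi$. For the left-endpoint case: maximality of $J_k$ and finiteness of $a$ give $\epsilon>0$ with $(a-\epsilon,a)\subseteq S^c$; pick $v_n\uparrow a$ inside this interval. Each $v_n\in(\leftb,\rightb)$ for some $\Psi$, and since $\Pi$ is finite one constraint $\Psi_0$ serves infinitely many $n$, so along a subsequence $\leftpi{\Psi_0}<v_n<\rightpi{\Psi_0}$ and in the limit $\leftpi{\Psi_0}\le a\le\rightpi{\Psi_0}$. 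But $a\in S$ forces $a\le\leftpi{\Psi_0}$ or $a\ge\rightpi{\Psi_0}$; the first forces $a=\leftpi{\Psi_0}$ and hence $v_n>a$, contradicting $v_n<a$, so $a=\rightpi{\Psi_0}$, as claimed.

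With this in hand I would identify $\rightp(\sample)$ with the smallest element of $S$ strictly above $\sample_i$ (and $+\infty$ if there is none), and dually $\leftp(\sample)$ with the largest element of $S$ strictly below $\sample_i$: if $q\in S$ with $q>\sample_i$, then $q$ lies in some $J_k=[a,b]$ with $\sample_i<a$ — otherwise $a\le\sample_i\le q\le b$ would put $\sample_i\in S$ — and the structural claim gives $a=\rightb\in S$ with $\sample_i<a\le q$, so the minimum in the definition of $\rightp(\sample)$ is attained and equals $\min\{q\in S:q>\sample_i\}$. Because $S$ is closed and nonempty and $\sample_i\notin S$, at least one of $\leftp(\sample),\rightp(\sample)$ is finite, and the nearest point of $S$ to $\sample_i$ is whichever of the two is closer. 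A short check of the three-way definition of $\lsymb(\sample)_i$ — using $\min\emptyset=+\infty$ and $\max\emptyset=-\infty$ so that an infinite candidate always loses the strict comparison, and noting that ties are broken towards $\rightp(\sample)$, which is then also a nearest point — shows that $\lsymb(\sample)_i$ is precisely this nearest point; in particular $\lsymb(\sample)_i\in S$, so $\lsymb(\sample)\in\Omega(\Pi)$. Optimality follows at once: any $\sample'\in\Omega(\Pi)$ has $\sample'_i\in S$, whence $\|\sample-\sample'\|^2\ge(\sample_i-\sample'_i)^2\ge(\sample_i-\lsymb(\sample)_i)^2=\|\sample-\lsymb(\sample)\|^2$.

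I expect the structural claim, together with the step of matching the per-constraint extrema $\leftp(\sample),\rightp(\sample)$ to the geometric nearest points on either side of $\sample_i$, to be the only real work: it is the single place where the algebraic definitions must be reconciled with the possibly disconnected geometry of $\Omega(\Pi)$. The remaining ingredients — the case split according to whether $\sample_i$ lies below $S$, above $S$, or in an internal gap, and the final distance comparison — are routine.
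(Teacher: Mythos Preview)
Your argument is correct and covers the same ground as the paper's, but you organise it in the opposite direction and make explicit a step the paper only gestures at. The paper first argues satisfaction (asserting without proof that at least one of $\leftp(\sample),\rightp(\sample)$ is finite and noting both lie in $\Omega(\Pi)$ by construction), and then for optimality starts from a nearest point $v\in\Omega(\Pi)$, observes that the open segment between $v_i$ and $\sample_i$ misses $\Omega(\Pi)$, and concludes in one line that ``there must exist a constraint $\Psi$ such that $v_i=\leftb$'' (resp.\ $\rightb$), hence $v_i=\lsymb(\sample)_i$. You instead first describe $S$ as a finite disjoint union of closed intervals, prove carefully that every finite endpoint is some $\leftb$ or $\rightb$ (your structural claim --- precisely the content the paper leaves implicit, and proved by the same pigeonhole-on-$\Pi$ idea the paper would need), and then identify $\leftp(\sample),\rightp(\sample)$ directly with the nearest feasible points on each side of $\sample_i$. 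Your route buys rigour: you actually justify that at least one of the two candidates is finite and that the boundary value selected really is the extremal one in the defining set, both of which the paper takes for granted. The paper's route is shorter because it lets the nearest-point geometry do the work up front and only afterwards matches it to the constraint boundaries.
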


\begin{proof}\rm
    We first prove that for every sample $\sample$, $\lsymb(\sample)_i$ always satisfies $\Pi$, and then that for every sample $\sample$, $\lsymb(\sample)_i$ is the solution of $\Pi$ with minimal Euclidean distance from $\sample$.
    
    Suppose there exists a sample $\sample$ such that $\lsymb(\sample) \not \in \Omega(\Pi)$. This entails (i) that $\Pi \not = \emptyset$ and (ii) that $\sample \not\in \Omega(\Pi)$. Since $\Pi \not = \emptyset$ and $\Pi$ is satisfiable, $\leftp(\sample) \neq -\infty$ or $\rightp(\sample) \neq + \infty$, and $\lsymb(\sample)_i = \leftp(\sample)$ or $\lsymb(\sample)_i = \rightp(\sample)$. Since by definition  $\leftp(\sample)$ and $\rightp(\sample)$ satisfy $\Pi$ we reached a contradiction.

    Assume  
     $\sample \not \in \Omega(\Pi)$ (otherwise we would have again  $\lsymb(\sample) = \sample$ and the thesis would trivially hold). 
    Let $d$ be the minimum Euclidean distance between any point in $\Omega(\Pi)$ and $\sample$. Let $r$ and $l$ be the two samples with $r_k = l_k = \sample_k = \lsymb(\sample)_k$ when $k \not = i$ and $k \in \{1, \dots, D\}$, 
    $r_i = \sample_i + d$ and $l_i = \sample_i - d$. Either $r$ or $l$ or both belong to $\Omega(\Pi)$. Let $v$ be $r$ if $r \in \Omega(\Pi)$, and $l$ otherwise.
    By definition, $v \in \Omega(\Pi)$ and is optimal wrt $\sample$.
    Assume $v = l$.
     Then, from the optimality of $v$, we have that for every $v'$ with
    $v'_i \in (v_i, \sample_i + d)$, $v' \not \in \Omega(\Pi)$. Hence, there must exist a constraint $\Psi$ such that $v_i = \leftb$ and thus $v_i = \lsymb(\sample)_i$.
    Analogously for the case $v = r$.

\end{proof}

\section{Proof of Lemma~\ref{lemma:soundness}}\label{app:proof_soundness}
\begin{lem}%
    The CP resolution rule is sound: the premises entail the conclusion of the rule.
\end{lem}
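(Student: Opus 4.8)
The plan is to argue semantically. I would take an arbitrary sample $\sample$ that satisfies both premises and show that it satisfies the conclusion $\CPres_i(\Psi,\Psi')$; since $\sample$ is arbitrary this gives entailment. The argument splits into cases according to whether $\sample$ satisfies the ``leftover'' disjuncts $\Phi$ and $\Phi'$.

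First, if $\sample$ satisfies $\Phi$, or $\sample$ satisfies $\Phi'$, then we are immediately done, because both $\Phi$ and $\Phi'$ occur as disjuncts of the conclusion. So assume $\sample$ satisfies neither $\Phi$ nor $\Phi'$. Since $\sample$ satisfies the first premise $\bigvee_{j=1}^m(w'_j x_i + \varphi'_j \ge 0) \vee \Phi'$ but not $\Phi'$, there is an index $j \in \{1,\dots,m\}$ with $w'_j \sample_i + \varphi'_j(\sample) \ge 0$, where $\varphi'_j(\sample)$ denotes the real number obtained by evaluating the linear expression $\varphi'_j$ at $\sample$ (recall $x_i$ does not occur in $\varphi'_j$). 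Likewise, from the second premise and $\sample \notin \Omega(\Phi)$, there is an index $k \in \{1,\dots,n\}$ with $w_k \sample_i + \varphi_k(\sample) \ge 0$.

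Now I would use the sign hypotheses $w'_j < 0 < w_k$. Dividing $w_k \sample_i + \varphi_k(\sample) \ge 0$ by $w_k > 0$ gives $\sample_i \ge -\varphi_k(\sample)/w_k$, while dividing $w'_j \sample_i + \varphi'_j(\sample) \ge 0$ by $w'_j < 0$ \emph{reverses} the inequality and gives $\sample_i \le -\varphi'_j(\sample)/w'_j$. Chaining the two bounds,
\[
-\frac{\varphi_k(\sample)}{w_k} \;\le\; \sample_i \;\le\; -\frac{\varphi'_j(\sample)}{w'_j}, \qquad\text{hence}\qquad \frac{\varphi_k(\sample)}{w_k} - \frac{\varphi'_j(\sample)}{w'_j} \ge 0 .
\]
Thus $\sample$ satisfies the disjunct $(\varphi_k/w_k - \varphi'_j/w'_j \ge 0)$ of the conclusion for this particular pair $(j,k)$, so $\sample$ satisfies $\CPres_i(\Psi,\Psi')$, as required.

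This is essentially routine; the only point needing care is the sign flip when dividing by the negative coefficient $w'_j$, together with noting that $x_i$ does not occur in $\varphi_k$ or $\varphi'_j$, so $(\varphi_k/w_k - \varphi'_j/w'_j \ge 0)$ is a genuine linear inequality not mentioning the eliminated variable, and so a legitimate disjunct of the form (\ref{eq:inequality}). (One could alternatively obtain the rule by composing the standard propositional resolution rule with the cutting-planes addition rule of \citep{DBLP:journals/jsyml/Krajicek98}, but the direct semantic check above is shorter and self-contained.)
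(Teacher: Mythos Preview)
Your proof is correct and follows essentially the same semantic argument as the paper: assume $\sample$ satisfies both premises but neither $\Phi$ nor $\Phi'$, extract from each premise a satisfied inequality, divide by the respective (signed) coefficients to bound $\sample_i$ above and below, and chain these to exhibit a satisfied disjunct $(\varphi_k/w_k - \varphi'_j/w'_j \ge 0)$ in the conclusion. The paper phrases the bounding step via $\min$ and $\max$ over the disjuncts rather than fixing specific indices $j,k$ up front, but the content is identical.
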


\begin{proof}
        Consider the CP resolution rule (\ref{eq:res_rule}), reported below for simplicity:
$$
    \frac{\bigvee_{j=1}^m(w'_j x_i + \varphi'_j \ge 0) \vee \Phi' 
    \qquad 
    \bigvee_{k=1}^n(w_k x_i + \varphi_k \ge 0) \vee \Phi}
{\bigvee_{j=1}^m\bigvee_{k=1}^n({\varphi_k}/{w_k} - {\varphi'_j}/{w'_j} \ge 0) \vee \Phi \vee \Phi'}.
$$
with $w'_1,\ldots,w'_m < 0 < w_1,\ldots,w_n$ and $m,n\ge 1$.
We have to show that any model $\sample$ of the premises is also a model of the conclusion. Assuming $\sample$ satisfies the premises and not $(\Phi \vee \Phi')$ (otherwise the thesis trivially holds), it must be the case that: 
        $$
        \sample_i \ge \min_{k=1}^n - \sample(\varphi_k/w_k) \text{\qquad and \qquad} \sample_i \le \max_{j=1}^m - \sample(\varphi'_j/w_j),
        $$
        where, given a linear expression $\varphi$, $\sample(\varphi)$ is the application of $\sample$ to $\varphi$, i.e., the value obtained by replacing each variable $x_j$ with $\sample_j$ in $\varphi$.
        The above is possible if and only if 
        $$
        \min_{k=1}^n - \sample(\varphi_k/w_k) \le \max_{j=1}^m - \sample(\varphi'_j/w_j),
        $$ 
        i.e., there exist a pair $(j,k)$ such that $(- \sample(\varphi_k/w_k) \le - \sample(\varphi'_j/w_j))$, and hence the thesis.
\end{proof}

\section{Proof of Lemma~\ref{lemma:extendibility}}\label{proof:lemma_extendibility}

\begin{lem}%
    Let $\Pi$ be a set of constraints in the variables $x_1, \ldots, x_i$. $\Pi_i = \Pi$ and $\Pi_{i-1}$ are equisatisfiable, and each assignment to the variables $x_1, \ldots, x_{i-1}$ satisfying $\Pi_{i-1}$ can be extended in order to satisfy $\Pi_i$. 
\end{lem}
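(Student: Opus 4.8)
The plan is to split the statement into two halves: (i) every model of $\Pi_i$, restricted to $x_1,\dots,x_{i-1}$, satisfies $\Pi_{i-1}$ — which already gives ``$\Pi_i$ satisfiable $\Rightarrow$ $\Pi_{i-1}$ satisfiable''; and (ii) every assignment $\alpha$ to $x_1,\dots,x_{i-1}$ with $\alpha \models \Pi_{i-1}$ can be extended by a value of $x_i$ to a model of $\Pi_i$ — which gives the converse direction and is exactly the extendibility claim. Together, (i) and (ii) yield equisatisfiability. Half (i) is the easy, ``soundness'' direction: by~(\ref{eq:pi_i}), $\Pi_{i-1}$ is the union of the constraints of $\Pi_i$ not mentioning $x_i$ (trivially preserved under the restriction) and the resolvents $\CPres_i(\Psi,\Psi')$ with $\Psi\in\Pi^{\plusplus}_i$, $\Psi'\in\Pi^-_i$. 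Each $\Psi\in\Pi^{\plusplus}_i$ is entailed by $\Pi_i$, by an induction along $\Pi^0_i=\Pi^+_i\subseteq\Pi_i$ and $\Pi^{k+1}_i=\{\CPres_i(\Psi'',\Psi''')\mid\Psi''\in\Pi^k_i,\Psi'''\in\Pi^\pm_i\}$ using Lemma~\ref{lemma:soundness} and $\Pi^\pm_i\subseteq\Pi_i$ at each step; hence every $\CPres_i(\Psi,\Psi')$ occurring in $\Pi_{i-1}$ is also entailed by $\Pi_i$ (Lemma~\ref{lemma:soundness} again), and since $\Psi$ has only positive and $\Psi'$ only negative occurrences of $x_i$, the conclusion of the rule contains no $x_i$, so it survives the restriction. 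This settles half (i).

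For half (ii), fix $\alpha\models\Pi_{i-1}$ and substitute it into $\Pi_i$. The $x_i$-free constraints of $\Pi_i$ lie in $\Pi_{i-1}$ and are thus satisfied, so it remains to find $c\in\R$ making the constraints in $\Pi^+_i\cup\Pi^-_i\cup\Pi^\pm_i$ true. After substitution, each such constraint $\Psi$ is either \emph{vacuous} (some $x_i$-free or constant-true disjunct already holds under $\alpha$, so any $c$ works), or it demands $x_i\ge\hat r_\Psi$ (from its positive-coefficient disjuncts) and/or $x_i\le\hat l_\Psi$ (from its negative-coefficient ones), where $\hat r_\Psi,\hat l_\Psi$ are the boundaries of~(\ref{eq:left_right_bounds}) evaluated at $\alpha$. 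Set $R=\max\{\hat r_\Psi:\Psi\in\Pi^+_i\text{ non-vacuous}\}$ and $L=\min\{\hat l_{\Psi'}:\Psi'\in\Pi^-_i\text{ non-vacuous}\}$ (with $\max\emptyset=-\infty$, $\min\emptyset=+\infty$). Then $c$ is admissible iff $R\le c\le L$ and, for every non-vacuous $\Psi\in\Pi^\pm_i$, $c\ge\hat r_\Psi$ or $c\le\hat l_\Psi$; i.e.\ we must route each non-vacuous mixed constraint ``positively'' or ``negatively'' so that the induced lower bound $\max(R,\max_{\text{pos.\ routes}}\hat r_\Psi)$ does not exceed $\min(L,\min_{\text{neg.\ routes}}\hat l_\Psi)$.

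This is where $\Pi^{\plusplus}_i$ does its work, and where I expect the main obstacle to be. The key claim is that $\Pi^{\plusplus}_i$ faithfully enumerates the \emph{consistent} route assignments: an induction on the layers $\Pi^k_i$ shows that for each non-vacuous $\Psi^\flat\in\Pi^{\plusplus}_i$ the value $\hat r_{\Psi^\flat}$ equals the lower bound induced by some route assignment whose positive-route bound already does not exceed any of its negative-route bounds (the CP rule folds the negative part of a mixed constraint into a positive premise precisely so that the resolvent's right boundary carries this merged bound), and that routing all mixed constraints negatively — which forces nothing beyond $R$ — is captured by $\Pi^+_i=\Pi^0_i\subseteq\Pi^{\plusplus}_i$ (or by a vacuous element of $\Pi^{\plusplus}_i$). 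Granting this, pick the non-vacuous $\Psi^\flat\in\Pi^{\plusplus}_i$ minimising $\hat r_{\Psi^\flat}$ (or $\Psi^\flat\in\Pi^+_i$ achieving $R$, or any large value if $L=+\infty$), and pick a non-vacuous $\Psi'^*\in\Pi^-_i$ achieving $L$. Since $\CPres_i(\Psi^\flat,\Psi'^*)\in\Pi_{i-1}$, $\alpha$ satisfies it; as the $x_i$-free parts of $\Psi^\flat$ and $\Psi'^*$ are false under $\alpha$, some disjunct $\varphi_k/w_k-\varphi'_j/w'_j\ge 0$ of the resolvent holds, and, exactly as in the proof of Lemma~\ref{lemma:soundness}, this forces $\hat r_{\Psi^\flat}\le\hat l_{\Psi'^*}=L$. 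Then $c=\hat r_{\Psi^\flat}$ (respectively $c=R$) satisfies the $\Pi^+_i$ constraints since $c\ge R$, the $\Pi^-_i$ constraints since $c\le L$, and each mixed constraint along the route witnessed by $\Psi^\flat$, so $\alpha$ extends to a model of $\Pi_i$. Combining (i) and (ii) gives equisatisfiability, and (ii) is the extendibility statement. The delicate part throughout is the bookkeeping in this claim — showing that $\Pi^{\plusplus}_i$ correctly tracks the interaction among several mixed constraints, including the degenerate (vacuous, constant-true) disjuncts the CP rule can produce — which is also the source of the possibly non-polynomial size of $\Pi^{\plusplus}_i$.
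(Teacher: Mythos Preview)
Your soundness direction (i) is fine and matches the paper. The problem is in (ii): the constructive choice $c=\hat r_{\Psi^\flat}$ with $\Psi^\flat$ \emph{minimising} $\hat r_{\Psi^\flat}$ over the non-vacuous elements of $\Pi^{\plusplus}_i$ does not work, and your argument only covers the mixed constraints ``along the route witnessed by $\Psi^\flat$''---the remaining ones are never addressed. This is a genuine error, not a missing sentence. After substituting $\alpha$, take $\Pi^+_i=\{x_i\ge 2\}$, $\Pi^-_i=\{x_i\le 10\}$, and $\Pi^\pm_i=\{\Psi_1,\Psi_2\}$ with $\Psi_1=(x_i\le 1)\vee(x_i\ge 5)$ and $\Psi_2=(x_i\le 3)\vee(x_i\ge 7)$. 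Then $\CPres_i(x_i\ge 2,\Psi_2)$ is vacuous (it contains the true disjunct $2\le 3$), and the non-vacuous right boundaries appearing in $\Pi^{\plusplus}_i$ are $\{2,5,7\}$; every resolvent of these against $x_i\le 10$ is true, so $\Pi_{i-1}$ is satisfied. Your rule selects $c=2$, which violates $\Psi_1$. The actual feasible set is $[7,10]$. Your side remark that $\Pi^0_i$ captures the ``all-negative'' route as a \emph{consistent} route also fails here: routing both mixed constraints negatively gives upper bound $\min(10,1,3)=1<R=2$.

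The paper avoids this by arguing by contradiction. After substituting $\alpha$, if no value of $x_i$ works then there exist a non-vacuous $\Psi\in\Pi^+_i$ (equivalent under $\alpha$ to $x_i\ge r_i$) and $\Psi'\in\Pi^-_i$ (equivalent to $x_i\le l_i$), and either $l_i<r_i$, or $r_i\le l_i$ and the interval $[r_i,l_i]$ is covered by a chain of holes coming from mixed constraints $\Psi_1,\ldots,\Psi_n\in\Pi^\pm_i$ with $l_i^{\Psi_1}<r_i\le r_i^{\Psi_1}$, $l_i^{\Psi_{k+1}}<r_i^{\Psi_k}\le r_i^{\Psi_{k+1}}$ for $1\le k<n$, and $l_i<r_i^{\Psi_n}$. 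In the first case $\CPres_i(\Psi,\Psi')\in\Pi_{i-1}$ evaluates to $r_i\le l_i$, false under $\alpha$. In the second case one forms $\Upsilon_k=\CPres_i(\Upsilon_{k-1},\Psi_k)\in\Pi^{\plusplus}_i$ starting from $\Upsilon_0=\Psi$; the chain inequalities kill the constant disjuncts, so each $\Upsilon_k$ is equivalent to $x_i\ge r_i^{\Psi_k}$, and then $\CPres_i(\Upsilon_n,\Psi')\in\Pi_{i-1}$ evaluates to $r_i^{\Psi_n}\le l_i$, again false under $\alpha$. The point is that the infeasibility itself determines the chain; one never has to guess which element of $\Pi^{\plusplus}_i$ to use. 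If you want to keep a constructive flavour, the correct fix is greedy---start at $c=R$ and, while some mixed constraint's hole contains $c$, jump $c$ to that hole's right endpoint---which is exactly the chain the paper traces; the proof that this process terminates at a value $\le L$ is then the paper's contradiction argument read in reverse.
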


\begin{proof}

    Clearly, given the soundness of the CP resolution rule, if $\Pi_i$ is satisfiable, then also $\Pi_{i-1}$ is satisfiable (each constraint in $\Pi_{i-1}$ and not in $\Pi_i$ is entailed by $\Pi_i$). 
    
    It remains to show that if $\sample^{:i}$ is an assignment to the variables $x_1,\ldots,x_{i-1}$ satisfying $\Pi_{i-1}$, the set of constraints $\sample^{:i}(\Pi_i)$ is satisfiable. Similarly to the notation used in the proof of lemma~\ref{lemma:soundness} in Appendix~\ref{app:proof_soundness}, given a set of constraints $\Pi$, the expression $\sample^{:i}(\Pi)$ denotes the set of constraints in the variable $x_i$ obtained by substituting each variable $x_j$ ($j < i$) with the corresponding value $\sample^{:i}_j$ in the constraints in $\Pi$.

    Assume $\sample^{:i}(\Pi_i)$ is not satisfiable. Then, there exist 
two constraints $\Psi$ and $\Psi'$ in $\sample^{:i}(\Pi_i)$ equivalent to $(x_i \ge r_i)$ and $(x_i \le l_i)$, respectively, and 
    \begin{enumerate}
        \item either $l_i < r_i$, 
        \item or $l_i \ge r_i$ and there exists $n \ge 1$ constraints $\{\Psi_1,\Psi_2,\ldots,\Psi_n\}$ in $\sample^{:i}(\Pi_i)$ with each $\Psi_j$ equivalent to
        $(x_i \le l^{\Psi_j}_i) \vee (x_i \ge r^{\Psi_j}_i)$ and $l^{\Psi_1}_i,l^{\Psi_2}_i,\ldots,l^{\Psi_n}_i, r^{\Psi_1}_i, r^{\Psi_2}_i,\ldots,r^{\Psi_n}_i$ 
        such that $l^{\Psi_1}_i < r_i \le r^{\Psi_1}_i$, $l^{\Psi_2}_i < r^{\Psi_1}_i \le r^{\Psi_2}_i$, \ldots, $l^{\Psi_n}_i < r^{\Psi_{n-1}}_i \le l_i < r^{\Psi_n}_i$ and thus $l^{\Psi_1}_i < r_i \le l_i < r^{\Psi_n}_i$. 
    \end{enumerate}
    However, $l_i < r_i$ is not possible because $\CPres_i(\Psi,\Psi')$ belongs to $\sample^{:i}(\Pi_{i-1})$
    and is equivalent to $(r_i \le l_i)$. 
    Regarding the second case, 
    $\sample^i(\Pi_i^\plusplus)$ contains the constraints ( $\equiv$ denotes logical equivalence)
    \begin{gather*}
    \Upsilon_1 = \CPres_i(\Psi,\Psi_1) \equiv (x_i \ge r^{\Psi_1}_i) \vee (r_i \le l_i^{\Psi_1})  \equiv  x_i \ge r^{\Psi_1}_i,\\
    \Upsilon_2 = \CPres_i(\Upsilon_1,\Psi_2) \equiv (x_i \ge r^{\Psi_2}_i) \vee (r_i^{\Psi_1} \le l_i^{\Psi_2}) \equiv x_i \ge r^{\Psi_2}_i, \\
    \ldots, \\
    \Upsilon_n = \CPres_i(\Upsilon_{n-1},\Psi_n)  \equiv x_i \ge r^{\Psi_n}_i,
    \end{gather*}
    and thus $\sample^{:i}(\Pi_{i-1})$ contains  $\CPres_i(\Upsilon_{n},\Psi') \equiv r^{\Psi_n}_i \le l_i$, thus reaching a contradiction.
\end{proof}

\section{Proof of Theorem~\ref{th:guaranteed_sat}}\label{app:proof_guaranteed_sat}

\newtheorem*{thm}{Theorem}

\begin{thm}%
    Let $\Pi$ be a finite and satisfiable set of constraints. For any sample $\sample$ and variable ordering, the corresponding sample $\lsymb(\sample)$ satisfies $\Pi$.
\end{thm}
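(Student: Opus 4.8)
The plan is to prove the theorem by induction on the position $i$ in the variable ordering $x_1;x_2;\ldots;x_D$, keeping track of the fact that the prefix of coordinates produced so far is consistent with the compiled set of constraints. Recall the structure of $\lsymb$: it first compiles $\Pi=\Pi_D$ into the chain $\Pi_{D-1},\Pi_{D-2},\ldots,\Pi_0$ through (\ref{eq:pi_i}), each $\Pi_{i-1}$ being a finite set of constraints in the variables $x_1,\ldots,x_{i-1}$; then, for $i=1,\ldots,D$ in increasing order, it forms $\Pisi$ by substituting $x_1,\ldots,x_{i-1}$ with the already-computed values $\lsymb(\sample)_1,\ldots,\lsymb(\sample)_{i-1}$ into $\Pi_i$, and sets $\lsymb(\sample)_i$ by applying the single-variable rule (\ref{eq:DRL}) to $\Pisi$.

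The invariant I would carry through the induction is: \emph{for every $i\in\{0,1,\ldots,D\}$ the partial assignment $\lsymb(\sample)_1,\ldots,\lsymb(\sample)_i$ satisfies $\Pi_i$} (viewed as a set of constraints over $x_1,\ldots,x_i$). The base case $i=0$ holds because $\Pi$ is satisfiable, so by iterating the equisatisfiability part of Lemma~\ref{lemma:extendibility} ($\Pi_i$ and $\Pi_{i-1}$ are equisatisfiable for all $i$) we get that $\Pi_0$ is satisfiable; since no variable occurs in $\Pi_0$, each of its constraints is a disjunction of constant inequalities $b\ge 0$ at least one of which holds, hence the empty assignment satisfies $\Pi_0$ vacuously.

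For the inductive step, assume $\lsymb(\sample)_1,\ldots,\lsymb(\sample)_{i-1}$ satisfies $\Pi_{i-1}$. By the extendibility part of Lemma~\ref{lemma:extendibility}, this assignment to $x_1,\ldots,x_{i-1}$ extends to an assignment of $x_1,\ldots,x_i$ satisfying $\Pi_i$; substituting the fixed prefix into $\Pi_i$ turns this into the statement that $\Pisi$ (which is exactly $\sample^{:i}(\Pi_i)$ with $\sample^{:i}_j=\lsymb(\sample)_j$) is satisfiable, i.e.\ $\Omega(\Pisi)\neq\emptyset$. Now $\Pisi$ is a finite, satisfiable set of constraints in the single variable $x_i$, so Lemma~\ref{lemma:single_var} applies with the role of ``$\Pi$'' played by $\Pisi$: for the input value $\sample_i$, the value $\lsymb(\sample)_i$ returned by (\ref{eq:DRL}) belongs to $\Omega(\Pisi)$ (Lemma~\ref{lemma:single_var} in fact gives more, namely optimality, but only satisfaction is needed here). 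Therefore $\lsymb(\sample)_1,\ldots,\lsymb(\sample)_i$ satisfies $\Pi_i$, which closes the induction. Instantiating the invariant at $i=D$ yields that $\lsymb(\sample)$ satisfies $\Pi_D=\Pi$; since no property of the particular variable ordering was used, this holds for every ordering.

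I expect the only genuine subtlety to be the bridge between the ``extends to satisfy'' formulation of Lemma~\ref{lemma:extendibility} and the ``$\Omega(\Pisi)\neq\emptyset$'' hypothesis required by Lemma~\ref{lemma:single_var}: one has to observe that substituting the fixed prefix $\lsymb(\sample)_1,\ldots,\lsymb(\sample)_{i-1}$ into $\Pi_i$ is precisely the operation $\sample^{:i}(\cdot)$ from the proof of Lemma~\ref{lemma:extendibility}, so that extendibility of the prefix to a full model of $\Pi_i$ coincides with nonemptiness of the one-dimensional feasible region $\Omega(\Pisi)$. Everything else is bookkeeping, as the substantive work — soundness of CP resolution (Lemma~\ref{lemma:soundness}), correctness of the compilation step (Lemma~\ref{lemma:extendibility}), and correctness of the single-variable rule (Lemma~\ref{lemma:single_var}) — is already available, so no new construction or estimate is needed.
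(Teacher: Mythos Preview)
Your proposal is correct and follows essentially the same approach as the paper: both reduce the claim to Lemma~\ref{lemma:extendibility} (to obtain satisfiability of $\Pisi$) followed by Lemma~\ref{lemma:single_var} (to place $\lsymb(\sample)_i$ inside $\Omega(\Pisi)$). The only difference is cosmetic --- the paper inducts on the number of variables occurring in $\Pi$, whereas you induct on the index $i$ along the ordering and carry the explicit invariant that the computed prefix satisfies $\Pi_i$.
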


\begin{proof}
    We prove the statement by induction over the number $n$ of variables appearing in $\Pi$. 

    Let $n=0$. In this case $\Pi$  is satisfied by any sample $\sample$, and $\lsymb(\sample) = \sample$.

    Let $n > 1$.     Let $x_i$ be the last variable in the ordering occurring in $\Pi$.
    Since $\Pi_{i-1}$ contains $(n-1)$ variables, $\lsymb(\sample)$ satisfies $\Pi_{i-1}$ by the inductive hypothesis. From Lemma~\ref{lemma:extendibility} we know that $\Pisi$ is satisfiable, and hence the thesis follows from Lemma~\ref{lemma:single_var}.
\end{proof}

\section{Proof of Theorem~\ref{th:min_dist}}\label{app:proof_min_dist}
\begin{thm}%
    Let $\Pi$ be a finite and satisfiable set of constraints. For any sample $\sample$ and variable ordering, the corresponding sample $\lsymb(\sample)$ is optimal wrt $\sample$, $\Pi$ and the variable ordering.
\end{thm}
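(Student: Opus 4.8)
The plan is to establish optimality one coordinate at a time, reducing the claim at index $i$ to the single-variable optimality already proved in Lemma~\ref{lemma:single_var}, applied to the instantiated constraint set $\Pisi$. Fix $i \in \{1,\dots,D\}$. By the definition of optimality we must rule out the existence of a sample $\sample' \in \Omega(\Pi)$ that agrees with $\lsymb(\sample)$ on $x_1,\dots,x_{i-1}$ and for which $|\sample_i - \sample'_i| < |\sample_i - \lsymb(\sample)_i|$; the coordinates $x_{i+1},\dots,x_D$ of such a $\sample'$ are unconstrained, so the argument really takes place in the one-dimensional ``slice'' obtained by freezing $x_1,\dots,x_{i-1}$ to the values $\lsymb(\sample)_1,\dots,\lsymb(\sample)_{i-1}$.

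First I would record that $\Omega(\Pi)=\Omega(\Pi_D)\subseteq\Omega(\Pi_{D-1})\subseteq\cdots\subseteq\Omega(\Pi_0)$: by construction $\Pi_{i-1}$ keeps only the constraints of $\Pi_i$ not containing $x_i$ and adds resolvents $\CPres_i(\Psi,\Psi')$ of constraints derivable from $\Pi_i$, each of which is entailed by $\Pi_i$ via Lemma~\ref{lemma:soundness}; moreover, again by construction, $\Pi_i$ involves only the variables $x_1,\dots,x_i$. Then, using Theorem~\ref{th:guaranteed_sat}, $\lsymb(\sample)\in\Omega(\Pi)\subseteq\Omega(\Pi_{i-1})$, so the partial assignment $\lsymb(\sample)_1,\dots,\lsymb(\sample)_{i-1}$ satisfies $\Pi_{i-1}$; Lemma~\ref{lemma:extendibility} then tells us this partial assignment extends to a satisfying assignment of $\Pi_i$, which is precisely the statement that $\Pisi$ (the set of constraints in the single variable $x_i$ obtained by substituting those values into $\Pi_i$) is satisfiable. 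Lemma~\ref{lemma:single_var} now applies to $\Pisi$ and gives that $\lsymb(\sample)_i$, computed by (\ref{eq:DRL}), lies in $\Omega(\Pisi)$ and is at minimal Euclidean distance from $\sample_i$, i.e.\ $|\sample_i-\lsymb(\sample)_i|\le|\sample_i-t|$ for every $t\in\Omega(\Pisi)$.

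To finish, suppose $\sample'\in\Omega(\Pi)$ with $\sample'_j=\lsymb(\sample)_j$ for every $j<i$. Since $\Omega(\Pi)\subseteq\Omega(\Pi_i)$ we get $\sample'\in\Omega(\Pi_i)$, and because $\Pi_i$ mentions only $x_1,\dots,x_i$ and $\sample'$ agrees with $\lsymb(\sample)$ on the first $i-1$ of these, substituting shows $\sample'_i\in\Omega(\Pisi)$. By the previous step $|\sample_i-\lsymb(\sample)_i|\le|\sample_i-\sample'_i|$, which contradicts $|\sample_i-\sample'_i|<|\sample_i-\lsymb(\sample)_i|$. As $i$ was arbitrary, $\lsymb(\sample)$ is optimal w.r.t.\ $\sample$, $\Pi$ and the variable ordering.

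Almost all of the content is inherited from Lemmas~\ref{lemma:single_var} and~\ref{lemma:extendibility}, so the theorem is essentially an exercise in gluing; the point I expect to require the most care is the bookkeeping that identifies ``$\sample'$ lies in $\Omega(\Pi_i)$ and extends the frozen prefix'' with ``$\sample'_i\in\Omega(\Pisi)$'', together with the use of Lemma~\ref{lemma:extendibility} in the forward direction to certify that $\Pisi$ is non-empty, so that Lemma~\ref{lemma:single_var} is genuinely applicable and not vacuous. One should also make sure that ``$\lsymb(\sample)$ satisfies $\Pi_{i-1}$'' is obtained cleanly from Theorem~\ref{th:guaranteed_sat} rather than from an unstated inductive hypothesis, so as not to create a circular dependence; if that feels delicate, the safest route is simply to rerun the whole argument by induction on the number of variables occurring in $\Pi$, mirroring the structure of the proof of Theorem~\ref{th:guaranteed_sat}.
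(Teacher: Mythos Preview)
Your proposal is correct and follows the same conceptual route as the paper: reduce the claim at coordinate $i$ to the single-variable optimality of Lemma~\ref{lemma:single_var} applied to $\Pisi$, after certifying satisfiability of $\Pisi$ via Lemma~\ref{lemma:extendibility}. The one structural difference is that the paper sets this up as an induction on the number of variables occurring in $\Pi$ (exactly the alternative you sketch in your final sentence), whereas your primary argument is direct and invokes Theorem~\ref{th:guaranteed_sat} to secure that the prefix $\lsymb(\sample)_1,\dots,\lsymb(\sample)_{i-1}$ satisfies $\Pi_{i-1}$; since Theorem~\ref{th:guaranteed_sat} is established independently, there is no circularity. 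Your version is in fact more explicit than the paper's, which simply writes ``hence the thesis'' after citing the two lemmas and does not spell out the step you flagged as requiring care---namely, that any $\sample'\in\Omega(\Pi)$ agreeing with $\lsymb(\sample)$ on the first $i-1$ coordinates must have $\sample'_i\in\Omega(\Pisi)$ via the inclusion $\Omega(\Pi)\subseteq\Omega(\Pi_i)$.
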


\begin{proof}
    We prove the statement by induction over the number $n$ of variables occurring in $\Pi$.

    Let $n=0$. In this case $\Pi$ is satisfied by any sample $\sample$, and $\lsymb(\sample)= \sample$.

    Let $n > 1$.     Let $x_i$ be the last variable in the ordering occurring in $\Pi$.
Since $\Pi_{i-1}$ contains only the variables $x_1, x_2, \ldots x_{i-1}$, we know that for any sample $\sample$, $\lsymb(\sample)$ is optimal with respect to $\sample$, $\Pi_{i-1}$ and the variable ordering for the inductive hypothesis. From Lemma~\ref{lemma:extendibility} we know that  $\Pisi$ is satisfiable. From Lemma~\ref{lemma:single_var} we know that for every $\sample$, $\lsymb(\sample)$ is optimal wrt to $\sample$ and $\Pisi$, and hence the thesis. 
\end{proof}

\section{Datasets}
\label{appdx:datasets}
Below we provide a brief description for each dataset and the links to the pages where they can be downloaded.
\begin{itemize}
    \item URL\footnote{Link to dataset: {https://data.mendeley.com/datasets/c2gw7fy2j4/2}} ~\citep{hannousse2021towards} is used to perform webpage phishing detection with features describing statistical properties of the URL itself as well as the content of the page. 
    \item \cervical \footnote{Link to dataset:{https://www.kaggle.com/datasets/ranzeet013/cervical-cancer-dataset/data}} is used to identify individuals at high risk of cervical cancer from features describing the patients' demographic and medical history, including age, sexual behavior, contraceptive use, and various medical test results.
    \item LCLD\footnote{Link to dataset: https://figshare.com/s/84ae808ce6999fafd192} is used to predict whether the debt lent is unlikely to be collected from features related to the loan as well as client history. In particular, we use the feature-engineered dataset from~\cite{simonetto2022}, inspired from the LendingClub loan data. 
    \item HELOC\footnote{Link to dataset: https://huggingface.co/datasets/mstz/heloc} is a dataset from FICO used to predict whether customers will repay their credit lines within 2 years from features related to the credit line and the client's history.
    \item \house \footnote{Link to dataset: {https://www.kaggle.com/datasets/harlfoxem/housesalesprediction/data}} was used to predict the prices of houses in King County (USA) and contains data collected from May 2014 to May 2015. The features describe various features of the sold houses, including the date of sale, house prices, the number of bedrooms and bathrooms, square footage, condition, grade, year built, and location, among others.
\end{itemize}

For each dataset above, Table~\ref{tab:app_dataset} shows the number of samples in the train, validation and test partitions, along with the number of features and the number of constraints. 
\ktdt{Regarding the constraints, they were already included in some of the original datasets. This is true for URL, Heloc and LCLD. Regarding CCS and House, we manually annotated the constraints using our background knowledge about the problem, then we checked whether the data were compliant with our constraints and finally we retained only those constraints that were satisfied by all the datapoints. Simple examples of these constraints (from CCS) state simple facts like ``if the feature capturing the number of cigarettes packs per day is greater than 0 then the feature capturing whether the patient smokes or not should be equal to 1'' or ``the feature representing the age should always be higher than the age at the first intercourse''.}
\begin{table}[t]
\centering
\caption{Dataset statistics.}
\begin{tabular}{lrrrrrl}
\toprule
Dataset & \# Train & \# Val & \# Test &  \# Features & \# Constraints & Task (\# classes)                   \\ 
\midrule
\phishing{}     & 7K     & 2K   & 2K  & 64 &         18  &  Binary classification                  \\
\cervical{}    &  1K & 0.02K & 0.15K&      36 &  12 & Binary classification            \\
\lcld{}    & 494K   & 199K & 431K & 29 &    16      & Binary classification                   \\
\heloc{}   & 8K     & 2K   & 0.2K  & 24 &    12     & Binary classification               \\
\house{}    &  17K &0.5K & 4K & 20 &      13  & Regression                           \\ 
\bottomrule
\end{tabular}
\label{tab:app_dataset}
\end{table}

\section{Models}
\label{appdx:models}

Below we give a brief description of each of the models used in our experimental analysis:
\begin{itemize}
    \item WGAN~\citep{Arjovsky2017_WGAN}: it is a GAN based model which has been trained by using the Wasserstein distance as a loss, which improves the stability of learning.
    \item TableGAN~\citep{park2018_tableGAN}: is a GAN-based model designed for generating realistic tabular data. It uses a convolutional neural network (CNN) as a discriminator to better the capture dependencies among features.
    \item CTGAN~\citep{xu2019_CTGAN}: is again a GAN-based model which uses a conditional generator to model feature distributions and applies a mode-specific normalisation technique to improve the generation of imbalanced categorical data.
    \item TVAE~\citep{xu2019_CTGAN}: uses a variational autoencoder architecture to capture both continuous and categorical feature distributions, learning a probabilistic latent space representation of the data. By optimising the evidence lower bound, it balances reconstruction accuracy and regularisation.
    \item GOGGLE~\citep{liu2022goggle}: uses a VAE framework combined with a graph neural network, which allows the model to learn complex feature relationships by representing the data as a graph, where each node corresponds to a feature, and edges capture dependencies between features.
\end{itemize}

\section{Efficacy Evaluation Protocol}
\label{appdx:stasy_evaluation_protocol}

In order to evaluate the efficacy of the models, we closely follow the protocol outlined in \citep{kim2023stasy}.
For clarity, we describe the protocol below.

First, we generate a synthetic dataset and split it into training, validation, and test sets, maintaining the same proportions as in the real dataset.
Next, we conduct a hyperparameter search using the synthetic training set to train various classifiers and regressors.
Specifically, for the classification datasets (i.e., \phishing{}, \cervical{}, \lcld{}, \heloc{}), we use the following classifiers: 
AdaBoost~\citep{adaboost},
Decision Tree~\citep{decision_tree}, 
Logistic Regression~\citep{logistic_regression} 
Multi-layer Perceptron (MLP)~\citep{MLP}, 
Random Forest~\citep{random_forest}, 
and XGBoost~\citep{xgboost}.
For the regression dataset (i.e., \house{}), we use: Linear Regression, MLP, Random Forest regressors, and XGBoost.
Across all classifiers and regressors, we use the same hyperparameter settings as those in Table 26 of~\citep{kim2023stasy}. Then, based on the F1-score obtained on the real validation set, we select the best hyperparameter configuration.
As a last step, we evaluate the selected models on the real test set and average the performance across all classifiers/regressors. 
The results for all models are reported using three metrics (i.e., F1-score, weighted F1-score, and the Area Under the ROC Curve) for the classification datasets, and two metrics (i.e., Mean Absolute Error and Root Mean Square Error) for the regression dataset. 

We run the entire process five times for each model, then compute the average results for each metric individually across the repetitions.

\begin{table}[t]
\caption{Best hyperparameter settings used for DGMs (and also for DGMs+LL and DGMs+\lsymb) in our experiments.}
  \footnotesize
 \centering
\label{tab:best_hyperparam_configs}
\begin{center}
\begin{tabular}{@{}lllllll@{}}
\toprule
Model/Dataset   & Hyperparameter    & \phishing{}             & \cervical     & \lcld{}        & \heloc{}          & \house         \\ \midrule
\multirow{6}{*}{WGAN} & Batch size &  510&  256 & 510& 510& 510 \\
& Optimiser & \adam{}&  \adam{} & \adam{} & \adam{}&\adam{}   \\
& Learning rate & 0.001& 0.001  & 0.001&0.001& 0.0002\\
& Epochs &  150 & 1250 &15&150& 100 \\
& Discriminator iters & 5 & 5 & 5 & 5 &1\\
& LL Ordering &  Corr &KDE  & Rnd&Corr & Rnd \\
& DRL Ordering &  Rnd &Rnd  & KDE&Rnd & KDE \\
\cmidrule{1-1}

\multirow{5}{*}{TableGAN} & Batch size &  128&  128 & 510&128&256 \\
& Optimiser & \adam{} & \adam{} &\adam{}&\adam{}&\adam{} \\
& Learning rate & 0.001 & 0.0002 &0.01 &0.001& 0.0001\\
& Epochs &  300& 2000 &20&200& 50\\
& LL Ordering & Corr & KDE & KDE&Corr & KDE\\
& DRL Ordering & KDE & KDE & KDE&Corr & Rnd\\
\cmidrule{1-1}

\multirow{5}{*}{CTGAN}  & Batch size & 500&  70   &500&500& 500 \\
& Optimiser & \adam{}&\adam{} &\adam{}&\adam{}& \adam{} \\
& Learning rate &0.0002& 0.001 &0.0002&0.0002& 0.0002 \\
& Epochs &   150 &  1000 &20 &500&  150  \\
& LL Ordering & KDE & KDE& KDE& Corr&  Rnd \\
& DRL Ordering & KDE & Corr& Corr& Corr&  Rnd \\
\cmidrule{1-1}

\multirow{5}{*}{TVAE}  & Batch size & 70& 70 &  500&500& 70 \\
& Optimiser & \adam{}&  \adam{} &\adam{}&\adam{}& \adam{} \\
& Learning rate &0.0002 & 0.0001  & 0.00001 & 0.000005 & 0.0002 \\
& Epochs &   150 & 1500  & 40 & 150& 150\\
& Loss factor & 2 &  2  &4 & 2 &  2 \\
& LL Ordering &KDE &Rnd & Corr&KDE &  Rnd\\
& DRL Ordering &Rnd &Rnd & Corr&Corr &  KDE\\

\cmidrule{1-1}

\multirow{5}{*}{GOGGLE}  & Batch size & 128& 32&  128&64& 64 \\
& Optimiser & \adam{}&\adam{}&\adam{}&\adam{}&\adam{}\\
& Learning rate &0.005& 0.01& 0.001 & 0.001 & 0.01 \\
& Epochs &   1000 & 500 & 60 & 1000& 400 \\
& Threshold & 0.1 & 0.2 &0.2&0.1&0.2\\
& Patience & 50 & 50 & 50 & 50 & 50  \\
& LL Ordering & KDE & Rnd & Rnd & Rnd & Rnd\\
& DRL Ordering & Rnd & Rnd & Rnd & Rnd & Rnd\\

\bottomrule
\end{tabular}
\end{center}
\end{table}

\section{Hyperparameter Search}
\label{appdx:hyperparameter_search}

We carried out an extensive hyperparameter search 
to identify the  optimal configurations for each DGM.
We selected these configurations based on the efficacy performance: for the classification datasets (i.e., \phishing, \cervical, \lcld, and \heloc), we used the average of the F1-score, weighted F1-score, and Area Under the ROC Curve (AUC), whereas for the regression dataset (i.e., \house{}), we used the average of the Mean Absolute Error and Root Mean Square Error.

For clarity, we describe the hyperparameter search space below, for each of the considered models.
For the GOGGLE model, we adopted the same optimiser and learning rate settings as in \citep{liu2022goggle}. Specifically, we used the \adam{} optimizer~\citep{adam} with five learning rates: ${ \num{1e-3}, \num{5e-3}, \num{1e-2} }$.
Additionally, we experimented with a set of values for the threshold parameter: $\{ \num{1e-1}, \num{2e-1}\}$.
For the \tvae{} model, \adam{} was used again, but with a different set of five learning rates: ${ \num{5e-6}, \num{1e-5}, \num{1e-4}, \num{2e-4}, \num{1e-3} }$.
For the other three models (i.e., \wgan, \tablegan, and \ctgan), we tested three different optimisers: \adam{}, \rmsprop{}~\citep{rmsprop}, and \sgd{}~\citep{sgd}, each paired with its own set of learning rates, as follows:
\begin{itemize}
    \item WGAN: \adam{}: $\{ \num{1e-4}, \num{2e-4}, \num{1e-3}\}$, \rmsprop{}: $\{\num{5e-5}, \num{1e-4}, \num{1e-3}\}$, \sgd{}: $\{\num{1e-4}, \num{1e-3}\}$
    \item TableGAN: \adam{}: $\{\num{5e-5}, \num{1e-4}, \num{2e-4}, \num{1e-3}, \num{1e-2}\}$, \rmsprop{}: $\{\num{1e-4}, \num{2e-4}, \num{1e-3}\}$, \sgd{}: $\{\num{1e-4}, \num{1e-3}\}$
    \item CTGAN: \adam{}: $\{\num{5e-5}, \num{1e-4}, \num{2e-4}, \num{1e-3}\}$, \rmsprop{}: $\{\num{1e-4}, \num{2e-4}, \num{1e-3}\}$, \sgd{}: $\{\num{1e-4}, \num{1e-3}\}$
\end{itemize}

Additionally, we explored various batch sizes for each model:
\begin{itemize}
    \item WGAN: $\{64, 128, 256, 510\}$
    \item TableGAN: $\{128, 256, 510\}$
    \item CTGAN and TVAE: $\{70, 280, 500\}$
    \item GOGGLE: $\{32, 64, 128\}$
\end{itemize}

We did not separately tune our DGM+\lsymb{} models, nor the DGM+LL models.
However, for each of the  DGM+\lsymb{} and DGM+LL models, we ran three versions corresponding to three different ways of ordering the variables  that decided the order in which the layers (LL and DRL) change the values of the features.
More precisely, we tried: (i) a random (Rnd) ordering, (ii) the correlation (Corr)-based ordering. (iii) the Kernel Density Estimation (KDE)-based ordering.
The last two orderings are proposed by~\cite{stoian2024}, and are defined in the Appendix of their paper.
For completeness, we also define them here.
The Corr-based ordering is computed as follows: for each feature, the absolute difference is taken between the pairwise feature correlations (with respect to all other features) of samples generated by the unconstrained DGM and the real data. 
The features are then ranked in ascending order based on these scores. This ensures that features with the most similar correlations between the generated and real data are prioritised by DRL (and LL) first.
The KDE-based ordering is computed by first fitting a Kernel Density Estimator (KDE) on the real data and estimating the log-likelihood for each real and synthetic sample. In a discrete setting, regardless of the variable domains, two marginal probability mass functions are approximated for each variable using the real and synthetic data, respectively. The variables are then ranked by computing the Kullback-Leibler divergence between these two and sorting the results in ascending order.
We then selected the best ordering separately for each  DGM+\lsymb{} and DGM+LL model. In  Table~\ref{tab:best_hyperparam_configs}, we report the optimal hyperparameter configurations, which we use in all experiments presented in our paper that involve the DGM, DGM+LL and DGM+\lsymb{} models.

\section{Synthetic data quality.}
\label{appdx:uncons_vs_DRL}

\paragraph{Background knowledge alignment.} 
\label{appdx:uncons_vs_DRL_constr_violation}
\begin{table}[t]
\caption{Constraint violation rate (\cvr) for each unconstrained DGM model and each dataset.}
 \centering
\footnotesize
\begin{tabular}{@{}llrrrrrr@{}}
\toprule
Constraint Type & Model/Dataset & \phishing{}            & \cervical{}          & \lcld{}        & \heloc{}           & \house{}                    \\ \midrule
\multirow{6}{*}{Linear} & WGAN          & 17.9\msmall{\pm5.0}&15.0\msmall{\pm5.6} & 28.5\msmall{\pm16.3} & 69.1\msmall{\pm8.6} & 100.0\msmall{\pm0.0}\\
& TableGAN      & 5.4\msmall{\pm1.4}  & 14.5\msmall{\pm3.6}& 19.1\msmall{\pm3.7} & 45.6\msmall{\pm16.3}& 100.0\msmall{\pm0.0} \\
& CTGAN         & 3.8\msmall{\pm1.3}          &    56.1\msmall{\pm7.5}       &    1.9\msmall{\pm1.1}      & 55.8\msmall{\pm10.2}   &      100.0\msmall{\pm0.0}     \\
& \tvae & 3.0\msmall{\pm0.7} &  8.6\msmall{\pm1.9} & 3.9\msmall{\pm0.5}& 44.8\msmall{\pm1.0} & 100.0\msmall{\pm0.0}\\
&GOGGLE&47.3\msmall{\pm6.9} & 42.5\msmall{\pm3.9} & \rebuttal{16.5\msmall{\pm13.2}} & 47.3\msmall{\pm6.9} & 99.9\msmall{\pm0.1}\\
\cmidrule{2-2}
& All models + \lsymb     & \textbf{0.0\msmall{\pm0.0}}   & \textbf{0.0\msmall{\pm0.0}}   & \textbf{0.0\msmall{\pm0.0}}    & \textbf{0.0\msmall{\pm0.0}} & \textbf{0.0 \msmall{\pm0.0}}    \\ 
\cmidrule{1-2}
\multirow{6}{*}{Disjunctive} & WGAN & 8.6\msmall{\pm1.7} & 38.3\msmall{\pm10.0}& 26.8\msmall{\pm5.2}& 47.3\msmall{\pm15.5} & 74.2\msmall{\pm4.4}\\
& TableGAN & 3.9\msmall{\pm1.4} &56.1\msmall{\pm14.4} & 16.0\msmall{\pm3.5}& 33.8\msmall{\pm14.7} &73.7\msmall{\pm14.8} \\
& CTGAN & 6.3\msmall{\pm1.0} &58.8\msmall{\pm6.4} & 5.3\msmall{\pm0.8} & 1.7\msmall{\pm1.4} & 42.8\msmall{\pm23.1}\\
& \tvae & 7.6\msmall{\pm0.7} & 11.8\msmall{\pm0.7}& 6.6\msmall{\pm0.5} & 0.0\msmall{\pm0.0} & 52.7\msmall{\pm24.7}\\
&GOGGLE&2.0\msmall{\pm2.8} & 37.1\msmall{\pm15.8} & \rebuttal{65.3\msmall{\pm15.8}} & 35.2\msmall{\pm4.2} & 20.4\msmall{\pm20.5}\\
\cmidrule{2-2}
& All models + \lsymb    & \textbf{0.0\msmall{\pm0.0}}   & \textbf{0.0\msmall{\pm0.0}}   & \textbf{0.0\msmall{\pm0.0}}    & \textbf{0.0\msmall{\pm0.0}}   & \textbf{0.0 \msmall{\pm0.0}}  \\  
\bottomrule
\end{tabular}
\label{tab:cons-sat-breakdown_uncons_vs_DRL}
\end{table}

\begin{table}[t]
\caption{Samplewise constraints violation coverage (\scvc) for each unconstrained DGM model and each dataset.}
 \centering
\footnotesize
\begin{tabular}{@{}llrrrrrr@{}}
\toprule
Constraint Type & Model/Dataset & \phishing{}            & \cervical{}          & \lcld{}        & \heloc{}           & \house{}                    \\ \midrule
\multirow{6}{*}{Linear} & WGAN          &  2.2\msmall{\pm0.6} & 7.9\msmall{\pm2.9} & 15.1\msmall{\pm9.4} & 14.3\msmall{\pm2.5} & 50.0\msmall{\pm0.0}\\
& TableGAN      & 0.7\msmall{\pm0.2} & 7.5\msmall{\pm1.9} & 9.8\msmall{\pm1.9} & 9.0\msmall{\pm3.3} &50.0\msmall{\pm0.0} \\
& CTGAN         &  0.5\msmall{\pm0.2}& 32.1\msmall{\pm5.6}&  1.0\msmall{\pm0.5} & 9.9\msmall{\pm2.8} &50.0\msmall{\pm0.0}\\
& \tvae &  0.4\msmall{\pm0.1} & 4.4\msmall{\pm1.0} & 2.0\msmall{\pm0.3} &7.4\msmall{\pm0.2} & 50.0\msmall{\pm0.0}\\
&GOGGLE& 17.2\msmall{\pm4.9} & 22.0\msmall{\pm2.3} & \rebuttal{8.6\msmall{\pm7.1}} & 17.2\msmall{\pm4.9} & 50.0\msmall{\pm0.0} \\
\cmidrule{2-2}
& All models + \lsymb     & \textbf{0.0\msmall{\pm0.0}}   & \textbf{0.0\msmall{\pm0.0}}   & \textbf{0.0\msmall{\pm0.0}}    & \textbf{0.0\msmall{\pm0.0}} & \textbf{0.0 \msmall{\pm0.0}}    \\ 
\cmidrule{1-2}
\multirow{6}{*}{Disjunctive} & 
WGAN          & 0.9\msmall{\pm0.2} & 5.3\msmall{\pm1.7} & 2.2\msmall{\pm0.5} & 11.5\msmall{\pm4.4} & 7.0\msmall{\pm0.4} \\
& TableGAN      &  0.4\msmall{\pm0.1} & 6.8\msmall{\pm1.9} & 1.5\msmall{\pm0.3} & 7.9\msmall{\pm3.7} & 6.7\msmall{\pm1.3} \\
& CTGAN         &  0.6\msmall{\pm0.1} & 8.0\msmall{\pm1.2} & 0.4\msmall{\pm0.1} & 0.3\msmall{\pm0.3} & 3.9\msmall{\pm2.1}  \\
& \tvae & 0.8\msmall{\pm0.1} & 1.3\msmall{\pm0.1} & 0.5\msmall{\pm0.0} & 0.0\msmall{\pm0.0} & 4.8\msmall{\pm2.3} \\
&GOGGLE& 0.2\msmall{\pm0.3} & 5.0\msmall{\pm2.1} & \rebuttal{10.0\msmall{\pm3.4}} & 7.1\msmall{\pm0.9} & 1.9\msmall{\pm1.9} \\
\cmidrule{2-2}
& All models + \lsymb    & \textbf{0.0\msmall{\pm0.0}}   & \textbf{0.0\msmall{\pm0.0}}   & \textbf{0.0\msmall{\pm0.0}}    & \textbf{0.0\msmall{\pm0.0}}   & \textbf{0.0 \msmall{\pm0.0}}  \\  
\bottomrule
\end{tabular}
\label{tab:scvc-breakdown_uncons_vs_DRL}
\end{table}

\begin{table}[t]
\caption{Constraints violation coverage (\cvc) for each unconstrained DGM model and each dataset.}
 \centering
\footnotesize
\begin{tabular}{@{}llrrrrrr@{}}
\toprule
Constraint Type & Model/Dataset & \phishing{}            & \cervical{}          & \lcld{}        & \heloc{}           & \house{}                    \\ \midrule
\multirow{6}{*}{Linear} & WGAN &  45.0\msmall{\pm5.0} & 100.0\msmall{\pm0.0} & 100.0\msmall{\pm0.0} & 100.0\msmall{\pm0.0} & 100.0\msmall{\pm0.0}\\
& TableGAN &  34.0\msmall{\pm4.2} & 100.0\msmall{\pm0.0} & 100.0\msmall{\pm0.0} & 100.0\msmall{\pm0.0} & 100.0\msmall{\pm0.0} \\
& CTGAN & 17.5\msmall{\pm4.3} & 100.0\msmall{\pm0.0} & 100.0\msmall{\pm0.0} & 100.0\msmall{\pm0.0} & 100.0\msmall{\pm0.0} \\
& \tvae & 12.5\msmall{\pm0.0} & 100.0\msmall{\pm0.0} & 100.0\msmall{\pm0.0} & 99.4\msmall{\pm1.3} & 100.0\msmall{\pm0.0}\\
&GOGGLE& 100.0\msmall{\pm0.0} &  100.0\msmall{\pm0.0} &  100.0\msmall{\pm0.0}&  100.0\msmall{\pm0.0} & 100.0\msmall{\pm0.0}\\
\cmidrule{2-2}
& All models + \lsymb     & \textbf{0.0\msmall{\pm0.0}}   & \textbf{0.0\msmall{\pm0.0}}   & \textbf{0.0\msmall{\pm0.0}}    & \textbf{0.0\msmall{\pm0.0}} & \textbf{0.0 \msmall{\pm0.0}}    \\ 
\cmidrule{1-2}
\multirow{6}{*}{Disjunctive} & WGAN &50.0\msmall{\pm0.0} & 40.0\msmall{\pm0.0} & 28.0\msmall{\pm1.3} & 80.0\msmall{\pm0.0} & 34.9\msmall{\pm2.4}  \\
& TableGAN & 53.6\msmall{\pm3.8} & 40.0\msmall{\pm0.0} & 51.3\msmall{\pm3.9} & 80.0\msmall{\pm0.0} & 36.4\msmall{\pm0.0}  \\
& CTGAN &  56.4\msmall{\pm5.0} & 40.0\msmall{\pm0.0} & 22.3\msmall{\pm2.2} & 55.2\msmall{\pm6.6} & 34.9\msmall{\pm3.3}\\
& \tvae & 55.2\msmall{\pm4.6} & 40.0\msmall{\pm0.0} & 20.3\msmall{\pm3.4} & 24.8\msmall{\pm11.1} & 36.0\msmall{\pm0.8}\\
&GOGGLE&28.4\msmall{\pm21.3} & 40.0\msmall{\pm0.0} & \rebuttal{49.1\msmall{\pm8.1}} & 48.8\msmall{\pm15.6} & 33.3\msmall{\pm5.2}\\
\cmidrule{2-2}
& All models + \lsymb    & \textbf{0.0\msmall{\pm0.0}}   & \textbf{0.0\msmall{\pm0.0}}   & \textbf{0.0\msmall{\pm0.0}}    & \textbf{0.0\msmall{\pm0.0}}   & \textbf{0.0 \msmall{\pm0.0}}  \\  
\bottomrule
\end{tabular}
\label{tab:cvc-breakdown_uncons_vs_DRL}
\end{table}

In addition to the \cvr{} metric reported in the main paper, we also compute the samplewise constraints violation coverage (\scvc{}) and the constraint violation coverage (\cvc{}), where \scvc{} indicates the average percentage of constraints violated per sample, across all samples, and \cvc{} represents the percentage of constraints violated at least once by the samples.
In Tables~\ref{tab:cons-sat-breakdown_uncons_vs_DRL}, ~\ref{tab:scvc-breakdown_uncons_vs_DRL}, and ~\ref{tab:cvc-breakdown_uncons_vs_DRL}, we provide the \cvr, \scvc, and \cvc{}, respectively, for the unconstrained DGM models, according to the two possible types: linear and disjunctive constraints. 
Indeed, as constraints expressed as linear inequalities are a special case of the QFLRA constraints, we have also partitioned the constraints between those that can be captured as linear inequalities and those that cannot. Then, we checked how much each of the two partitions  
contributes to the high \cvr{} results, and 
 found that in 13 (resp. 6) out of 25 cases, the  \cvr{} was greater than 25\% (resp. 50\%) on the constraints presenting disjunctions alone.

\paragraph{Efficacy.}
\label{appdx:efficiency_wrt_uncons}

Tables~\ref{tab:utility_and_stddevs_f1_uncons_vs_DRL}, \ref{tab:utility_and_stddevs_weightedf1_uncons_vs_DRL}, \ref{tab:utility_and_stddevs_auc_uncons_vs_DRL} show the efficacy, along with the standard deviations from the mean, for each unconstrained DGM model and the corresponding DGM+\lsymb{} models on every classification dataset, using the \fone, \wfone, and \auc{} metrics, respectively.
Similarly, in Table~\ref{tab:kc_efficiency_uncons_vs_DRL}, we report \mae{} and \rmse{} for each unconstrained DGM model and the corresponding DGM+\lsymb{} models on the regression dataset (i.e., \house).\\
\begin{table}[ht] 
 \caption{Efficacy comparison between the unconstrained DGM models and their \lsymb{} counterparts in terms of \fone. The performance, along with the standard deviation, is reported for each classification dataset.}
 \centering
\footnotesize
  \vskip-.5cm

\begin{tabular}{lrrrr}\\
\toprule
& URL & CC & LCLD & HELOC\\
\midrule 
WGAN & 0.794\msmall{\pm{0.041}} & 0.303\msmall{\pm{0.060}} & 0.139\msmall{\pm{0.053}} & 0.665\msmall{\pm{0.050}} \\

{\ktdt{WGAN+RS}} & \ktdt{0.792\msmall{\pm{0.031}}}	& \ktdt{0.051\msmall{\pm{0.037}}}	& \ktdt{0.156\msmall{\pm{0.074}}}	& \ktdt{0.628\msmall{\pm{0.043}}}
 \\
{WGAN+DRL} & \textbf{0.800}\msmall{\pm{0.011}} & \textbf{0.313}\msmall{\pm{0.127}} & \textbf{0.197}\msmall{\pm{0.060}} & \textbf{0.721}\msmall{\pm{0.027}} \\

\cmidrule{1-1}

TableGAN & 0.562\msmall{\pm{0.051}} & \textbf{0.196}\msmall{\pm{0.037}} & 0.259\msmall{\pm{0.011}} & 0.593\msmall{\pm{0.058}} \\

\ktdt{TableGAN+RS}& \ktdt{0.544\msmall{\pm{0.071}}}	 & \ktdt{0.138\msmall{\pm{0.025}}}	 & \ktdt{0.251\msmall{\pm{0.020}}}	& \ktdt{0.568\msmall{\pm{0.077}}} \\
{TableGAN+DRL} & \textbf{0.619}\msmall{\pm{0.046}} & 0.163\msmall{\pm{0.079}} & \textbf{0.269}\msmall{\pm{0.025}} & \textbf{0.628}\msmall{\pm{0.083}} \\

\cmidrule{1-1}
CTGAN & 0.822\msmall{\pm{0.017}} & 0.145\msmall{\pm{0.040}} & 0.247\msmall{\pm{0.087}} & 0.736\msmall{\pm{0.035}} \\

\ktdt{CTGAN+RS}	&\ktdt{0.817\msmall{\pm{0.008}}}	& \ktdt{0.086\msmall{\pm{0.016}}}	& \ktdt{0.201\msmall{\pm{0.066}}}	& \ktdt{0.706\msmall{\pm{0.014}}}\\ 
{CTGAN+DRL} & \textbf{0.836}\msmall{\pm{0.004}} & \textbf{0.288}\msmall{\pm{0.116}} & \textbf{0.288}\msmall{\pm{0.013}} & \textbf{0.744}\msmall{\pm{0.020}} \\

\cmidrule{1-1}
TVAE & 0.810\msmall{\pm{0.008}} & 0.325\msmall{\pm{0.190}} & 0.185\msmall{\pm{0.021}} & 0.717\msmall{\pm{0.013}} \\

\ktdt{TVAE+RS}	& \ktdt{0.788\msmall{\pm{0.023}}}	 & \ktdt{0.024\msmall{\pm{0.011}}}	& \ktdt{\textbf{0.237}\msmall{\pm{0.018}}} &	\ktdt{0.420\msmall{\pm{0.007}}} \\
{TVAE+DRL} & \textbf{0.835}\msmall{\pm{0.009}} & \textbf{0.467}\msmall{\pm{0.100}} & 0.189\msmall{\pm{0.022}} & \textbf{0.731}\msmall{\pm{0.009}} \\

\cmidrule{1-1}

GOGGLE & 0.622\msmall{\pm{0.094}} & 0.039\msmall{\pm{0.016}} & \rebuttal{0.248\msmall{\pm{0.156}}}  & 0.596\msmall{\pm{0.072}} \\

\ktdt{GOGGLE+RS}	&  \ktdt{0.608\msmall{\pm{0.098}}} & \ktdt{0.047\msmall{\pm{0.024}}} & \ktdt{0.235\msmall{\pm{0.149}}} & \ktdt{0.577\msmall{\pm{0.093}}} \\

{GOGGLE+DRL} & \textbf{0.720}\msmall{\pm{0.086}} & \textbf{0.253}\msmall{\pm{0.144}} & \rebuttal{\textbf{0.298}\msmall{\pm{0.153}}} & \textbf{0.698}\msmall{\pm{0.023}} \\

\bottomrule
\end{tabular}
\label{tab:utility_and_stddevs_f1_uncons_vs_DRL}
\end{table}

\begin{table}[ht] 
 \caption{Efficacy comparison between the unconstrained DGM models and their \lsymb{} counterparts in terms of \wfone{}. The performance, along with the standard deviation, is reported for each classification dataset.}
 \centering
\footnotesize
  \vskip-.5cm

\begin{tabular}{lrrrr}\\
\toprule
& URL & CC & LCLD & HELOC\\
\midrule
WGAN & 0.796\msmall{\pm{0.026}} & 0.330\msmall{\pm{0.057}} & 0.296\msmall{\pm{0.037}} & 0.648\msmall{\pm{0.027}} \\

\ktdt{WGAN+RS} & 	\ktdt{0.794\msmall{\pm{0.020}}}	& \ktdt{0.088\msmall{\pm{0.035}}}	& \ktdt{0.312\msmall{\pm{0.056}}} &	\ktdt{0.617\msmall{\pm{0.018}}} \\
{WGAN+DRL} & \textbf{0.801}\msmall{\pm{0.014}} & \textbf{0.340}\msmall{\pm{0.122}} & \textbf{0.339}\msmall{\pm{0.049}} & \textbf{0.652}\msmall{\pm{0.036}} \\

\cmidrule{1-1}

TableGAN & 0.659\msmall{\pm{0.035}} & \textbf{0.228}\msmall{\pm{0.035}} & 0.393\msmall{\pm{0.010}} & 0.615\msmall{\pm{0.030}} \\

\ktdt{TableGAN+RS} & 	\ktdt{0.648\msmall{\pm{0.046}}}	 & \ktdt{0.172\msmall{\pm{0.024}}}	& \ktdt{0.389\msmall{\pm{0.015}}} &	\ktdt{0.599\msmall{\pm{0.036}}}\\
{TableGAN+DRL} & \textbf{0.693}\msmall{\pm{0.028}} & 0.196\msmall{\pm{0.076}} & \textbf{0.401}\msmall{\pm{0.018}} & \textbf{0.628}\msmall{\pm{0.038}} \\

\cmidrule{1-1}
CTGAN & 0.799\msmall{\pm{0.033}} & 0.159\msmall{\pm{0.042}} & 0.379\msmall{\pm{0.061}} & 0.675\msmall{\pm{0.015}} \\

\ktdt{CTGAN+RS} & 	\ktdt{0.795\msmall{\pm{0.014}}}	 & \ktdt{0.095\msmall{\pm{0.019}}}	& \ktdt{0.342\msmall{\pm{0.054}}}	& \ktdt{0.650\msmall{\pm{0.009}}} \\
{CTGAN+DRL} & \textbf{0.815}\msmall{\pm{0.011}} & \textbf{0.308}\msmall{\pm{0.118}} & \textbf{0.409}\msmall{\pm{0.007}} & \textbf{0.680}\msmall{\pm{0.011}} \\

\cmidrule{1-1}

TVAE & 0.802\msmall{\pm{0.012}} & 0.351\msmall{\pm{0.182}} & \textbf{0.330}\msmall{\pm{0.016}} & 0.686\msmall{\pm{0.004}} \\

\ktdt{TVAE+RS} & 	\ktdt{0.778\msmall{\pm{0.026}}}	 & \ktdt{0.061\msmall{\pm{0.010}}}	& \ktdt{0.283\msmall{\pm{0.007}}}	 & \ktdt{0.465\msmall{\pm{0.001}}} \\
{TVAE+DRL} & \textbf{0.832}\msmall{\pm{0.014}} & \textbf{0.487}\msmall{\pm{0.096}} & \textbf{0.330}\msmall{\pm{0.014}} & \textbf{0.694}\msmall{\pm{0.006}} \\

\cmidrule{1-1}

GOGGLE & 0.648\msmall{\pm{0.074}} & 0.076\msmall{\pm{0.015}} & \rebuttal{0.296\msmall{\pm{0.066}}}  & 0.566\msmall{\pm{0.050}} \\

\ktdt{GOGGLE+RS} & \ktdt{0.639\msmall{\pm{0.068}}} & \ktdt{0.084\msmall{\pm{0.023}}} & \ktdt{\textbf{0.322}\msmall{\pm{0.065}}} & \ktdt{0.549\msmall{\pm{0.051}}}\\

{GOGGLE+DRL} & \textbf{0.673}\msmall{\pm{0.039}} & \textbf{0.281}\msmall{\pm{0.139}} & \rebuttal{0.310\msmall{\pm{0.057}}} & \textbf{0.636}\msmall{\pm{0.020}} \\

\bottomrule
\end{tabular}
\label{tab:utility_and_stddevs_weightedf1_uncons_vs_DRL}
\end{table}

\begin{table}[ht] 
 \caption{Efficacy comparison between the unconstrained DGM models and their \lsymb{} counterparts in terms of \auc. The performance, along with the standard deviation, is reported for each classification dataset.} 
 \centering
\footnotesize
  \vskip-.5cm

\begin{tabular}{lrrrr}\\
\toprule
& URL & CC & LCLD & HELOC\\
\midrule
WGAN & 0.870\msmall{\pm{0.012}} & 0.814\msmall{\pm{0.072}} & 0.605\msmall{\pm{0.010}} & \textbf{0.717}\msmall{\pm{0.021}} \\

\ktdt{WGAN+RS} & 	\ktdt{0.862\msmall{\pm{0.019}}}	 & \ktdt{0.570\msmall{\pm{0.070}}}	 & \ktdt{0.611\msmall{\pm{0.022}}}	& \ktdt{0.685\msmall{\pm{0.023}}} \\

{WGAN+DRL} & \textbf{0.875}\msmall{\pm{0.007}} & \textbf{0.885}\msmall{\pm{0.050}} & \textbf{0.623}\msmall{\pm{0.023}} & \textbf{0.717}\msmall{\pm{0.029}} \\

\cmidrule{1-1}

TableGAN & 0.843\msmall{\pm{0.020}} & \textbf{0.802}\msmall{\pm{0.044}} & 0.655\msmall{\pm{0.011}} & 0.707\msmall{\pm{0.007}} \\

\ktdt{TableGAN+RS} & 	\ktdt{0.854\msmall{\pm{0.016}}}	& \ktdt{0.682\msmall{\pm{0.086}}}	& \ktdt{0.653\msmall{\pm{0.010}}}	& \ktdt{0.685\msmall{\pm{0.020}}} \\
{TableGAN+DRL} & \textbf{0.865}\msmall{\pm{0.022}} & 0.742\msmall{\pm{0.096}} & \textbf{0.657}\msmall{\pm{0.007}} & \textbf{0.709}\msmall{\pm{0.011}} \\

\cmidrule{1-1}
CTGAN & 0.859\msmall{\pm{0.040}} & 0.914\msmall{\pm{0.039}} & \textbf{0.651}\msmall{\pm{0.020}} & 0.744\msmall{\pm{0.009}} \\

\ktdt{CTGAN+RS} & 	\ktdt{0.856\msmall{\pm{0.010}}}	& \ktdt{0.515\msmall{\pm{0.083}}}	& \ktdt{0.615\msmall{\pm{0.031}}}	& \ktdt{0.706\msmall{\pm{0.014}}} \\
{CTGAN+DRL} & \textbf{0.883}\msmall{\pm{0.009}} & \textbf{0.955}\msmall{\pm{0.022}} & 0.643\msmall{\pm{0.019}} & \textbf{0.745}\msmall{\pm{0.008}} \\

\cmidrule{1-1}
TVAE & 0.863\msmall{\pm{0.011}} & 0.858\msmall{\pm{0.100}} & 0.631\msmall{\pm{0.004}} & 0.750\msmall{\pm{0.004}} \\

\ktdt{TVAE+RS} & 	\ktdt{0.846\msmall{\pm{0.024}}} & \ktdt{0.522\msmall{\pm{0.040}}}	& \ktdt{0.480\msmall{\pm{0.008}}}	 & \ktdt{0.497\msmall{\pm{0.006}}} \\
{TVAE+DRL} & \textbf{0.893}\msmall{\pm{0.010}} & \textbf{0.926}\msmall{\pm{0.039}} & \textbf{0.635}\msmall{\pm{0.002}} & \textbf{0.752}\msmall{\pm{0.003}} \\
\cmidrule{1-1}

GOGGLE & 0.742\msmall{\pm{0.071}} & 0.549\msmall{\pm{0.051}} & \rebuttal{0.551\msmall{\pm{0.034}}}& 0.600\msmall{\pm{0.056}} \\

\ktdt{GOGGLE+RS} &  \ktdt{0.727\msmall{\pm{0.060}}} & \ktdt{0.571\msmall{\pm{0.077}}} & \ktdt{0.532\msmall{\pm{0.049}}} & \ktdt{0.592\msmall{\pm{0.052}}}\\

{GOGGLE+DRL} & \textbf{0.747}\msmall{\pm{0.029}} & \textbf{0.758}\msmall{\pm{0.091}} & \rebuttal{\textbf{0.563}\msmall{\pm{0.027}}} & \textbf{0.691}\msmall{\pm{0.039}} \\

\bottomrule
\end{tabular}
\label{tab:utility_and_stddevs_auc_uncons_vs_DRL}
\end{table}

\begin{table}[ht!]
\footnotesize
\setlength{\tabcolsep}{5pt}
\caption{Efficacy performance comparison between DGM and DGM+\lsymb{} models trained on \house{}, using \mae{} and \rmse{}. }
  \vskip-.5cm
\label{tab:kc_efficiency_uncons_vs_DRL}
\begin{center}
\begin{tabular}{l  ll}
    \toprule
          &  MAE & RMSE   \\
\midrule
 WGAN   & 547652.6\msmall{\pm{6.1}}	 & 688130.1\msmall{\pm{4.8}} \\
{WGAN+\lsymb} & \textbf{547637.5\msmall{\pm{17.4}	} }&\textbf{688118.0\msmall{\pm{14.9}}}  \\\cmidrule{1-1}
 TableGAN &  547655.3\msmall{\pm{5.9}} & 688132.7\msmall{\pm{4.9}} \\
 {TableGAN+\lsymb} &   \textbf{547653.6\msmall{\pm{22.8}}}& \textbf{688131.4\msmall{\pm{18.7}}} \\
\cmidrule{1-1}
 CTGAN  & 547652.9\msmall{\pm{2.7}}	 & 688130.4\msmall{\pm{2.0}}  \\
{CTGAN+\lsymb} & \textbf{547642.9\msmall{\pm{18.1}}}&  \textbf{688122.1\msmall{\pm{14.0}}}  \\
\cmidrule{1-1}
 TVAE  & 547650.0\msmall{\pm{5.1}}	 & 688128.5\msmall{\pm{4.2}}   \\
{TVAE+\lsymb} &  \textbf{547645.4\msmall{\pm{31.8}}} &\textbf{688124.6\msmall{\pm{27.8}}} \\
\cmidrule{1-1}
 GOGGLE  & 547639.5\msmall{\pm{13.8}} &	688119.6\msmall{\pm{11.5}}  \\
{GOGGLE+\lsymb} & \textbf{547633.9\msmall{\pm{16.0}}} &	\textbf{688115.7\msmall{\pm{11.3}}}  \\\bottomrule
\end{tabular}
\end{center}
\end{table}

\section{Linear vs. QFLRA Constraints}
\label{appdx:Cmodels_vs_DRL}

\paragraph{Background knowledge alignment.}

\begin{table}[ht]

\caption{Constraint violation rate (\cvr) for each DGM+LL model and dataset.}
 \centering
\footnotesize
\begin{tabular}{@{}llrrrrrr@{}}

\toprule
Constraint Type & Model/Dataset & \phishing{}            & \cervical{}          & \lcld{}        & \heloc{}           & \house{}                    \\ \midrule
\multirow{2}{*}{Linear} & 
  All models + LL   & \textbf{0.0\msmall{\pm0.0}}   & \textbf{0.0\msmall{\pm0.0}}   & \textbf{0.0\msmall{\pm0.0}}    & \textbf{0.0\msmall{\pm0.0}} & \textbf{0.0 \msmall{\pm0.0}}  \\
\cmidrule{2-2}
& All models + \lsymb     & \textbf{0.0\msmall{\pm0.0}}   & \textbf{0.0\msmall{\pm0.0}}   & \textbf{0.0\msmall{\pm0.0}}    & \textbf{0.0\msmall{\pm0.0}} & \textbf{0.0 \msmall{\pm0.0}}    \\ 
\cmidrule{1-2}
\multirow{6}{*}{Disjunctive} 
&WGAN+LL & 8.9\msmall{\pm3.2}& 51.5\msmall{\pm11.2}&27.0\msmall{\pm3.6}&20.6\msmall{\pm6.3} & 100.0\msmall{\pm0.0}\\
&TableGAN+LL &  3.6\msmall{\pm0.8} & 54.0\msmall{\pm17.8}& 11.3\msmall{\pm0.9}& 26.6\msmall{\pm7.7}&23.9\msmall{\pm2.7}\\
& CTGAN+LL &  7.0\msmall{\pm2.6} & 55.7\msmall{\pm16.3} & 2.6\msmall{\pm1.1}& 2.6\msmall{\pm2.4}&10.8\msmall{\pm7.8} \\
& \tvae+LL & 6.8\msmall{\pm0.6} &8.4\msmall{\pm2.0} & 5.8\msmall{\pm0.8}&0.0\msmall{\pm0.0} &13.0\msmall{\pm12.6} \\
&GOGGLE+LL&6.5\msmall{\pm7.0} & 23.0\msmall{\pm10.7} & \rebuttal{81.9\msmall{\pm6.5}}  & 11.5\msmall{\pm7.1} & 2.6\msmall{\pm2.6}\\
\cmidrule{2-2}
& All models + \lsymb    & \textbf{0.0\msmall{\pm0.0}}   & \textbf{0.0\msmall{\pm0.0}}   & \textbf{0.0\msmall{\pm0.0}}    & \textbf{0.0\msmall{\pm0.0}}   & \textbf{0.0 \msmall{\pm0.0}}  \\  
\bottomrule
\end{tabular}
\label{tab:cons-sat-breakdown_Cmodels_vs_DRL}
\end{table}

\begin{table}[ht]
\caption{Samplewise constraints violation coverage (\scvc) for each DGM+LL model and each dataset.}
 \centering
\footnotesize
\begin{tabular}{@{}llrrrrrr@{}}
\toprule
Constraint Type & Model/Dataset & \phishing{}            & \cervical{}          & \lcld{}        & \heloc{}           & \house{}                    \\ \midrule
\multirow{2}{*}{Linear} &  All models + LL    & \textbf{0.0\msmall{\pm0.0}}   & \textbf{0.0\msmall{\pm0.0}}   & \textbf{0.0\msmall{\pm0.0}}    & \textbf{0.0\msmall{\pm0.0}} & \textbf{0.0 \msmall{\pm0.0}}  \\

\cmidrule{2-2}
& All models + \lsymb     & \textbf{0.0\msmall{\pm0.0}}   & \textbf{0.0\msmall{\pm0.0}}   & \textbf{0.0\msmall{\pm0.0}}    & \textbf{0.0\msmall{\pm0.0}} & \textbf{0.0 \msmall{\pm0.0}}    \\ 
\cmidrule{1-2}

\multirow{6}{*}{Disjunctive} & WGAN+LL &  0.9\msmall{\pm0.4} & 6.9\msmall{\pm1.1} & 2.2\msmall{\pm0.4} & 4.2\msmall{\pm1.4} & 9.1\msmall{\pm0.0}\\
& TableGAN+LL &  0.4\msmall{\pm0.1} & 6.5\msmall{\pm2.3} & 1.0\msmall{\pm0.1} & 5.9\msmall{\pm1.7} & 2.2\msmall{\pm0.3} \\
& CTGAN+LL & 0.7\msmall{\pm0.3} & 7.0\msmall{\pm2.1} & 0.2\msmall{\pm0.1} & 0.5\msmall{\pm0.5} & 1.0\msmall{\pm0.7} \\
& \tvae+LL & 0.7\msmall{\pm0.1} & 0.9\msmall{\pm0.2} & 0.4\msmall{\pm0.1} & 0.0\msmall{\pm0.0} & 1.2\msmall{\pm1.2}\\
&GOGGLE+LL&0.7\msmall{\pm0.7} & 2.6\msmall{\pm1.4} & \rebuttal{11.2\msmall{\pm2.1}} & 2.3\msmall{\pm1.4} & 0.2\msmall{\pm0.2}\\
\cmidrule{2-2}
& All models + \lsymb    & \textbf{0.0\msmall{\pm0.0}}   & \textbf{0.0\msmall{\pm0.0}}   & \textbf{0.0\msmall{\pm0.0}}    & \textbf{0.0\msmall{\pm0.0}}   & \textbf{0.0 \msmall{\pm0.0}}  \\  
\bottomrule
\end{tabular}
\label{tab:scvc-breakdown_Cmodels_vs_DRL}
\end{table}

\begin{table}[ht]
\caption{Constraints violation coverage (\cvc) for each DGM+LL model and each dataset.}
 \centering
\footnotesize
\begin{tabular}{@{}llrrrrrr@{}}
\toprule
Constraint Type & Model/Dataset & \phishing{}            & \cervical{}          & \lcld{}        & \heloc{}           & \house{}                    \\ \midrule
\multirow{2}{*}{Linear} &  All models + LL    & \textbf{0.0\msmall{\pm0.0}}   & \textbf{0.0\msmall{\pm0.0}}   & \textbf{0.0\msmall{\pm0.0}}    & \textbf{0.0\msmall{\pm0.0}} & \textbf{0.0 \msmall{\pm0.0}}  \\
\cmidrule{2-2}
& All models + \lsymb     & \textbf{0.0\msmall{\pm0.0}}   & \textbf{0.0\msmall{\pm0.0}}   & \textbf{0.0\msmall{\pm0.0}}    & \textbf{0.0\msmall{\pm0.0}} & \textbf{0.0 \msmall{\pm0.0}}    \\ 
\cmidrule{1-2}

\multirow{6}{*}{Disjunctive} & WGAN+LL &  50.0\msmall{\pm0.0} & 40.0\msmall{\pm0.0} & 28.0\msmall{\pm1.3} & 79.2\msmall{\pm1.8} & 13.1\msmall{\pm4.1}\\
& TableGAN+LL &  54.0\msmall{\pm4.7} & 30.0\msmall{\pm0.0} & 27.3\msmall{\pm0.0} & 84.8\msmall{\pm11.1} & 36.7\msmall{\pm0.8} \\
& CTGAN+LL & 52.8\msmall{\pm2.3} & 30.0\msmall{\pm0.0} & 25.4\msmall{\pm2.7} & 41.6\msmall{\pm12.8} & 32.7\msmall{\pm5.0} \\
& \tvae+LL & 51.2\msmall{\pm1.1} & 29.2\msmall{\pm1.8} & 22.6\msmall{\pm0.6} & 20.8\msmall{\pm14.3} & 32.0\msmall{\pm4.6}\\
&GOGGLE+LL&33.6\msmall{\pm12.4} & 27.5\msmall{\pm5.0} & \rebuttal{46.2\msmall{\pm0.0}} & 28.0\msmall{\pm16.7} & 17.6\msmall{\pm1.0}\\
\cmidrule{2-2}
& All models + \lsymb    & \textbf{0.0\msmall{\pm0.0}}   & \textbf{0.0\msmall{\pm0.0}}   & \textbf{0.0\msmall{\pm0.0}}    & \textbf{0.0\msmall{\pm0.0}}   & \textbf{0.0 \msmall{\pm0.0}}  \\  
\bottomrule
\end{tabular}
\label{tab:cvc-breakdown_Cmodels_vs_DRL}
\end{table}

In Tables~\ref{tab:cons-sat-breakdown_Cmodels_vs_DRL}, ~\ref{tab:scvc-breakdown_Cmodels_vs_DRL}, and ~\ref{tab:cvc-breakdown_Cmodels_vs_DRL}, we provide the \cvr, \scvc, and \cvc{}, respectively, for the DGM+LL models, according to the two possible types: linear and disjunctive constraints.

\paragraph{Efficacy.} 
\label{appdx:efficiency_wrt_Cmodels}

\begin{table}[ht] 
 \caption{Efficacy comparison between the DGM+LL models and their \lsymb{} counterparts in terms of \fone. The performance, along with the standard deviation, is reported for each classification dataset.}  
 \centering
\footnotesize
\vskip-.5cm
\begin{tabular}{lrrrr}\\
\toprule
& URL & CC & LCLD & HELOC\\
\midrule
{WGAN+LL} & \textbf{0.803}\msmall{\pm{0.038}} & \textbf{0.359}\msmall{\pm{0.096}} & 0.183\msmall{\pm{0.094}} & 0.694\msmall{\pm{0.033}} \\

{WGAN+DRL} & 0.800\msmall{\pm{0.011}} & 0.313\msmall{\pm{0.127}} & \textbf{0.197}\msmall{\pm{0.060}} & \textbf{0.721}\msmall{\pm{0.027}} \\

\cmidrule{1-1}

{TableGAN+LL} & 0.612\msmall{\pm{0.111}} & \textbf{0.169}\msmall{\pm{0.044}} & 0.232\msmall{\pm{0.026}} & \textbf{0.638}\msmall{\pm{0.061}} \\

{TableGAN+DRL} & \textbf{0.619}\msmall{\pm{0.046}} & 0.163\msmall{\pm{0.079}} & \textbf{0.269}\msmall{\pm{0.025}} & 0.628\msmall{\pm{0.083}} \\

\cmidrule{1-1}
{CTGAN+LL} & \textbf{0.836}\msmall{\pm{0.002}} & 0.250\msmall{\pm{0.081}} & 0.265\msmall{\pm{0.040}} & 0.729\msmall{\pm{0.027}} \\

{CTGAN+DRL} & \textbf{0.836}\msmall{\pm{0.004}} & \textbf{0.288}\msmall{\pm{0.116}} & \textbf{0.288}\msmall{\pm{0.013}} & \textbf{0.744}\msmall{\pm{0.020}} \\

\cmidrule{1-1}
{TVAE+LL} & 0.824\msmall{\pm{0.004}} & 0.413\msmall{\pm{0.057}} & 0.158\msmall{\pm{0.011}} & 0.730\msmall{\pm{0.009}} \\

{TVAE+DRL} & \textbf{0.835}\msmall{\pm{0.009}} & \textbf{0.467}\msmall{\pm{0.100}} & \textbf{0.189}\msmall{\pm{0.022}} & \textbf{0.731}\msmall{\pm{0.009}} \\
\cmidrule{1-1}

{GOGGLE+LL} & \textbf{0.787}\msmall{\pm{0.014}} & 0.233\msmall{\pm{0.180}} & \rebuttal{0.284\msmall{\pm{0.123}}} & \textbf{0.723}\msmall{\pm{0.018}} \\

{GOGGLE+DRL} & 0.720\msmall{\pm{0.086}} & \textbf{0.253}\msmall{\pm{0.144}} & \rebuttal{\textbf{0.298}\msmall{\pm{0.153}}} & 0.698\msmall{\pm{0.023}} \\

\bottomrule
\end{tabular}
\label{tab:utility_and_stddevs_f1_Cmodels_vs_DRL}
\end{table}

\begin{table}[ht] 
 \caption{Efficacy comparison between the DGM+LL models and their \lsymb{} counterparts in terms of \wfone. The performance, along with the standard deviation, is reported for each classification dataset.}   \centering
  \vskip-.5cm
 \centering
\footnotesize
\begin{tabular}{lrrrr}\\
\toprule
& URL & CC & LCLD & HELOC\\
\midrule
{WGAN+LL} & 0.799\msmall{\pm{0.022}} & \textbf{0.383}\msmall{\pm{0.092}} & 0.330\msmall{\pm{0.068}} & \textbf{0.662}\msmall{\pm{0.021}} \\

{WGAN+DRL} & \textbf{0.801}\msmall{\pm{0.014}} & 0.340\msmall{\pm{0.122}} & \textbf{0.339}\msmall{\pm{0.049}} & 0.652\msmall{\pm{0.036}} \\

\cmidrule{1-1}

{TableGAN+LL} & \textbf{0.695}\msmall{\pm{0.071}} & \textbf{0.203}\msmall{\pm{0.042}} & 0.373\msmall{\pm{0.017}} & \textbf{0.633}\msmall{\pm{0.036}} \\

{TableGAN+DRL} & 0.693\msmall{\pm{0.028}} & 0.196\msmall{\pm{0.076}} & \textbf{0.401}\msmall{\pm{0.018}} & 0.628\msmall{\pm{0.038}} \\

\cmidrule{1-1}
{CTGAN+LL} & \textbf{0.820}\msmall{\pm{0.008}} & 0.271\msmall{\pm{0.083}} & 0.392\msmall{\pm{0.030}} & \textbf{0.688}\msmall{\pm{0.010}} \\

{CTGAN+DRL} & 0.815\msmall{\pm{0.011}} & \textbf{0.308}\msmall{\pm{0.118}} & \textbf{0.409}\msmall{\pm{0.007}} & 0.680\msmall{\pm{0.011}} \\

\cmidrule{1-1}
{TVAE+LL} & 0.816\msmall{\pm{0.008}} & 0.436\msmall{\pm{0.055}} & 0.310\msmall{\pm{0.011}} & 0.691\msmall{\pm{0.007}} \\

{TVAE+DRL} & \textbf{0.832}\msmall{\pm{0.014}} & \textbf{0.487}\msmall{\pm{0.096}} & \textbf{0.330}\msmall{\pm{0.014}} & \textbf{0.694}\msmall{\pm{0.006}} \\
\cmidrule{1-1}

{GOGGLE+LL} & \textbf{0.749}\msmall{\pm{0.029}} & 0.262\msmall{\pm{0.173}} & \rebuttal{\textbf{0.310}\msmall{\pm{0.039}}} & \textbf{0.663}\msmall{\pm{0.012}} \\

{GOGGLE+DRL} & 0.673\msmall{\pm{0.039}} & \textbf{0.281}\msmall{\pm{0.139}} & \rebuttal{\textbf{0.310}\msmall{\pm{0.057}}}  & 0.636\msmall{\pm{0.020}} \\

\bottomrule
\end{tabular}
\label{tab:utility_and_stddevs_weightedf1_Cmodels_vs_DRL}
\end{table}

\begin{table}[ht] 

 \caption{Efficacy comparison between the DGM+LL models and their \lsymb{} counterparts in terms of \auc. The performance, along with the standard deviation, is reported for each classification dataset.}   \centering
  \vskip-.5cm
 \centering
\footnotesize
\begin{tabular}{lrrrr}\\
\toprule
& URL & CC & LCLD & HELOC\\
\midrule
{WGAN+LL} & 0.869\msmall{\pm{0.014}} & 0.857\msmall{\pm{0.058}} & 0.608\msmall{\pm{0.021}} & \textbf{0.732}\msmall{\pm{0.013}} \\

{WGAN+DRL} & \textbf{0.875}\msmall{\pm{0.007}} & \textbf{0.885}\msmall{\pm{0.050}} & \textbf{0.623}\msmall{\pm{0.023}} & 0.717\msmall{\pm{0.029}} \\

\cmidrule{1-1}

{TableGAN+LL} & \textbf{0.868}\msmall{\pm{0.007}} & \textbf{0.794}\msmall{\pm{0.015}} & 0.640\msmall{\pm{0.005}} & 0.704\msmall{\pm{0.030}} \\

{TableGAN+DRL} & 0.865\msmall{\pm{0.022}} & 0.742\msmall{\pm{0.096}} & \textbf{0.657}\msmall{\pm{0.007}} & \textbf{0.709}\msmall{\pm{0.011}} \\

\cmidrule{1-1}
{CTGAN+LL} & 0.880\msmall{\pm{0.007}} & \textbf{0.959}\msmall{\pm{0.027}} & 0.641\msmall{\pm{0.015}} & \textbf{0.755}\msmall{\pm{0.007}} \\

{CTGAN+DRL} & \textbf{0.883}\msmall{\pm{0.009}} & 0.955\msmall{\pm{0.022}} & \textbf{0.643}\msmall{\pm{0.019}} & 0.745\msmall{\pm{0.008}} \\

\cmidrule{1-1}
{TVAE+LL} & 0.878\msmall{\pm{0.007}} & \textbf{0.933}\msmall{\pm{0.036}} & 0.633\msmall{\pm{0.003}} & 0.747\msmall{\pm{0.007}} \\

{TVAE+DRL} & \textbf{0.893}\msmall{\pm{0.010}} & 0.926\msmall{\pm{0.039}} & \textbf{0.635}\msmall{\pm{0.002}} & \textbf{0.752}\msmall{\pm{0.003}} \\
\cmidrule{1-1}

{GOGGLE+LL} & \textbf{0.802}\msmall{\pm{0.016}} & \textbf{0.765}\msmall{\pm{0.084}} & \rebuttal{0.554\msmall{\pm{0.039}}} & \textbf{0.719}\msmall{\pm{0.005}} \\

{GOGGLE+DRL} & 0.747\msmall{\pm{0.029}} & 0.758\msmall{\pm{0.091}} & \rebuttal{\textbf{0.563}\msmall{\pm{0.027}}} & 0.691\msmall{\pm{0.039}} \\

\bottomrule
\end{tabular}
\label{tab:utility_and_stddevs_auc_Cmodels_vs_DRL}
\end{table}

\begin{table}[ht!]
\small
\setlength{\tabcolsep}{5pt}
\caption{Efficacy performance comparison between DGM+LL and DGM+\lsymb{} models trained on \house{}, using \mae{} and \rmse{}. }
\label{tab:kc_efficiency_Cmodels_vs_DRL}
\begin{center}
\begin{tabular}{l  ll}
    \toprule
          &  MAE & RMSE   \\
\midrule
 WGAN+LL   &   547638.5\msmall{\pm{11.4}}	 & 688118.2\msmall{\pm{11.0}} \\
{WGAN+\lsymb} & \textbf{547637.5\msmall{\pm{17.4}	} }&\textbf{688118.0\msmall{\pm{14.9 }}}  \\
\cmidrule{1-1}
 TableGAN+LL  & \textbf{547638.0\msmall{\pm{18.9}}}	 & \textbf{688118.3\msmall{\pm{17.9}}} \\
 {TableGAN+\lsymb} &  547653.6\msmall{\pm{22.8}}& 688131.4\msmall{\pm{18.7}} \\
\cmidrule{1-1}
 CTGAN+LL  & \textbf{547642.3\msmall{\pm{14.6}}}	 & \textbf{688121.4\msmall{\pm{14.5}}} \\
{CTGAN+\lsymb} &  547642.9\msmall{\pm{18.1}}&  688122.1\msmall{\pm{14.0}}  \\
\cmidrule{1-1}
 TVAE+LL  & 547658.1\msmall{\pm{9.8}}	 & 688137.4\msmall{\pm{9.4}}   \\
{TVAE+\lsymb} &  \textbf{547645.4\msmall{\pm{31.8}}} &\textbf{688124.6\msmall{\pm{27.8}}} \\
\cmidrule{1-1}
 GOGGLE+LL  & 547651.9\msmall{\pm{9.9}} &	688129.6\msmall{\pm{8.4}}  \\
{GOGGLE+\lsymb} & \textbf{547633.9\msmall{\pm{16.0}}} &	\textbf{688115.7\msmall{\pm{11.3}}}  \\
\bottomrule
\end{tabular}
\end{center}
\end{table}

Tables~\ref{tab:utility_and_stddevs_f1_Cmodels_vs_DRL}, \ref{tab:utility_and_stddevs_weightedf1_Cmodels_vs_DRL}, \ref{tab:utility_and_stddevs_auc_Cmodels_vs_DRL} show the efficacy, along with the standard deviations from the mean, for each DGM+LL model and the corresponding DGM+\lsymb{} models on every classification dataset, using the \fone, \wfone, and \auc{} metrics, respectively.
Similarly, in Table~\ref{tab:kc_efficiency_Cmodels_vs_DRL}, we report \mae{} and \rmse{} for each unconstrained DGM+LL model and the corresponding DGM+\lsymb{} models on the regression dataset (i.e., \house).

\section{Background knowledge alignment: A Qualitative Analysis}
\label{appdx:qualitative}

Figure~\ref{fig:background_knowledge_alignment} accompanies Figure~\ref{fig:main_background_knowledge_alignment} from the main body of our paper and shows the relevant sample space for the same two constraints from the \house{} dataset: 
\textsl{if the Zipcode is 98004 or 98005 then the Price is greater than 400K USD} and 
\textsl{if the Zipcode is between 98006 and 98008 then the Price exceeds 225K USD}.
As we can see, in all cases, the unconstrained DGMs and the DGMs+LL fail to comply with the constraints.
Unlike the synthetic data from the unconstrained DGMs, the samples generated using our \lsymb{} layer never cross into the areas that mark regions where datapoints violate the constraints and, in addition, their distribution resembles more closely the real data in all five cases.

In addition, we show a similar comparison but for a different dataset and different constraints.
Specifically, we consider the following two constraints from the \phishing{} dataset:
\textsl{if the Number of Subdomains is less than 2 then the Hostname length is less than 30} and 
\textsl{if the Number of Subdomains is less than 3 then the Hostname length is less than 55}.
The two constraints capture the relation between the two features (i.e., \textsl{Number of Subdomains} and \textsl{Hostname length}) and, differently from the two constraints from the \house{} dataset mentioned above, their respective violation space intersects, as shown in red in Figure~\ref{fig:url_background_knowledge_alignment}.
Nevertheless, our constrained models never violate any of the constraints, unlike the unconstrained and DGM+LL models.

\ktdt{Finally, in order to show the unintended consequences of using rejection sampling, we visualise using  t-SNE~\citep{tsne} the differences between the distributions of the samples generated by the standard DGMs and by DGMs+RS (i.e, with rejection sampling). These visualisations can be found in Figure~\ref{fig:t-sne}. As we can see in the Figure, there can be cases where rejection sampling actually creates some changes in the distribution, which can then affect the machine learning efficacy.}

\begin{figure}
    \centering
    \includegraphics[width=.8\linewidth]{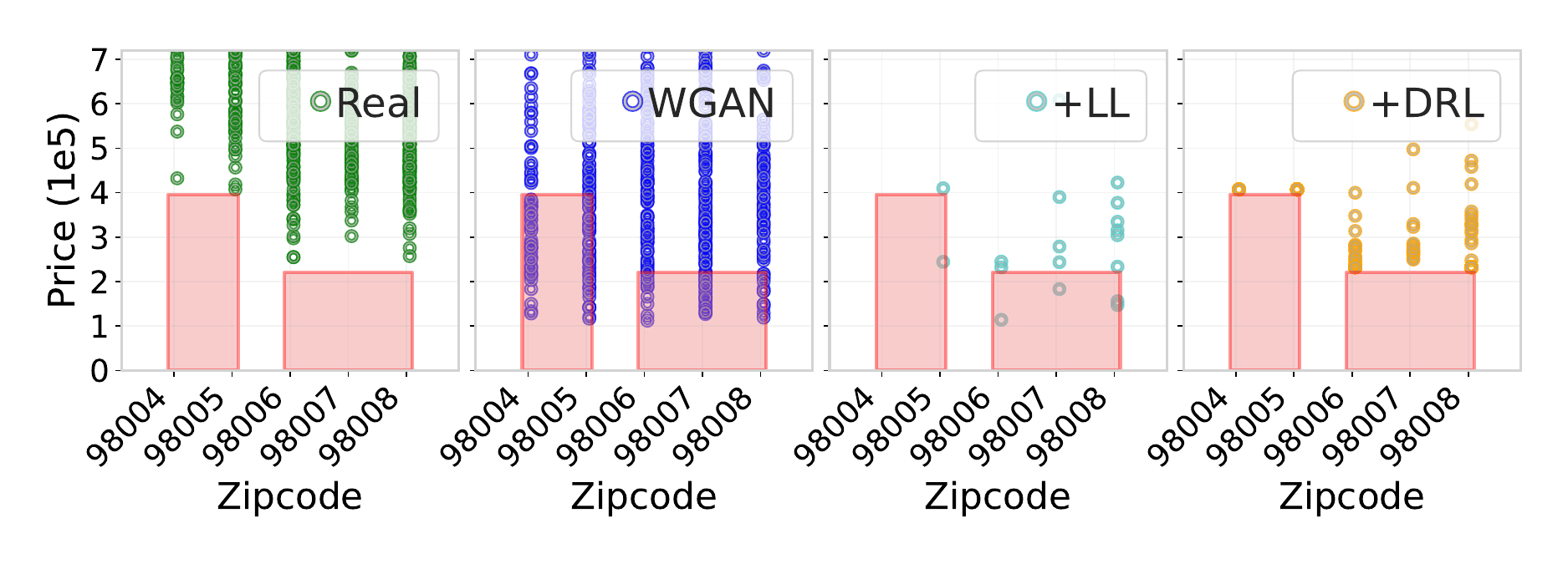}
    \includegraphics[width=.8\linewidth]{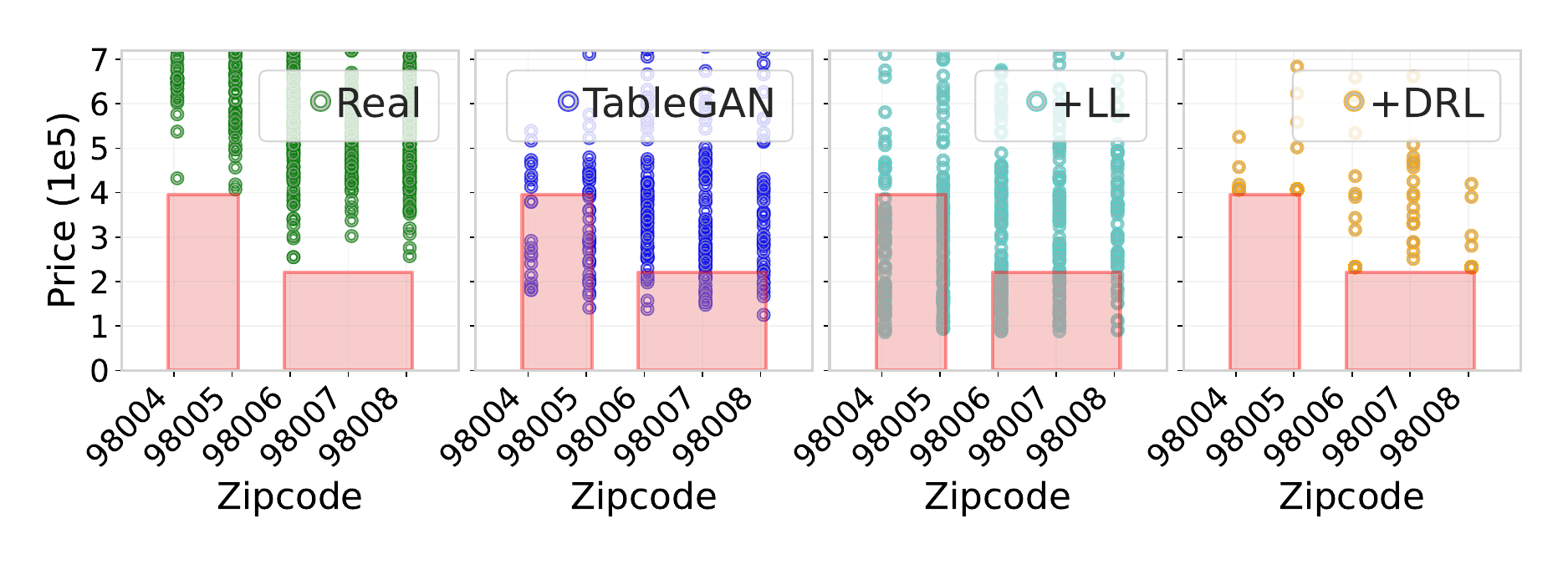}
    \includegraphics[width=.8\linewidth]{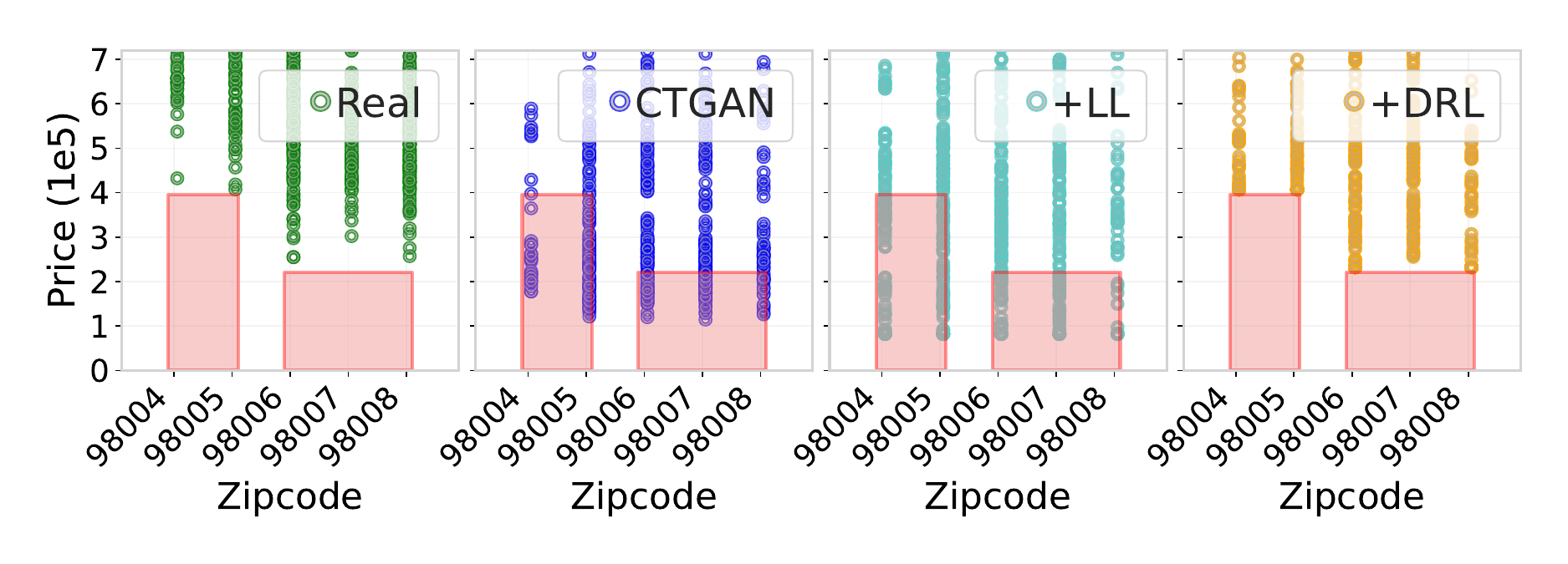}
    \includegraphics[width=.8\linewidth]{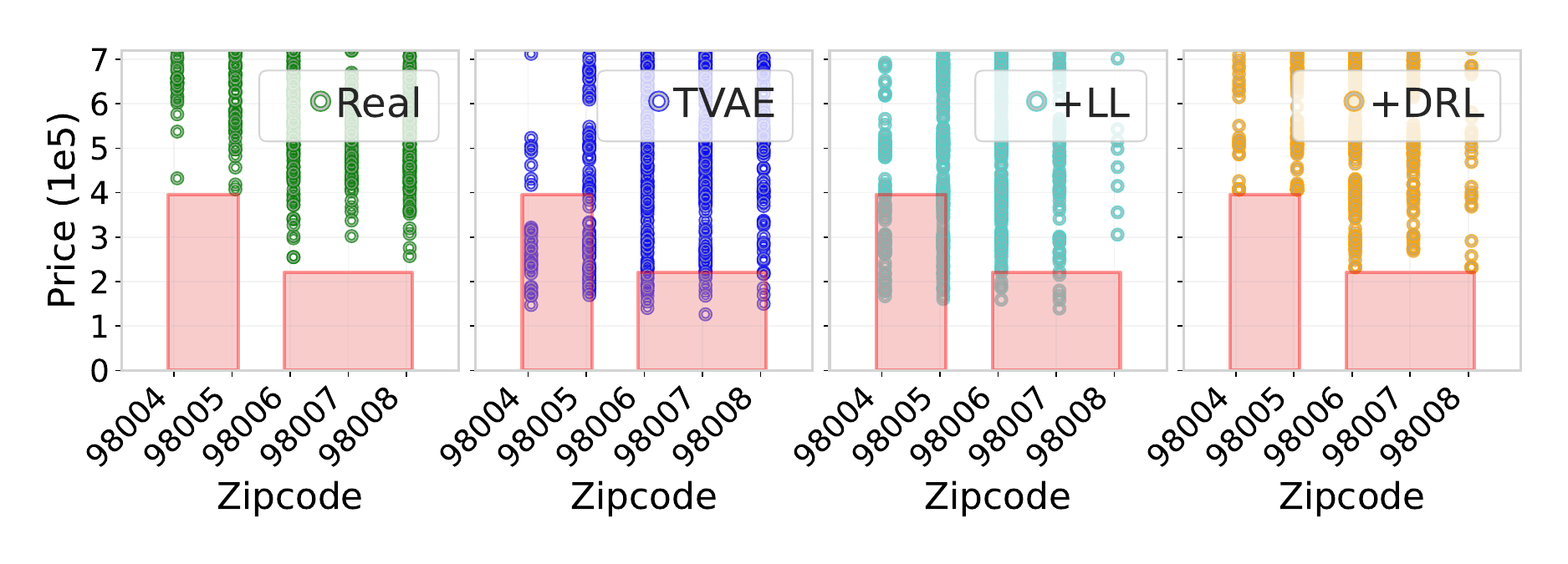}
        \includegraphics[width=.8\linewidth]{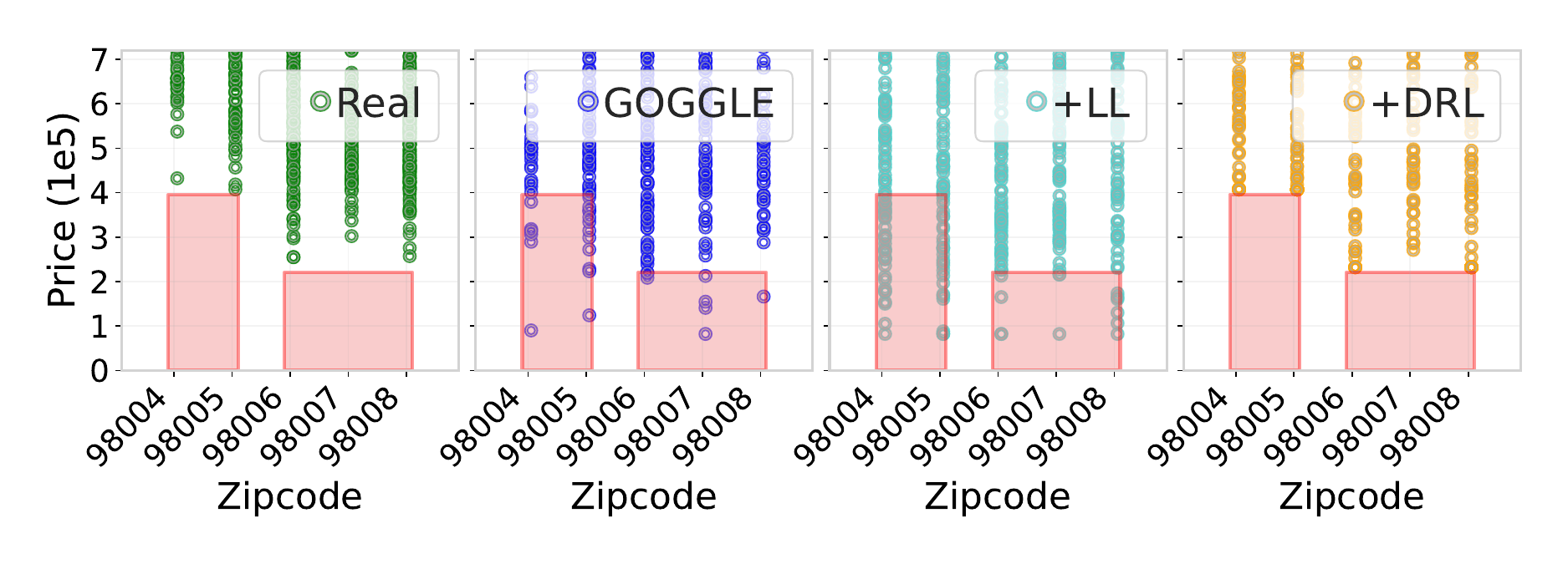}
    \caption{Comparison of sample distributions between real data and synthetic data generated by the unconstrained DGM models and their corresponding DGM+LL and DGM+\lsymb{} models, using the \textsl{Zipcode} and \textsl{Price} features from the \house{} dataset. In order (from the top to the bottom row), the DGM models used in the plots are: \wgan, \tablegan, \ctgan, \tvae, and \goggle. The areas in red  indicate regions where samples violate the constraints on the given features.}     \label{fig:background_knowledge_alignment}
\end{figure}

\begin{figure}
    \centering
    \includegraphics[width=.78\linewidth]{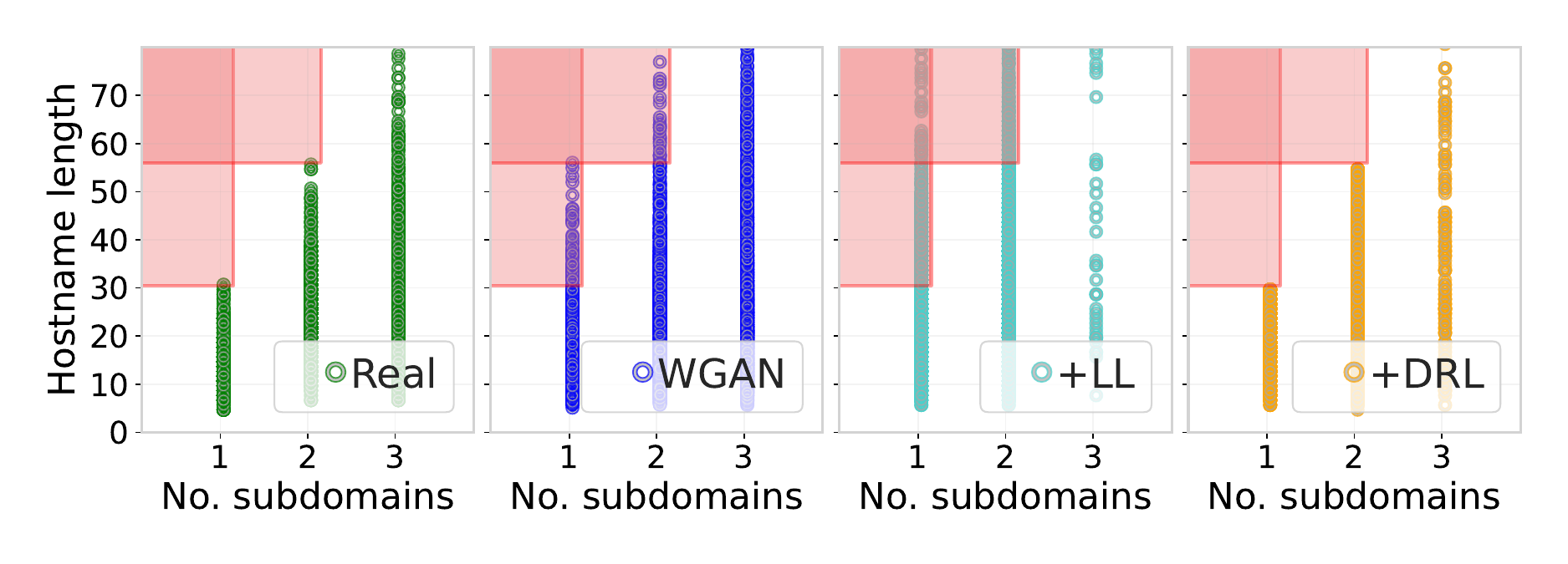}
    \includegraphics[width=.78\linewidth]{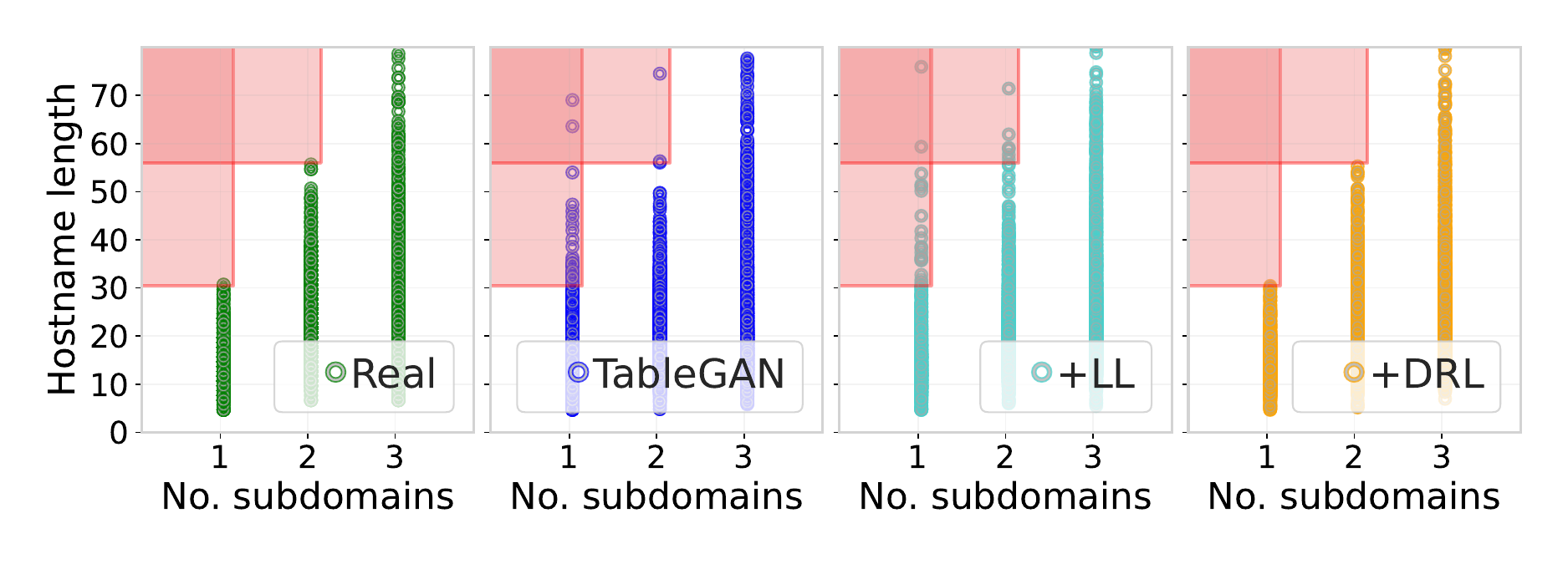}
    \includegraphics[width=.78\linewidth]{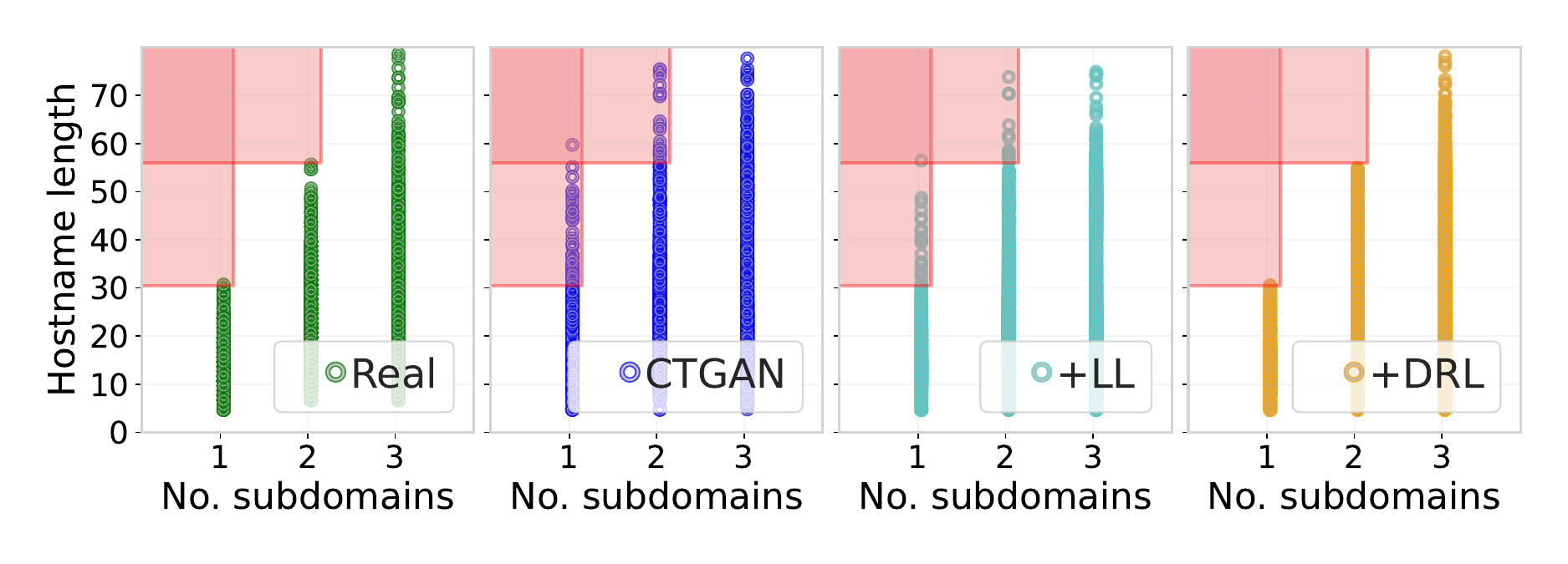}
    \includegraphics[width=.78\linewidth]{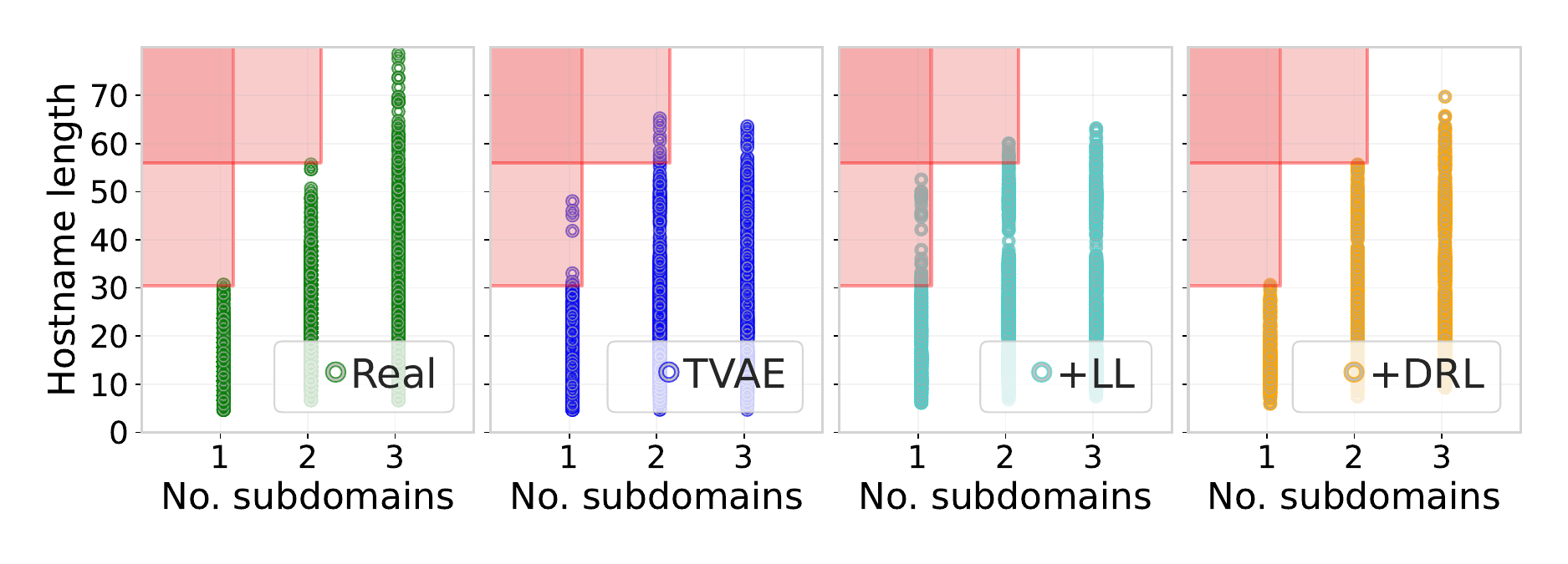}
        \includegraphics[width=.78\linewidth]{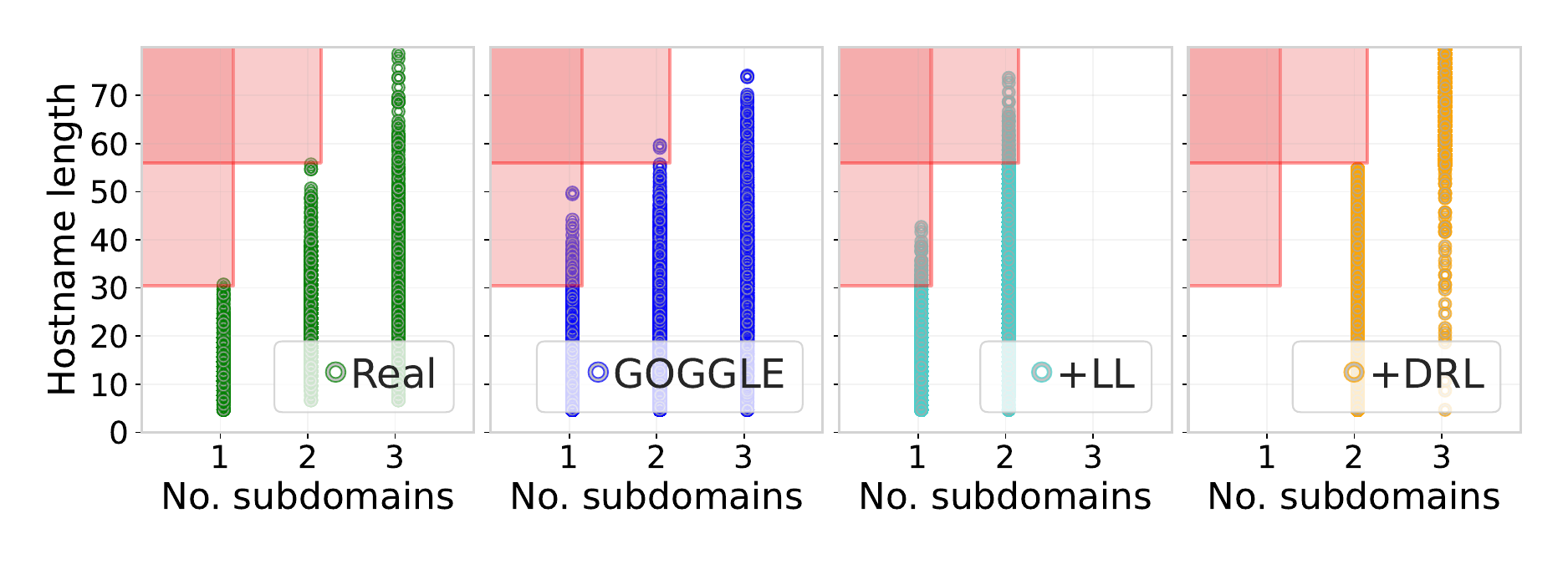}
    \caption{Comparison of sample distributions between real data and synthetic data generated by the unconstrained DGM models and their corresponding DGM+LL and DGM+\lsymb{} models, using the \textsl{No. subdomains} and \textsl{Hostname length} features from the \phishing{} dataset. In order (from the top to the bottom row), the DGM models used in the plots are: \wgan, \tablegan, \ctgan, \tvae, and \goggle. The areas in red  indicate regions where samples violate the constraints on the given features.}     \label{fig:url_background_knowledge_alignment}
\end{figure}

\begin{figure}[t]
     \centering
     \begin{subfigure}[b]{0.49\textwidth}
         \centering
        \includegraphics[trim={1cm 0 1cm 1.2cm},clip,width=\textwidth]{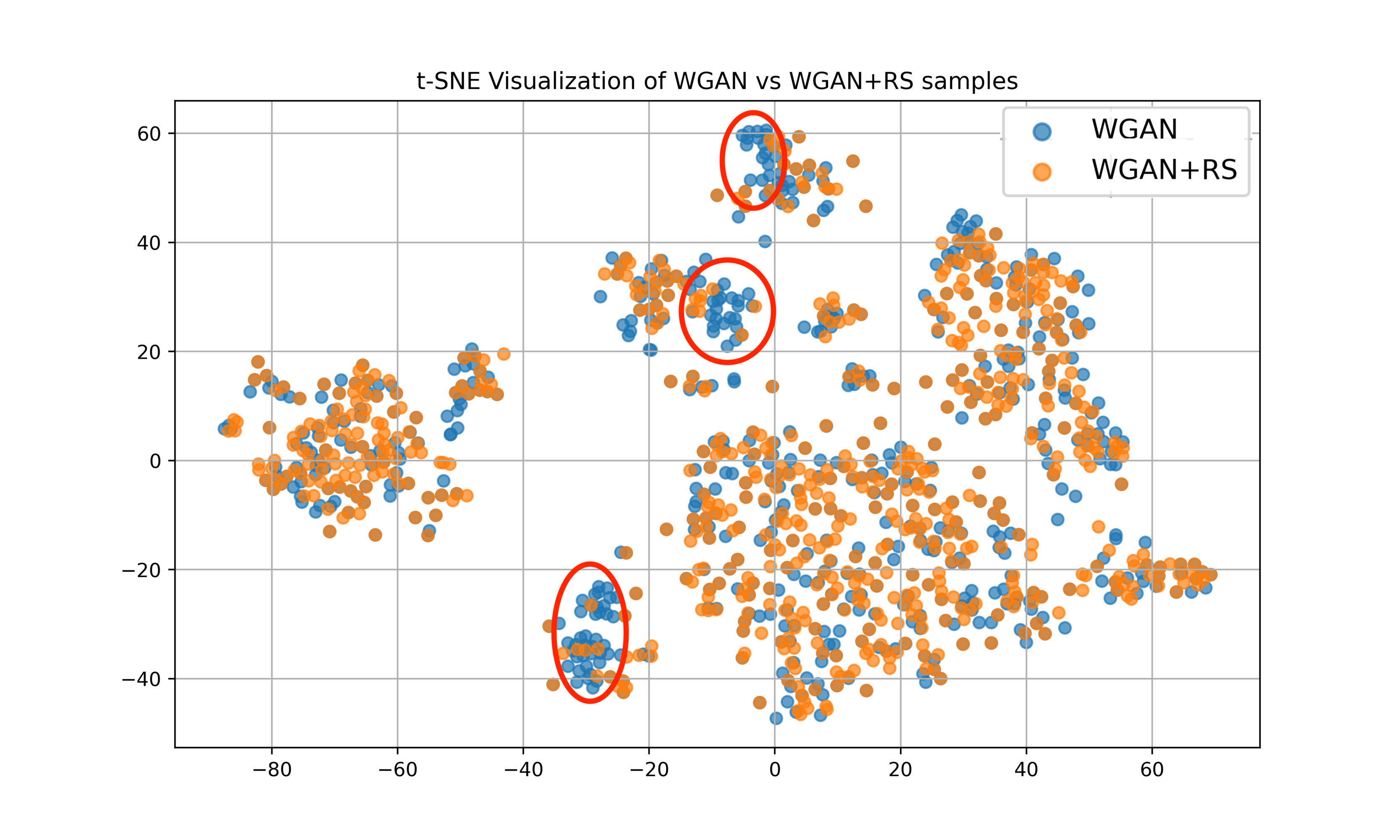}

         \caption{}
         \label{fig:t-sne_ccs_wgan}
     \end{subfigure}
     \begin{subfigure}[b]{0.49\textwidth}
         \centering
         \includegraphics[trim={1cm 0 1cm 1.2cm},clip,width=\textwidth]{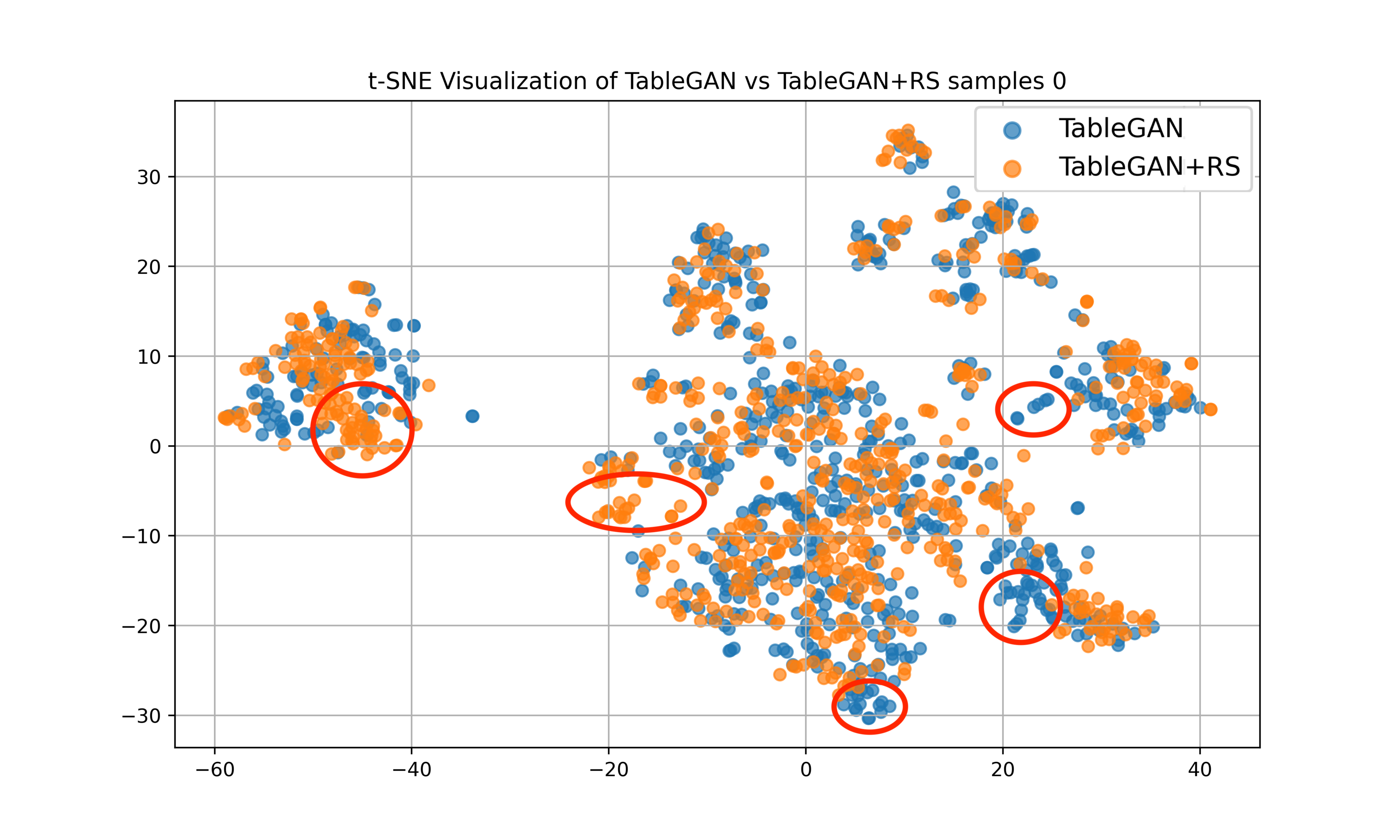}
         \caption{}
         \label{fig:t-sne_ccs_tablegan}
     \end{subfigure}\\
     \begin{subfigure}[b]{0.49\textwidth}
         \centering
        \includegraphics[trim={1cm 0 1cm 1.2cm},clip,width=\textwidth]{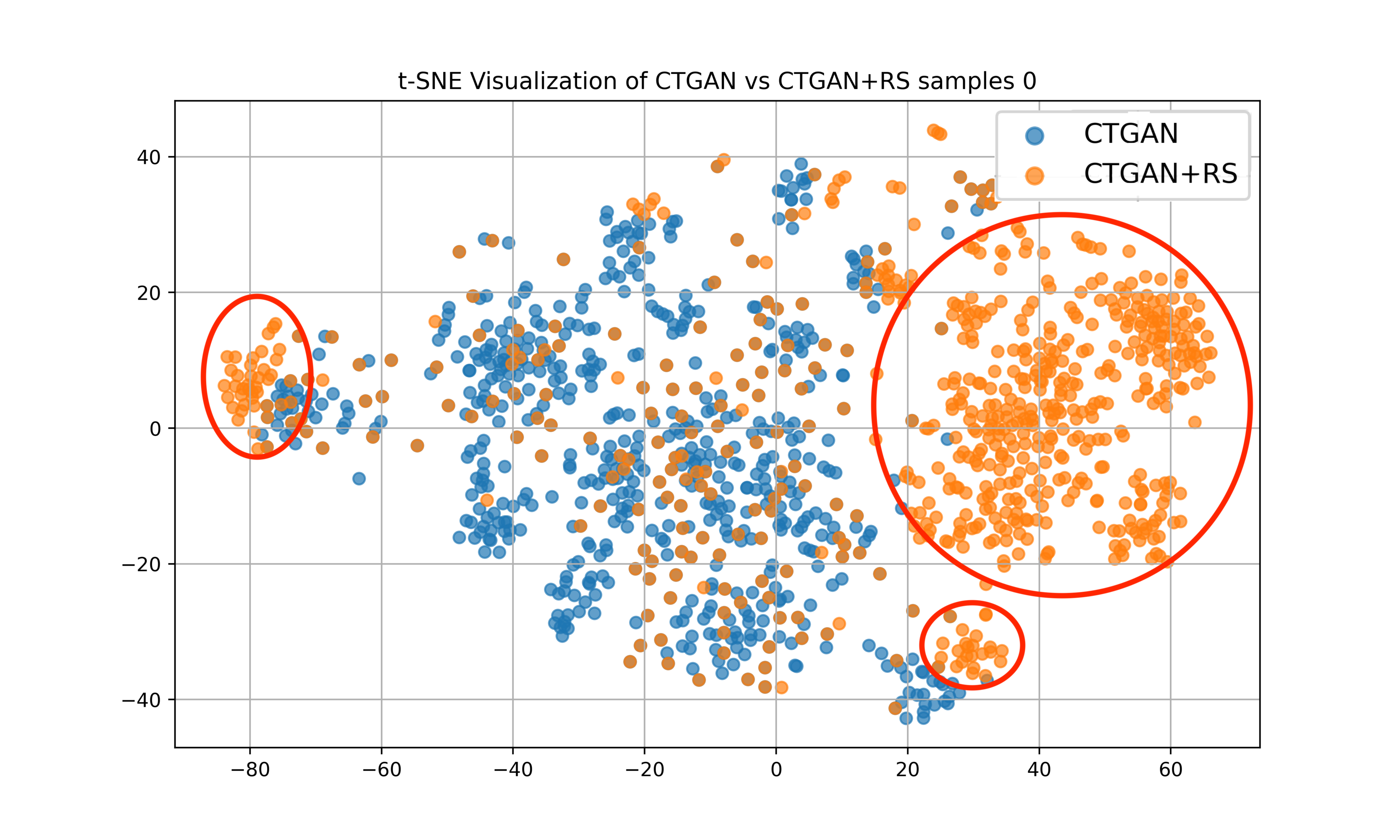}
         \caption{}
         \label{fig:t-sne_ccs_ctgan}
     \end{subfigure}
          \begin{subfigure}[b]{0.49\textwidth}
         \centering
      \includegraphics[trim={1cm 0 1cm 1.2cm},clip,width=\textwidth]{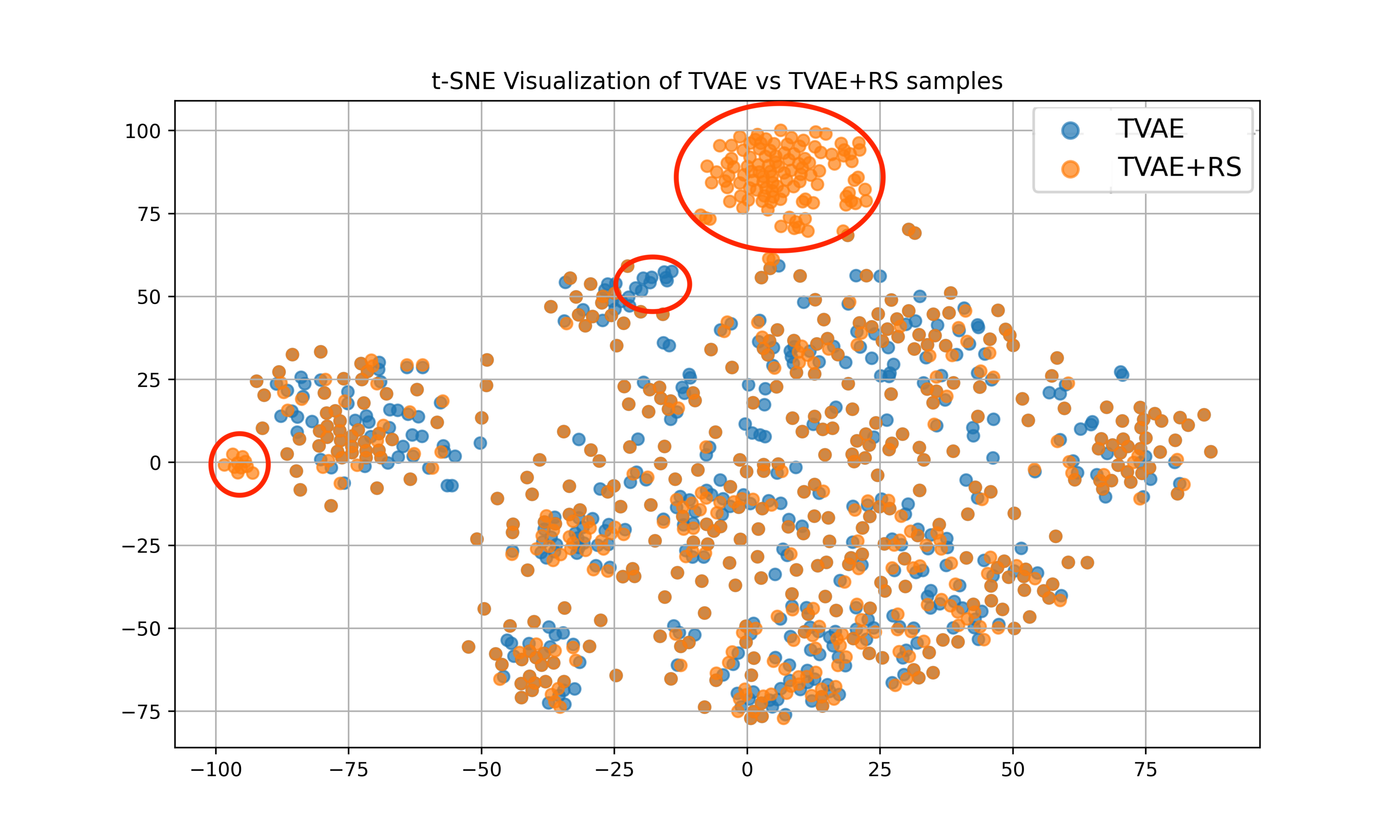}
         \caption{}
         \label{fig:t-sne_ccs_tvae}
              \end{subfigure}
          \begin{subfigure}[b]{0.49\textwidth}
                 \centering
    \includegraphics[trim={1cm 0 1cm 1.2cm},clip,width=\textwidth]{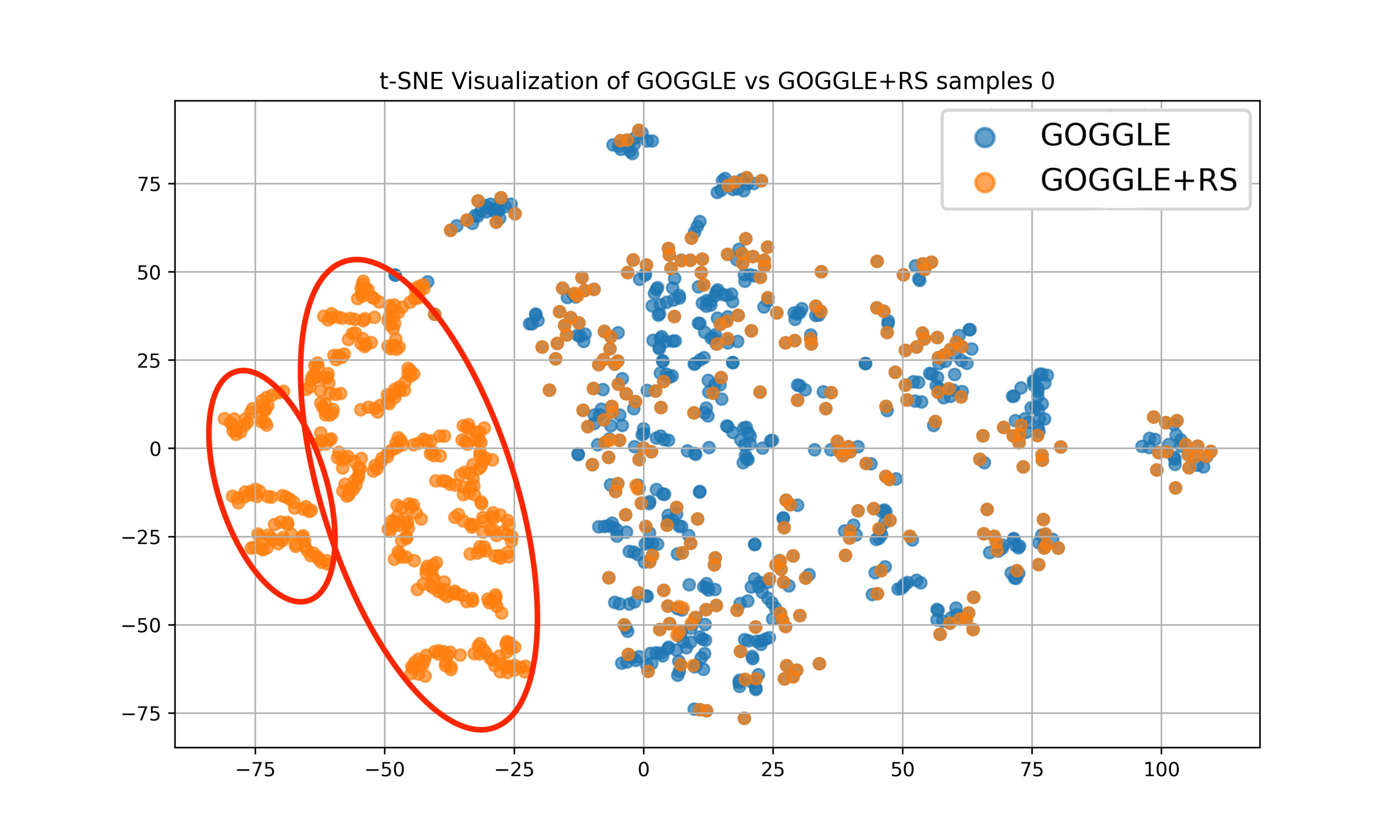}
         \caption{}
         \label{fig:t-sne_ccs_goggle}
     \end{subfigure}
        \caption{t-SNE visualisations of the distribution of the samples generated for the CCS dataset by (i) WGAN and WGAN+RS in Figure~\ref{fig:t-sne_ccs_wgan}, (ii) TableGAN and TableGAN+RS in Figure~\ref{fig:t-sne_ccs_tablegan}, (iii) CTGAN and CTGAN+RS in Figure~\ref{fig:t-sne_ccs_ctgan}, (iv) TVAE and TVAE+RS in Figure~\ref{fig:t-sne_ccs_tvae}, and (v) GOGGLE and GOGGLE+RS in Figure~\ref{fig:t-sne_ccs_goggle}. Note that changes in distribution are marked with red contours.}
        \label{fig:t-sne}
\end{figure}

\section{Sample generation time}
\label{appdx:runtime}
\ktdt{As we can see from Table~\ref{tab:runtime_appdx}, the DGM+\lsymb{} models bring additional time to the sample generation runtimes. However, this is often much smaller than the additional time required to use an unconstrained model and then doing rejection sampling. 
Indeed, the largest runtime difference registered between a unconstrained DGM and its constrained counterpart is of only 0.12 seconds (i.e., for \ctgan{} on the \phishing{} and \lcld{} datasets).
Notably, in 20 out of 25 cases, the overhead for the DGM+\lsymb{} models is less than 0.1 seconds. On the other hand, for the DGM+RS models, the sample generation procedure timed out after 24h for all the models tested on the House dataset (where we had 100\% CVR). The registered times are also not very promising for any of the other datasets when using DGM+RS, where the minimum absolute difference registered equals 0.07 seconds and the maximum equals 5.55 seconds (notice that no DGM nor DGM+DRL model has sampling time above 0.29 seconds).}%

\begin{table}[t]
\centering
\caption{Sample generation time (in seconds) for all DGMs and their respective DGM+\lsymb{} models \ktdt{and DGM+RS models (i.e, DGMs with rejection sampling), for all datasets. The hyphen indicates timeout after 24h.}\hspace*{\fill}}
\setlength{\tabcolsep}{3pt}
\begin{tabular}{@{}lrrrrrr@{}}
\toprule
           & \phishing{}  & \cervical{} & \lcld{} & \heloc{} & \house{} \\ \midrule
WGAN       & 0.01 & 0.01 & 0.01 & 0.01& 0.00\\ 
\ktdt{WGAN+RS} & \ktdt{0.08} & \ktdt{0.11} & \ktdt{0.12} & \ktdt{0.25} & \ktdt{-} \\
WGAN+\lsymb    &  0.08 & 0.07 & 0.08 & 0.04& 0.09\\ \cmidrule(r){1-1}
TableGAN   &   0.21 & 0.10 &0.09  & 0.10& 0.07 \\  %
\ktdt{TableGAN+RS} &\ktdt{0.43} &\ktdt{1.57}&\ktdt{0.38}&\ktdt{0.78}& \ktdt{-} \\
TableGAN+\lsymb & 0.28 & 0.14 &  0.19&0.15 & 0.18\\ \cmidrule(r){1-1}
CTGAN      &0.16 & 0.11 & 0.10 & 0.09& 0.07\\ 
\ktdt{CTGAN+RS} &\ktdt{0.45} &\ktdt{1.71}&\ktdt{0.37}&\ktdt{1.01}& \ktdt{-} \\
CTGAN+\lsymb    & 0.28 & 0.19 & 0.22 & 0.14& 0.16\\ \cmidrule(r){1-1}
{TVAE}      &  0.14 &  0.08 &  0.07& 0.06& 0.05\\
\ktdt{TVAE+RS} & \ktdt{0.37} & \ktdt{0.31} & \ktdt{1.08} & \ktdt{0.96} & \ktdt{-} \\
{TVAE+\lsymb}   & 0.25 &  0.16 &0.20  & 0.11& 0.14\\ 
\cmidrule(r){1-1}
{GOGGLE} &  {0.22} & {0.08} & {0.08} & {0.06}  &   {0.06}   \\
\ktdt{GOGGLE+RS} & \ktdt{0.48} & \ktdt{0.35} & \ktdt{5.63} & \ktdt{0.25} & \ktdt{-} \\
{GOGGLE+\lsymb} &  {0.29} & {0.14} & {0.11} & {0.09}  &   {0.16}  \\
\bottomrule
\end{tabular}
\label{tab:runtime_appdx}
\end{table}

\section{Real Data Performance}
\label{appdx:real_data_efficiency}
In Table~\ref{tab:real-utility} we report the average F1-score, weighted F1-score and Area Under the ROC Curve obtained by training the same six classifiers (resp. four regressors) on the four real classification datasets (resp. real regression dataset). This allows us to compare the machine learning efficacy of the synthetic data with one of the real data. As we can see from the results, the synthetic data generated with all the DGMs (unconstrained, +LL and +DRL) manage to obtain very good results. In multiple occasions, the models trained on the synthetic data not only  get results comparable with the ones obtained with the real data, but actually perform better than them. 
In spite of this, the classification models trained on synthetic data never manage to get better performance than those trained on the real data for all metrics. On the other hand, the regressors trained on the synthetic version of the House dataset always manage to get lower MAE and RMSE, no matter the DGM used to generate the synthetic data (with the exception of TVAE+LL which got slightly higher MAE than the one obtained with the real data). 

\section{Comparison between DGMs+\lsymb{} and LLM-based Tabular Data Generation}
\begin{wraptable}{r}{0.62\linewidth}
 \vspace{-0.45cm}
 \centering
\caption{\cvr{} for each model and dataset. Cases with \cvr{}$\ge$50\% are \underline{underlined}. Best results are in bold. 
}
\footnotesize
 \setlength{\tabcolsep}{2.6pt}
 \vspace{-0.2cm}
\begin{tabular}{@{}lrrrrrr@{}}
\toprule
 & \phishing{}            & \cervical{}          & \lcld{}        & \heloc{}           & \house{}                    \\ \midrule
 \ktdt{GreAT} & \ktdt{0.7\msmall{\pm0.2}}	 &\ktdt{\underline{98.0\msmall{\pm0.3}}}	& \ktdt{1.1\msmall{\pm0.1}}	 & \ktdt{9.60\msmall{\pm0.5}} &	\ktdt{15.7\msmall{\pm1.4}} \\
\midrule
All + \lsymb     & \textbf{0.0\msmall{\pm0.0}}   & \textbf{0.0\msmall{\pm0.0}}   & \textbf{0.0\msmall{\pm0.0}}    & \textbf{0.0\msmall{\pm0.0}}  & \textbf{0.0\msmall{\pm0.0}}      \\ \bottomrule
\end{tabular}
 \vspace{-0.1cm}
\label{tab:greatvsDRL}
\end{wraptable}
\ktdt{A recent trend in the tabular data generation field has been to use LLMs to perform the task. While 
these models are very promising, they are also not exempt from the 
problems that affect the other DGMs. In this section we thus compare the performance of the standard DGMs equipped with our \lsymb{} and GreAT~\citep{borisov2023great}, a state-of-the-art LLM-based tabular data generator. For all datasets considered, GreAT was trained and run on an A100 GPU with 40GB of RAM, using the pre-defined hyperparameters.}

\ktdt{Firstly, in Table~\ref{tab:greatvsDRL}, we report the CVR obtained with GreAT and with all the models+\lsymb{}. As we can see from the Table, even though in two out of five cases GreAT manages to get a very low CVR, for CCS its CVR shoots up to 98.0\%. As CCS is by far our smallest dataset (with only 1K datapoints in the training set), this hints to the fact that LLM-based models struggle to learn the distribution from datasets with few datapoints.}

\begin{wraptable}{r}{0.52\linewidth}
 \centering
 \vspace{-0.45cm}
\caption{Sample generation time in seconds.
}
\footnotesize
 \setlength{\tabcolsep}{2.9pt}
  \vspace{-0.2cm}
\begin{tabular}{@{}lrrrrrr@{}}
\toprule
 & \phishing{}            & \cervical{}          & \lcld{}        & \heloc{}           & \house{}                    \\ \midrule
 \ktdt{GreAT} & 51.8 & 	28.4 &	22.3 &	26.3 & 	14.4 \\
\midrule
DGM+\lsymb &0.22	&0.13	&0.14&	0.10&	0.13\\
\bottomrule
\end{tabular}
\vspace{-0.15cm}
\label{tab:great_sampling_time}
\end{wraptable}
\ktdt{Secondly, we generate 1,000 samples with GreAT and we report the average runtime in Table~\ref{tab:great_sampling_time} together with the average sampling time obtained with the DGMs+\lsymb.}
\ktdt{As we can see from the Table, GreAT takes two orders of magnitude longer to perform the sampling than the average DGM+\lsymb{} model. This difference is particularly striking given that GreAT was the only model requiring an A100 to run.}

\ktdt{This analysis shows that not only LLM-based are still prone to the violation of the constraints, but also that they require more powerful hardware and much longer time to sample.}

\begin{table}[ht] %
\centering
\caption{Efficacy scores calculated on real data. For classification datasets \phishing, \cervical{} and \lcld, we used F1-score, weighted F1-score and Area Under the Curve to measure the performance, while for the regression dataset \house, we used the Mean Absolute Error metrics and the Root Mean Square Error.}
\label{tab:real-utility}
\begin{tabular}{@{}lrrrrrr@{}}
\toprule
       & F1     & {\sl w}F1 & AUC    &   MAE & RMSE   \\ \midrule
\phishing{}    & 0.884\msmall{\pm0.007}  & 0.875\msmall{\pm0.014} & 0.903\msmall{\pm0.009} & - & -\\
\cervical{} & 0.529\msmall{\pm0.011} & 0.547\msmall{\pm0.011} & 0.948\msmall{\pm0.010}
\\
\lcld{}   &     0.171\msmall{\pm0.030}  & 0.316\msmall{\pm0.013} & 0.645\msmall{\pm0.007} & - & -\\
\heloc{}  & 0.772\msmall{\pm0.003} & 0.662\msmall{\pm0.011} & 0.707\msmall{\pm0.008}  & - & - \\
\house{} & - & - & - & 547655.7\msmall{\pm38.2} & 688133.1\msmall{\pm30.8} \\
\bottomrule
\end{tabular}
\end{table}

\end{document}